\documentclass[twoside]{article}
\usepackage[accepted]{aistats2025}

\usepackage{tikz}
\usetikzlibrary{positioning,angles,quotes}
\usepackage{subcaption}
\usepackage{graphicx}
\usepackage{float}
\usepackage{amsthm}
\usepackage{environ}
\usepackage{float}
\usepackage{moresize}
\usepackage{svg}
\usepackage{amsmath}
\usepackage{comment}
\usepackage{amssymb}
\usepackage{natbib}
\usepackage{adjustbox}
\usepackage{booktabs}
\usepackage{bm}
\usepackage{collect}
\newcommand{\R}{\mathbb{R}}

\newcommand{\B}{\mathbf{B}}
\newcommand{\PP}{\mathbb{P}}
\newcommand{\X}{\mathcal{X}}
\newcommand{\A}{\mathcal{A}}
\newcommand{\St}{\mathcal{S}}
\newcommand{\Var}{\text{Var}}
\newcommand{\f}{ V_i^{\theta_t}(\rho)}
\newcommand{\gf}{\nabla_\theta V_i^{\theta_t}(\rho)}
\newcommand{\hgf}{\hat \nabla_\theta V_i^{\theta_t}(\rho)}
\newcommand{\gb}{\nabla_\theta  B_\eta^{\theta_t}(\rho)}
\newcommand{\hgb}{\hat \nabla_\theta  B_\eta^{\theta_t}(\rho)}
\usepackage{algorithm,algpseudocode}
\usepackage{environ}
\usepackage{multicol}
\allowdisplaybreaks[4]
\usepackage{hyperref}
\newtheorem{theorem}{Theorem}[section]
\newtheorem{corollary}[theorem]{Corollary}
\newtheorem{lemma}[theorem]{Lemma}

\newtheorem{fact}[theorem]{Fact}

\newtheorem{assumption}[theorem]{Assumption}
\newtheorem{proposition}[theorem]{Proposition}

\newtheorem{definition}[theorem]{Definition}
\newtheorem{exmp}[theorem]{Example}
\newtheorem{remark}[theorem]{Remark}
\newtheorem{prob}[theorem]{Problem}
\usepackage{aistats2025}
%
%




\begin{document}

%

%

\twocolumn[

\aistatstitle{A safe exploration approach to constrained Markov decision processes}

\aistatsauthor{ Tingting Ni \And Maryam Kamgarpour }

\aistatsaddress{Sycamore, EPFL Lausanne \And  Sycamore, EPFL Lausanne } ]

\begin{abstract}
We consider discounted infinite-horizon constrained Markov decision processes (CMDPs), where the goal is to find an optimal policy that maximizes the expected cumulative reward while satisfying expected cumulative constraints. Motivated by the application of CMDPs in online learning for safety-critical systems, we focus on developing a model-free and \emph{simulator-free} algorithm that ensures \emph{constraint satisfaction during learning}. To this end, we employ the LB-SGD algorithm proposed in \cite{usmanova2022log}, which utilizes an interior-point approach based on the log-barrier function of the CMDP. Under the commonly assumed conditions of relaxed Fisher non-degeneracy and bounded transfer error in policy parameterization, we establish the theoretical properties of the LB-SGD algorithm. In particular, unlike existing CMDP approaches that ensure policy feasibility only upon convergence, the LB-SGD algorithm guarantees feasibility throughout the learning process and converges to the $\varepsilon$-optimal policy with a sample complexity of $\Tilde{\mathcal{O}}(\varepsilon^{-6})$. Compared to the state-of-the-art policy gradient-based algorithm, C-NPG-PDA \cite{bai2022achieving2}, the LB-SGD algorithm requires an additional $\mathcal{O}(\varepsilon^{-2})$ samples to ensure policy feasibility during learning with the same Fisher non-degenerate parameterization.
\end{abstract}
\section{Introduction}
Reinforcement learning (RL) involves studying sequential decision-making problems, where an agent aims to maximize an expected cumulative reward by interacting with an unknown environment \cite{sutton2018reinforcement}. While RL has achieved impressive success in domains like video games and board games \cite{berner2019dota, silver2016mastering, silver2017mastering}, safety concerns arise when applying RL to real-world problems, such as autonomous driving \cite{fazel2018global}, robotics~\cite{koppejan2011neuroevolutionary, ono2015chance}, and cyber-security \cite{zhang2019non}. Incorporating safety into RL algorithms can be done in various ways~\cite{garcia2015comprehensive}. From a problem formulation perspective, one natural approach to incorporate safety constraints is through the framework of discounted infinite horizon constrained Markov decision processes (CMDPs).\looseness-1

In a CMDP, the agent aims to maximize an expected cumulative reward subject to expected cumulative constraints. The CMDP formulation has a long history \cite{altman1999constrained, puterman2014markov} and has been applied in several realistic scenarios \cite{kalweit2020deep, mirchevska2018high, zang2020cmdp}. Due to its applicability, there has been a growing body of literature in recent years that develops learning-based algorithms for CMDPs, employing both model-free~\cite{bai2022achieving2,bai2022achieving, ding2020natural, ding2022convergence, liu2022policy, xu2021crpo, zeng2022finite} and model-based approaches \cite{agarwal2022regret, hasanzadezonuzy2021model, jayant2022model}.\looseness-1

Existing learning-based approaches to the CMDP problem offer various theoretical guarantees regarding constraint violations. Some of these approaches only ensure constraint satisfaction upon algorithm convergence \cite{ding2020natural, ding2022convergence, liu2022policy, xu2021crpo, zeng2022finite}, bounding the average constraint violation by $\varepsilon$. Others enhance these guarantees by aiming for averaged zero constraint violation~\cite{bai2022achieving2, kalagarla2023safe, wei2022provably, wei2022triple}. 

For the practical deployment of RL algorithms in real-world scenarios, particularly those requiring online tuning, it is important to satisfy the constraints during the learning process \cite{abe2010optimizing}. This property is referred to as \emph{safe exploration} \cite{koller2019learning}. Ensuring constraint satisfaction during learning is challenging, as it limits exploration and also requires a more accurate estimation of model parameters or gradients \cite{vaswani2022near}.\looseness-1

To address safe exploration, model-based methods employ either Gaussian processes to learn system dynamics \cite{koller2019learning, cheng2019end, wachi2018safe, berkenkamp2017safe, fisac2018general} or leverage Lyapunov-based analysis~\cite{chow2018lyapunov, chow2019lyapunov} to ensure safe exploration with high probability. However, these approaches lack guarantees on the performance of the learned policy. An alternative model-based approach, known as the constrained upper confidence RL, offers convergence guarantees and ensures safe exploration with high probability. This algorithm has been applied in both infinite horizon average reward scenarios with known transition dynamics \cite{zheng2020constrained} and finite horizon reward scenarios with unknown transition dynamics \cite{liu2021learning, bura2022dope}. However, in complex environments, accurately modeling system dynamics can be computationally challenging \cite{sutton2018reinforcement}.\looseness-1

Policy gradient (PG) algorithms demonstrate their advantage in handling complex environments in a model-free manner \cite{agarwal2021theory}. They have shown empirical success in solving CMDPs \cite{liang2018accelerated, achiam2017constrained,tessler2018reward, liu2020ipo}. Initial guarantees for safe exploration in CMDPs were provided by \cite{achiam2017constrained}, relying on exact policy gradient information. However, with unknown transition dynamics, we can only estimate the gradient. To address this, several works \cite{ding2020natural,ding2022convergence,xu2021crpo,ding2024last} assume access to a simulator~\cite{koenig1993complexity} (also known as a generative model \cite{azar2012sample}), which provides information about the reward and constraint values associated with any state and action of the CMDP. However, practical RL requires learning in real-world scenarios, where access to a simulator may not be feasible. Theoretically, the analysis becomes significantly more challenging without a simulator \cite{jin2018q}.\looseness-1

Among the above works addressing CMDPs with simulator access, \cite{ding2020natural, ding2022convergence, xu2021crpo} provided theoretical guarantees on bounding the average constraint violation over the iterates by $\varepsilon$. However, ensuring an averaged $\varepsilon$ constraint violation is problematic in safety-critical CMDPs, as constraint values may overshoot in each iteration \cite{stooke2020responsive, calvo2023state}, thereby failing to provide safety guarantees for each policy iterate. To partially mitigate this issue, \cite{ding2024last} proposed an approach that ensures constraint satisfaction and optimality only for the last iterate policy. Without simulator access, \cite{zeng2022finite} provided theoretical guarantees on averaged $\varepsilon$ constraint violation, whereas \cite{bai2022achieving2} strengthened this guarantee to an averaged zero constraint violation, albeit at the cost of requiring an additional {\small$\mathcal{O}(\varepsilon^{-2})$} samples compared to the state-of-the-art policy gradient-based algorithm \cite{ding2020natural}. However, like all the aforementioned works, these approaches do not ensure constraint satisfaction during learning. A summary of these works, their constraint satisfaction and convergence guarantees are provided in Table \ref{table:comparison}.\looseness-1

\begin{table*}[t] 
\centering\caption[]{Sample complexity for achieving $\varepsilon$-optimal objectives with guarantees on constraint violations in stochastic policy gradient-based algorithms, considering various parameterizations for discounted infinite horizon CMDPs. For constraint violation, Averaged \(\mathcal{O}(\varepsilon)\) refers to the average constraint violation being bounded by \(\varepsilon\). Averaged zero strengthens the above bound to 0. RPG-PD \cite{ding2024last} ensures the last iterate's safety. Here, we refer to ``w.h.p" as with high probability.}
\label{table:comparison}
 \resizebox{\textwidth}{!}{\begin{tabular}{llllll}
    \toprule
    \multicolumn{6}{c}{Stochastic policy gradient-based algorithms}    \\
    \cmidrule(r){1-6}
    Parameterization    & Algorithm     & Sample complexity & Constraint violation & Optimality& Generative model\\
    \midrule 
     Softmax & NPG-PD \cite{ding2020natural} & ${\mathcal{O}}(\varepsilon^{-2})$ & Averaged $\mathcal{O}(\varepsilon)$ & Average&$\checkmark$  \\ 
     Softmax & PD-NAC \cite{zeng2022finite} & ${\mathcal{O}}(\varepsilon^{-6})$ & Averaged $\mathcal{O}(\varepsilon)$ & Average&$\times$  \\ 
    Neural softmax(ReLu)& CRPO \cite{xu2021crpo} & ${\mathcal{O}}(\varepsilon^{-6})$ & Averaged ${\mathcal{O}}(\varepsilon)$ & Average&$\checkmark$\\ 
    Log-linear & RPG-PD \cite{ding2024last} & ${\Tilde{\mathcal{O}}}(\varepsilon^{-14})$ & 0 at last iterate & Last iterate&$\checkmark$\\ 
    Fisher non-degenerate & PD-ANPG \cite{mondal2024sample} & $\Tilde{\mathcal{O}}(\varepsilon^{-3})$ & Averaged ${\mathcal{O}}(\varepsilon)$& Average &$\checkmark$\\ 
    Fisher non-degenerate & C-NPG-PDA \cite{bai2022achieving2} & $\Tilde{\mathcal{O}}(\varepsilon^{-4})$ & Averaged zero& Average &$\times$\\ 
    Relaxed Fisher non-degenerate & LB-SGD \cite{usmanova2022log} & $\bm{\Tilde{\mathcal{O}}(\varepsilon^{-6})}$ & \textbf{Safe exploration w.h.p } & Last iterate&$\times$\\
    \bottomrule
  \end{tabular}}
\end{table*}
In the field of constrained optimization, safe exploration has been extensively studied using a Bayesian approach based on Gaussian processes \cite{berkenkamp2017safe,sui2015safe, berkenkamp2021bayesian, amani2019linear}. However, Bayesian optimization algorithms suffer from the curse of dimensionality~\cite{frazier2018tutorial, moriconi2020high, eriksson2021high}, and this challenges their applicability to model-free RL settings, where large state and action spaces are often encountered. To address this limitation, \cite{usmanova2020safe, usmanova2022log} proposed a first-order interior point approach, inspired by \cite{hinder2019polynomial}, that tackles the issue by incorporating constraints into the objective using a log barrier function. While their vanilla non-convex analysis could directly apply to policy gradient-based algorithms, the convergence result would be limited to \(\varepsilon\)-approximate stationary points rather than optimal points using {\small\(\mathcal{O}(\varepsilon^{-7})\)} samples in total. While both \cite{usmanova2022log} and \cite{liu2020ipo} demonstrated the success of the log barrier approach on benchmark continuous control problems, to our knowledge, safe exploration and tight convergence guarantees for the log barrier policy gradient method in a CMDP had not been proven.\looseness-1

Our paper is dedicated to providing provable non-asymptotic convergence guarantees for solving CMDPs while ensuring safe exploration under a simulator-free setting. Our contributions are as follows.\looseness-1
\subsection{Contributions}
\begin{itemize}
\item We employ the LB-SGD algorithm proposed in~\cite{usmanova2022log} and develop an interior-point stochastic policy gradient approach for the CMDP problem. Further, we prove that the last iterate policy is {\small $\mathcal{O}(\sqrt{\varepsilon_{bias}}) + \Tilde{\mathcal{O}}(\varepsilon)$} optimal,\footnote{The notation $\Tilde{\mathcal{O}}(\cdot)$ hides the $\log(\frac{1}{\varepsilon})$ term.} while ensuring safe exploration with high probability, utilizing {\small$\Tilde{\mathcal{O}}(\varepsilon^{-6})$} samples. The term {\small$\varepsilon_{bias}$} represents the function approximation error resulting from the policy class (see Theorem \ref{main}).\looseness-1
\item On the technical aspect, we construct an accurate gradient estimator for the log barrier and establish its local smoothness properties. Furthermore, by incorporating common assumptions on the policy class, including relaxed Fisher non-degeneracy and bounded transfer error \cite{liu2020improved,yuan2022general,heniao2023,ding2022global}, we establish the gradient dominance property for the \emph{log barrier function} (see Lemma \ref{gdm}). This in turn enables us to derive convergence guarantees for the last iterate.\looseness-1
\item We contribute to the understanding of the (relaxed) Fisher non-degeneracy and the bounded transfer error assumptions by narrowing down the classes of policies that satisfy these assumptions (see Facts \ref{fact1}, \ref{fact2}). \looseness-1
\end{itemize}
\subsection{Notations}
For a set {\small $\X$}, {\small $\Delta(\X)$} denotes the probability simplex over {\small $\X$}, while {\small $|\X|$} and {\small $\textbf{Cl}\{\X\}$} denote its cardinality and closure, respectively. For any integer $m$, we set {\small$[m]:=\{1,\dots,m\}$}. Here, $\|\cdot\|$ denotes the Euclidean $\ell_2$-norm for vectors and the operator norm for matrices, respectively. The notation {\small$A \succeq B$ }indicates that the matrix {\small$A-B$} is positive semi-definite. We denote the image space and kernel space of the matrix \(A\) as {\small\(\mathbf{Im}(A)\)} and {\small\(\mathbf{Ker}(A)\)}, respectively. The function {\small$f(x)$} is said to be {\small$M$}-smooth on {\small$\X$} if the inequality {\small$f(x)\le f(y)+\langle\nabla f(y),x-y\rangle+\frac{M}{2}\left\|x-y\right\|^2$} holds {\small$ \forall x,y\in \X$}, and {\small$L$}-Lipschitz continuous on {\small$\X$} if {\small$\left|f(x)-f(y)\right|\le L\left\|x-y\right\|$} holds {\small $\forall x,y\in \X$}. \looseness-1
\section{Problem formulation}
We consider an infinite-horizon discounted constrained Markov decision process (CMDP) defined by the tuple {\small$\{\mathcal{S},\mathcal{A}, P,\allowbreak \rho,\left\{r_i\right\}_{i=0}^{m},\gamma \}$}. Here, {\small$\mathcal{S}$} and {\small$\mathcal{A}$} are the state and action spaces, respectively, {\small$\rho\in\Delta({\mathcal{S}})$} denotes the initial state distribution, and {\small${P}(s^\prime \vert s,a)$} is the probability of transitioning from state $s$ to state $s^\prime$ under action $a$. Additionally, {\small$r_0:\mathcal{S}\times \mathcal{A} \rightarrow [0,1]$} is the reward function, and {\small$r_{i}:\mathcal{S}\times \mathcal{A} \rightarrow [-1,1]$} is the utility function for {\small$i\in[m]$}. We denote the discount factor as {\small $\gamma\in(0,1)$}.\looseness-1

We consider a stationary stochastic policy {\small$\pi: \mathcal{S} \rightarrow \Delta(\mathcal{A})$}, which maps states to probability distributions over actions, and we denote $\Pi$ as the set containing all stationary stochastic policies. We introduce the performance measure {\small\( V_{i}^\pi(\rho):= \mathbb{E}_{\tau \sim \pi} \left[ \sum_{t=0}^{\infty} \gamma^t r_i(s_t, a_t) \right] \)}, which is the infinite horizon discounted total return concerning the function \( r_i \). Here, \( \tau \) denotes a trajectory {\small\( \{ (s_0, a_0, s_1, a_1, \hdots) : s_h \in \mathcal{S}, a_h \in \mathcal{A}, h \in \mathbb{N} \} \)} induced by the initial distribution \( s_0 \sim \rho \), the policy \( a_t \sim \pi(\cdot | s_t) \), and the transition dynamics {\small\( s_{t+1} \sim P(\cdot | s_t, a_t) \)}.\looseness-1

In a CMDP, the objective is to find a policy that maximizes the objective function {\small$V_0^\pi(\rho)$} subject to the constraints {\small$V_i^\pi(\rho)$} for {\small$i\in[m]$}:\looseness-1
\begin{align}
\max_{\pi} V_{0}^{\pi}(\rho) \quad \text{s.t.} \quad V_{i}^\pi(\rho)\ge 0, \quad i \in[m]. \tag{RL-O}\label{ori1}
\end{align}
The choice of optimizing only over stationary policies is justified: it has been shown that the set of all optimal policies for a CMDP includes stationary policies~\cite{altman1999constrained}. We assume the existence of a stationary optimal policy $\pi^*$ for problem \eqref{ori1}, ensured by Slater's condition for the finite action space \cite{altman1999constrained} and by suitable measurability and compactness assumptions for the continuous action space, see \cite[Theorem 3.2]{hernandez2000constrained}. \looseness-1

For large or continuous CMDPs, solving \eqref{ori1} is intractable due to the curse of dimensionality~\cite{sutton1999policy}. The policy gradient method allows us to search for the optimal policy $\pi^*$ within a parameterized policy set {\small$\{\pi_\theta,\,\theta\in\R^d\}$}. For example, we can apply neural softmax parametrization for discrete action space, or Gaussian parameterization for continuous action space. For simplicity, we denote {\small$V_i^{\pi_\theta}(\rho)$} as {\small$V_i^{\theta}(\rho)$}, as it is a function of $\theta$. Due to the policy parameterization, we can reformulate problem~\eqref{ori1} into a constrained optimization problem over the finite-dimensional parameter space, as follows:\looseness-1
\begin{prob}
Consider {\small$\theta\in\R^d$}, and we are solving the following optimization problem:
\begin{align}
\max_{\theta} V_{0}^{\theta}(\rho) \quad \text{s.t.} \quad V_{i}^\theta(\rho)\ge 0, \quad i \in[m]. \tag{RL-P}\label{ori}
\end{align}
Here, the feasible set is denoted as {\small$\Theta:=\{\theta\mid V_i^\theta(\rho)\ge 0,\,i\in[m]\}$}, and the corresponding feasible parameterized policy set is {\small$\Pi_\Theta:=\{\pi_\theta\mid \theta\in \Theta\}$}. \looseness-1
\end{prob}
Due to the parameterization, our parameterized policy set may not cover the entire stochastic policy set. Our goal is to find a policy $\pi_\theta$ that closely approximates the optimal policy $\pi^*$ while ensuring \emph{safe exploration}, as defined below. \looseness-1
\begin{definition}\label{safedef}
Given an algorithm providing a sequence {\small$\{\theta_t\}_{t=0}^T$}, we say it ensures safe exploration if $\forall t\in \{0,\dots, T\}$, we have {\small$\min_{i\in[m]}V_{i}^{\theta_t}(\rho) \ge 0$}.
\end{definition}
\section{Log barrier policy gradient approach}
Our approach to safe exploration is based on maximizing the unconstrained log barrier surrogate of Problem~\eqref{ori}, defined as
\begin{align}
B_\eta^\theta(\rho) := V_{0}^\theta(\rho) + \eta\sum_{i=1}^{m}\log V_{i}^\theta(\rho)\nonumber
\end{align}
where {\small$\eta > 0$}. The log barrier algorithm \cite{liu2020ipo,usmanova2022log} can be summarized as
\begin{align}
\label{eq:updates}
   \theta_{t+1}=\theta_t+\gamma_t \hat{\nabla}_\theta B_\eta^\theta(\rho),
\end{align}
where {\small$\gamma_t$} is the stepsize, and {\small$\hat{\nabla}_\theta B_\eta^\theta(\rho)$} is an estimate of the true gradient {\small$\nabla_\theta B_\eta^\theta(\rho)$}, computed as
\begin{align}
\nabla_\theta B_\eta^\theta(\rho) = \nabla V_{0}^\theta(\rho) + \eta\sum_{i=1}^{m}\frac{\nabla V_{i}^\theta(\rho)}{V_{i}^\theta(\rho)}.\label{gradientlog}
\end{align}
The intuition is that the iterates approach the stationary point of the log barrier function from the interior, thereby ensuring safe exploration. Furthermore, we establish that the stationary points of the log barrier function correspond to approximately optimal points of the CMDP objective, ensuring optimality. \looseness-1

The iteration above is arguably simple; In contrast to the approach in \cite{bai2022achieving2,ding2022convergence,liu2022policy,xu2021crpo,zeng2022finite}, our method eliminates the need for projection, adjustment of the learning rate for a dual variable, and the requirement of a simulator capable of simulating the MDP from any state {\small$s\in\mathcal{S}$} and action {\small$a\in\mathcal{A}$}. However, the challenge lies in fine-tuning the stepsize {\small$\gamma_t$} to ensure safe exploration while maintaining convergence. Next, we address these aspects and formalize the algorithm.\looseness-1
\subsection{Estimating the log barrier gradient}
\label{samplech}
Given that we do not have access to the generative model, we need to estimate the log barrier gradient to implement \eqref{eq:updates}, which by Equation \eqref{gradientlog} it implies that we need to estimate both {\small$V_i^\theta(\rho)$} and its gradient {\small$\nabla_\theta V_i^\theta(\rho)$}. Gradient estimators using Monte Carlo approaches have been addressed in past work~\cite{sutton1999policy,williams1992simple,baxter2001infinite}, and we will apply them to estimate the log barrier gradient as follows.\looseness-1

We sample $n$ truncated trajectories with a fixed horizon $H$, where each trajectory is denoted as {\small$\tau_j := \left(s_t^j, a_t^j, \left\{r_{i}(s_t^j, a_t^j)\right\}_{i=0}^{m}\right)_{t=0}^{H-1}$}, including the corresponding reward and utility functions. The estimator of the value function, denoted as {\small \(\hat{V}_i^\theta(\rho)\)}, is computed as the average value over the sampled trajectories: 
$$\hat{V}_i^\theta(\rho):= \frac{1}{n}\sum_{j=1}^{n}\sum_{t=0}^{H-1}\gamma^t r_i(s_t^j, a_t^j).$$
To estimate the gradient {\small$\nabla_\theta V_i^\theta(\rho)$}, we consider the so-called GPOMDP gradient estimator, computed as:
$$\hat \nabla_\theta V_i^\theta(\rho) := \frac{1}{n}\sum_{j=1}^{n}\sum_{t=0}^{H-1}\sum_{t'=0}^{t}\gamma^t r_i(s_t^j,a_t^j) \nabla_\theta\log\pi_\theta(a_{t'}^j|s_{t'}^j).$$
Note that our work extends to other gradient estimators such as REINFORCE for which error bounds similar to those in Proposition \ref{prosmo} below can be derived. \looseness-1

Bounds on the error in the above estimations can be derived given the following assumption.\looseness-1
\begin{assumption}
\label{smoli} 
The gradient and Hessian of the function $\log \pi_\theta(a|s)$ are bounded, i.e., there exist constants {\small$M_g,M_h>0$} such that {\small$\|\nabla_\theta\log \pi_\theta(a|s)\|\le M_g$} and {\small$\|\nabla_\theta^2\log \pi_\theta(a|s)\|\le M_h$} for all {\small$\theta \in \Theta$}.
\end{assumption}
\begin{remark}
Assumption \ref{smoli} has been widely utilized in the analysis of policy gradient methods \cite{liu2020improved,yuan2022general,ding2022global,Xu2020Sample}. It is satisfied for softmax policy, the log-linear policy with bounded feature vectors \cite[Section 6.1.1]{agarwal2021theory}, as well as Cauchy policy~\cite[Appendix B]{heniao2023} and Gaussian policy with action clipping \cite[Appendix D]{Xu2020Sample}.\looseness-1
\end{remark}
As will be shown in Proposition \ref{prosmo} below, Assumption~\ref{smoli} ensures smoothness and Lipschitz continuity properties for the value functions {\small$V_i^\theta(\rho)$}. It also ensures that the estimators of {\small$V_i^\theta(\rho)$} and its gradient {\small$\nabla_\theta V_i^\theta(\rho)$} have sub-Gaussian tail bounds. These tail bounds provide probabilistic guarantees that the estimators do not deviate significantly from their expected values. Such a result is needed for ensuring  safe exploration and convergence  of the algorithm.\looseness-1
\begin{proposition}\label{prosmo}
Let Assumption \ref{smoli} hold. The following properties hold {\small$\forall i\in\{0,\dots,m\}$} and {\small$\forall \theta\in\Theta$}. \looseness-1

1. {\small$V_i^\theta(\rho)$} are $L$-Lipschitz continuous and $M$-smooth, where 
{\small $L:=\frac{M_g}{(1-\gamma)^2}$} and {\small$M:=\frac{M_g^2+M_h}{(1-\gamma)^2}$}.

2. Let {\small$b^0(H):=\frac{\gamma^{H}}{1-\gamma}$} and {\small$b^1(H):=\frac{M_g\gamma^H}{1-\gamma}\sqrt{\frac{1}{1-\gamma}+H}$}, we have
\begin{align*}
    &\left|V_i^\theta(\rho)-\mathbb{E}\left[\hat V_i^\theta(\rho)\right]\right|\le b^0(H), \\
    &\left\|\nabla_\theta V_i^\theta(\rho)-\mathbb{E}\left[\hat \nabla_\theta V_i^\theta(\rho)\right]\right\|\le b^1(H).
\end{align*}
3. Let {\small $\sigma^0(n) := \frac{\sqrt{2}}{\sqrt{n}(1-\gamma)}$} and {\small $\sigma^1(n) := \frac{2\sqrt{2}M_g}{\sqrt{n}(1-\gamma)^{\frac{3}{2}}}$}. For any {\small $\delta \in (0,1)$}, we have
\vspace{-0.1cm}
\small
\begin{align*}
    \PP\left(\left|\hat{V}^{\theta}_{i}(\rho)-\mathbb{E}\left[\hat V_i^\theta(\rho)\right]\right|\leq \sigma^0(n)\sqrt{\ln{\frac{2}{\delta}}}\right) \geq 1-\delta.
\end{align*}
\normalsize
\vspace{-0.4cm}

Furthermore, if $n \geq 8\ln{\frac{e^{\frac{1}{4}}}{\delta}}$, we have 
\vspace{-0.1cm}
\small
\begin{align*}
    \PP\left(\left\|\hat\nabla_\theta{V}^{\theta}_i(\rho)-\mathbb{E}\left[\hat \nabla_\theta V_i^\theta(\rho)\right]\right\|\leq  \sigma^1(n)\sqrt{\ln{\frac{e^{\frac{1}{4}}}{\delta}}}\right) \geq 1-\delta.
\end{align*}
\normalsize
\vspace{-0.4cm}
\end{proposition}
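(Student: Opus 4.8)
The plan is to handle the three parts sequentially, since all of them rest on the trajectory/score-function representation of $V_i^\theta(\rho)$ and its gradient. For Part 1, I would start from the policy gradient theorem, writing $\nabla_\theta V_i^\theta(\rho)=\frac{1}{1-\gamma}\E_{(s,a)\sim d_\rho^\theta}\!\left[\nabla_\theta\log\pi_\theta(a|s)\,Q_i^\theta(s,a)\right]$, where $d_\rho^\theta$ is the discounted visitation distribution and $Q_i^\theta$ the state-action value satisfying $|Q_i^\theta|\le\frac{1}{1-\gamma}$ because $|r_i|\le1$. Combined with $\|\nabla_\theta\log\pi_\theta(a|s)\|\le M_g$ from Assumption~\ref{smoli}, this immediately yields $\|\nabla_\theta V_i^\theta(\rho)\|\le\frac{M_g}{(1-\gamma)^2}=L$, hence $L$-Lipschitzness. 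For $M$-smoothness I would bound the operator norm of $\nabla_\theta^2 V_i^\theta(\rho)$: differentiating the trajectory-weighted score identity a second time produces an outer-product term in $\nabla_\theta\log\pi_\theta\,\nabla_\theta\log\pi_\theta^\top$ (controlled by $M_g^2$) and a curvature term in $\nabla_\theta^2\log\pi_\theta$ (controlled by $M_h$); bounding each against the geometric discount weights and using $|r_i|\le1$ gives the stated $M=\frac{M_g^2+M_h}{(1-\gamma)^2}$.

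For Part 2, the value-function bias is simply the discounted reward tail beyond horizon $H$, namely $V_i^\theta(\rho)-\E[\hat V_i^\theta(\rho)]=\E_\tau\!\left[\sum_{t=H}^\infty\gamma^t r_i(s_t,a_t)\right]$, which $|r_i|\le1$ bounds by $\sum_{t=H}^\infty\gamma^t=\frac{\gamma^H}{1-\gamma}=b^0(H)$. The gradient bias is the delicate case. Writing the exact gradient as $\E_\tau\!\left[\sum_{t=0}^\infty\gamma^t r_i(s_t,a_t)\Psi_t\right]$ with $\Psi_t:=\sum_{t'=0}^t\nabla_\theta\log\pi_\theta(a_{t'}|s_{t'})$, the GPOMDP estimator's expectation truncates the $t$-sum at $H-1$, so the bias equals $\E_\tau\!\left[\sum_{t=H}^\infty\gamma^t r_i(s_t,a_t)\Psi_t\right]$. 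A naive bound $\|\Psi_t\|\le(t+1)M_g$ would introduce a spurious linear factor; instead I would exploit that each score is conditionally centered, $\E[\nabla_\theta\log\pi_\theta(a_{t'}|s_{t'})\mid s_{t'}]=0$, so the cross terms vanish and $\E\|\Psi_t\|^2=\sum_{t'=0}^t\E\|\nabla_\theta\log\pi_\theta(a_{t'}|s_{t'})\|^2\le(t+1)M_g^2$. Jensen's inequality then gives $\|\E[r_i\,\Psi_t]\|\le M_g\sqrt{t+1}$, and summing the geometric-polynomial tail $\sum_{t\ge H}\gamma^t\sqrt{t+1}$ (using $\sqrt{a+b}\le\sqrt a+\sqrt b$ and Cauchy–Schwarz) produces $b^1(H)=\frac{M_g\gamma^H}{1-\gamma}\sqrt{\frac{1}{1-\gamma}+H}$.

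For Part 3, $\hat V_i^\theta(\rho)$ is an empirical mean of $n$ i.i.d.\ truncated returns, each lying in $[-\frac{1}{1-\gamma},\frac{1}{1-\gamma}]$, so Hoeffding's inequality with range $\frac{2}{1-\gamma}$ directly gives the sub-Gaussian tail with $\sigma^0(n)=\frac{\sqrt2}{\sqrt n(1-\gamma)}$. For the gradient estimator, a bounded-vector Hoeffding is too weak because the single-trajectory gradient norm scales with $H$. Instead I would reindex by the action time $t'$, writing each summand as $\nabla_\theta\log\pi_\theta(a_{t'}|s_{t'})\sum_{t=t'}^{H-1}\gamma^t r_i(s_t,a_t)$, whose reward weight is bounded by $\frac{\gamma^{t'}}{1-\gamma}$. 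The conditional-mean-zero property makes these a vector-valued martingale-difference sequence with increments bounded by $M_g\frac{\gamma^{t'}}{1-\gamma}$, so the per-trajectory gradient is norm-subGaussian with variance proxy $\sum_{t'}M_g^2\frac{\gamma^{2t'}}{(1-\gamma)^2}\le\frac{M_g^2}{(1-\gamma)^3}$, independent of $H$. Applying a dimension-free concentration inequality for sums of norm-subGaussian vectors (which is what produces the $n\ge8\ln\frac{e^{1/4}}{\delta}$ requirement and the $\sqrt{\ln\frac{e^{1/4}}{\delta}}$ factor) then delivers the claim with $\sigma^1(n)=\frac{2\sqrt2\,M_g}{\sqrt n(1-\gamma)^{3/2}}$.

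I expect the two gradient statements to be the crux. The recurring obstacle is avoiding the spurious $H$- (or $t$-) dependence that a direct triangle-inequality bound on the cumulative score $\Psi_t$ would introduce; in both the bias and the concentration bound the resolution is the same structural fact, namely that scores are conditionally centered, which turns first-moment $\mathcal{O}(t)$ bounds into second-moment $\mathcal{O}(\sqrt t)$ bounds and supplies exactly the martingale structure needed for a dimension-free vector concentration. Pinning down the constants and the precise $(1-\gamma)$ powers, and invoking a vector concentration that is genuinely independent of the parameter dimension $d$, is where most of the care will be required.
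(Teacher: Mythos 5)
Your overall architecture matches the paper's: the paper simply cites prior work for Part 1 and for the gradient-bias bound $b^1(H)$ (Yuan et al., Bai et al.), proves the value bias by the same tail computation you give, and handles Part 3 via Hoeffding for values plus a dimension-free vector concentration for gradients. Your self-contained derivation of $b^1(H)$ is correct and reproduces exactly the constant the paper imports: the scores inside $\Psi_t$ are genuinely a martingale-difference sequence (each $\nabla_\theta\log\pi_\theta(a_{t'}|s_{t'})$ is adapted and conditionally centered given the past), so $\mathbb{E}\|\Psi_t\|^2\le (t+1)M_g^2$, and Cauchy--Schwarz on $\sum_{t\ge H}\gamma^t\sqrt{t+1}$ gives $\frac{\gamma^H}{1-\gamma}\sqrt{\frac{1}{1-\gamma}+H}$. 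One caveat on Part 1: ``bounding each term against the geometric discount weights'' is too naive for the outer-product term, since $\|\Psi_t\|^2\le (t+1)^2M_g^2$ pointwise would yield a constant of order $M_g^2/(1-\gamma)^3$; obtaining the stated $M=\frac{M_g^2+M_h}{(1-\gamma)^2}$ requires the same conditional-centering trick (in expectation) that you use elsewhere.

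The genuine gap is in Part 3's gradient concentration. You claim that the per-decision terms $g_{t'}=\nabla_\theta\log\pi_\theta(a_{t'}|s_{t'})\sum_{t=t'}^{H-1}\gamma^t r_i(s_t,a_t)$ form a vector-valued martingale-difference sequence. They do not: each $g_{t'}$ depends on the entire future of the trajectory through the reward factor, so it is not adapted to the forward filtration, and its conditional mean given the past, $\mathbb{E}[g_{t'}\mid s_{t'}]$, equals $\gamma^{t'}$ times a truncated policy-gradient term that is nonzero in general (indeed, these conditional means sum in expectation to the full gradient). Conditioning backwards fails too, because given $s_{t'+1}$ the law of $(s_{t'},a_{t'})$ is tilted by the transition kernel, so the score is no longer centered. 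What conditional centering legitimately gives is only a \emph{second-moment} bound, $\mathbb{E}\|\hat\nabla_\theta V_i^\theta(\rho)_j\|^2\le \frac{M_g^2}{(1-\gamma)^3}$ (MDS for the scores within each $\Psi_t$, then Minkowski over $t$), not a subGaussian tail with that proxy; absent the martingale, the only almost-sure control is $\|\hat\nabla_\theta V_i^\theta(\rho)_j\|\le \frac{M_g}{(1-\gamma)^2}$, and a Hoeffding-type argument from boundedness alone would only give $\sigma^1(n)\propto \frac{M_g}{(1-\gamma)^{2}}$, losing a factor $(1-\gamma)^{-1/2}$. The paper closes exactly this gap differently: it applies the vector Bernstein inequality of Kohler--Lucchi (restated as Lemma D.1) \emph{across the $n$ i.i.d.\ trajectory gradients}, feeding in both the variance bound $\sigma^2=\frac{M_g^2}{(1-\gamma)^3}$ and the almost-sure bound $B=\frac{2M_g}{(1-\gamma)^2}$; the Bernstein validity restriction $\varepsilon\le\sigma^2/B=\frac{M_g}{2(1-\gamma)}$ is precisely the origin of the lower-bound condition on $n$ and of the $e^{1/4}$ in the confidence term, which a pure norm-subGaussian Hoeffding bound would not produce. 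Replacing your ``per-trajectory norm-subGaussianity via a martingale'' step with ``variance plus boundedness of each trajectory gradient, then Bernstein over trajectories'' repairs the argument, and all of your constants then go through.
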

The first property has been proven in \cite[Lemma 2]{bai2022achieving2} and \cite[Lemma 4.4]{yuan2022general}. The bias bound {$b^1(H)$} has been established in \cite[Lemma 4.5]{yuan2022general}. We prove the remaining properties of the concentration bounds for the value estimators using Hoeffding’s inequality and for the gradient estimators using the vector Bernstein inequality (see Appendix \ref{proofsml}). We note that \cite{papini2022smoothing} also proposed tail bounds for gradient estimators; however, we emphasize that our bound is dimension-free due to the use of the vector Bernstein inequality.

Based on the estimators of the  value function {\small$V_i^\theta(\rho)$} and its gradient {\small$\nabla_\theta V_i^\theta(\rho)$}, we construct the estimator for the log barrier gradient required in  iteration \eqref{eq:updates} as follows:\looseness-1
\begin{align}
\label{eq:lbg_estimator}
    \hat\nabla_\theta B_\eta^{\theta}(\rho) := \hat \nabla_\theta V_0^{\theta}(\rho) + \eta\sum_{i=1}^{m}\frac{\hat \nabla_\theta V_i^{\theta}(\rho)}{\hat V_i^{\theta}(\rho)}.
\end{align}
Based on Proposition \ref{prosmo}, we proceed to establish the sub-Gaussian tail bound for the estimator \eqref{eq:lbg_estimator}.\looseness-1
\begin{lemma}\label{gapdelt}
Let Assumption \ref{smoli} hold. Fix a confidence level $\delta\in(0,1]$ and an accuracy $\varepsilon>0$. By  setting {\small$n = \mathcal{O}\left(\varepsilon^{-2}{(1-\gamma)^{-6}(\min_{i\in[m]}V_i^\theta(\rho))^{-4}}\eta^2\ln{\delta}^{-1}\right)$} and {\small$H=\mathcal{O}\left(\ln (\varepsilon\min_{i\in[m]}V_i^\theta(\rho))^{-1}\right)$}, we have
\begin{align*}
    \PP\left(\|\hat\nabla_\theta B_\eta^{\theta}(\rho)-\nabla_\theta B_\eta^{\theta}(\rho)\|\le\varepsilon\right)\ge 1-\delta.
\end{align*}
\end{lemma}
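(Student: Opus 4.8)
The plan is to decompose the estimation error term by term and then turn the deterministic error bounds of Proposition~\ref{prosmo} into the stated high-probability guarantee via a union bound. Write $g_i := \nabla_\theta V_i^\theta(\rho)$, $\hat g_i := \hat\nabla_\theta V_i^\theta(\rho)$, $V_i := V_i^\theta(\rho)$, $\hat V_i := \hat V_i^\theta(\rho)$, and abbreviate $v := \min_{i\in[m]} V_i^\theta(\rho) > 0$ (strictly positive since the log barrier is defined only on the interior $\{V_i^\theta(\rho)>0\}$). Applying the triangle inequality to \eqref{eq:lbg_estimator} and \eqref{gradientlog} gives
\[
\|\hat\nabla_\theta B_\eta^\theta(\rho) - \nabla_\theta B_\eta^\theta(\rho)\| \le \|\hat g_0 - g_0\| + \eta\sum_{i=1}^m \left\|\frac{\hat g_i}{\hat V_i} - \frac{g_i}{V_i}\right\|.
\]
I would then control each ratio error through the identity
\[
\frac{\hat g_i}{\hat V_i} - \frac{g_i}{V_i} = \frac{\hat g_i\,(V_i - \hat V_i)}{\hat V_i\,V_i} + \frac{\hat g_i - g_i}{V_i},
\]
so that its norm is at most $\frac{\|\hat g_i\|\,|V_i - \hat V_i|}{|\hat V_i|\,|V_i|} + \frac{\|\hat g_i - g_i\|}{|V_i|}$. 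This reduces everything to controlling the primitive errors $|V_i - \hat V_i|$ and $\|\hat g_i - g_i\|$, an upper bound on $\|\hat g_i\|$, and a lower bound on the random denominator $\hat V_i$.

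Each primitive error splits into bias and fluctuation, $|V_i - \hat V_i| \le |V_i - \E[\hat V_i]| + |\E[\hat V_i] - \hat V_i|$ and likewise for the gradient. By Proposition~\ref{prosmo}, the bias terms are deterministically at most $b^0(H)$ and $b^1(H)$ (which decay like $\gamma^H$ up to a $\sqrt{H}$ factor), while the fluctuation terms are at most $\sigma^0(n)\sqrt{\ln(2/\delta')}$ and $\sigma^1(n)\sqrt{\ln(e^{1/4}/\delta')}$ on events of probability at least $1-\delta'$. I would define the good event as the intersection of all $2m+1$ concentration events (one objective gradient, $m$ constraint gradients, $m$ constraint values); a union bound with $\delta' = \delta/(2m+1)$ makes it hold with probability at least $1-\delta$, the cost being only the logarithmic factor $\ln(1/\delta')$ absorbed into the sample complexity. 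On this event, taking $n,H$ large enough to force $|V_i - \hat V_i| \le v/2$ yields $\hat V_i \ge v/2 > 0$, and the $L$-Lipschitz property of Proposition~\ref{prosmo} gives $\|g_i\|\le L$, hence $\|\hat g_i\| \le L + \|\hat g_i - g_i\| \le 2L$.

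It remains to pick $n$ and $H$ so the assembled bound is at most $\varepsilon$. The binding term is the value-estimation contribution $\eta\,\|\hat g_i\|\,|V_i - \hat V_i|/(|\hat V_i|\,|V_i|)$: inserting $\|\hat g_i\|\le 2L$ and $|\hat V_i|,|V_i|\ge v/2$ and requiring this to be $\mathcal{O}(\varepsilon/m)$ forces $|V_i - \hat V_i| = \mathcal{O}(\varepsilon v^2/(\eta L))$. Driving the fluctuation $\sigma^0(n)\sqrt{\ln(1/\delta')}$ below this threshold, with $L = M_g/(1-\gamma)^2$ and $\sigma^0(n) = \sqrt{2}/(\sqrt{n}(1-\gamma))$, produces exactly $n = \mathcal{O}\big(\varepsilon^{-2}(1-\gamma)^{-6} v^{-4}\eta^2\ln(1/\delta)\big)$; the constraint gradient-estimation contribution imposes only the milder requirement $n \gtrsim \eta^2 M_g^2 v^{-2}(1-\gamma)^{-3}\varepsilon^{-2}\ln(1/\delta)$, and the objective-gradient term $\|\hat g_0-g_0\|$ is milder still, so both are dominated. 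The bias terms $b^0(H), b^1(H)$ fall below the same thresholds once $H = \mathcal{O}(\ln(\varepsilon v)^{-1})$, the exponential decay in $H$ absorbing the polynomial $\sqrt{H}$ factor.

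The main obstacle is the random denominator: the bound needs $\hat V_i$ bounded away from zero, yet $\hat V_i$ is itself the estimated quantity, so this lower bound must be extracted from the very same concentration event, and the threshold on $|V_i - \hat V_i|$ has to simultaneously keep $\hat V_i$ positive and make the ratio error small. Propagating the two factors of $V_i$ in the denominator, the $(1-\gamma)^{-1}$ factors arising from $L$ and $\sigma^0(n)$, and the $\eta$ scaling of the barrier terms is precisely what yields the exponents $(1-\gamma)^{-6}$, $v^{-4}$, and $\eta^2$ in the stated $n$.
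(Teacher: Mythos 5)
Your proposal is correct and follows essentially the same route as the paper's proof: a term-by-term triangle-inequality decomposition of the barrier-gradient error, an algebraic split of each ratio error into a value-estimation part and a gradient-estimation part, the bias/concentration bounds of Proposition~\ref{prosmo} combined on a common high-probability event, a lower bound on the random denominator $\hat V_i^\theta(\rho)$ extracted from that same event, and tuning of $n,H$ so that the dominant value-estimation term (with its $\eta^2(1-\gamma)^{-6}(\min_i V_i^\theta(\rho))^{-4}$ scaling) falls below $\varepsilon$. The only cosmetic differences are that you attach the denominators in the ratio split the other way around (bounding $\|\hat\nabla_\theta V_i^\theta(\rho)\|\le 2L$ where the paper uses $\|\nabla_\theta V_i^\theta(\rho)\|\le L$) and that you make the union bound over the $2m+1$ concentration events explicit, which the paper leaves implicit in its constants.
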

The proof of Lemma \ref{gapdelt}, along with the explicit forms for $n$ and $H$, can be found in Appendix \ref{proof_gapdelt}. Lemma~\ref{gapdelt} indicates that the sample complexity for obtaining an accurate estimate of the log barrier gradient depends on the distance of the iterates to the boundary, which is of order {\small\(\mathcal{O}((\min_{i\in[m]} V_i^\theta(\rho))^{-4})\)}. This observation will be crucial in deriving the convergence rate and sample complexity of our algorithm. 
\subsection{Tuning the stepsize}
\label{seclog}
The log barrier function is not smooth globally because its gradient becomes unbounded as the iterate approaches the boundary of the feasible domain. However, given a local region, the gradient of the log barrier function has bounded growth since {\small\(V_i^\theta(\rho)\)} is bounded and smooth  in that area. Based on this observation, the LB-SGD algorithm \cite{usmanova2022log} developed a local smoothness constant, {\small\(M_t\)} by assuming access to unbiased estimators of {\small$\{V_i^{\theta_t}(\rho)\}_{i=0}^{m}$}. In our RL setting, we incorporate the biases of the estimators (see Proposition \ref{prosmo}), and accordingly tune the stepsize for staying in the local region as follows. \looseness-1

The local smoothness constant {\small \(\hat M_t\)}, accounting for the biases and variances of the objective values and gradients of {\small$\{V_i^{\theta_t}(\rho)\}_{i=0}^{m}$}, are:
\[
\hat M_{t} := M + \sum_{i=1}^{m} \frac{10M\eta}{\underline{\alpha}_i(t)} + 8\eta \sum_{i=1}^{m} \frac{\left(\overline{\beta}_i(t)\right)^2}{\left(\underline{\alpha}_i(t)\right)^2}.
\]
Here, {\small\(\underline{\alpha}_i(t)\)} represents the lower confidence bound of the constraint function {\small\(V_i^{\theta_t}(\rho)\)}, and {\small \(\overline{\beta}_i(t)\)} denotes the upper confidence bound of {\small $|\langle \gf, {\gb} / {\|\gb\|}\rangle|$}. These confidence bounds are derived from Proposition \ref{prosmo} and detailed in Appendix \ref{step}. To prevent overshooting and ensure that iterations remain within the local region where the estimator is valid, we set the stepsize \(\gamma_t\) as follows:\looseness-1

\vspace{-0.4cm}
\small
\begin{align}
\gamma_t=\min\left\{ \frac{1}{\hat M_{t}},{\min_{i\in[m]}\left\{\frac{\underline{\alpha}_i(t)}{\sqrt{M \underline{\alpha}_i(t)}+2|\overline{\beta}_i(t)|}\right\}}\frac{1}{\|\hat\nabla_x B_\eta^{\theta_t}(\rho)\|}\right \}.\label{gammat}
\end{align}
\normalsize
\vspace{-0.4cm}

Above, the second term inside the minimization corresponds to the region around the current iterate {\small\(\theta_t\)} where the estimator is valid (see Appendix \ref{step}).

With the gradient estimators and stepsize defined, we can now provide the LB-SGD approach in Algorithm~\ref{alg:cap}. In summary, the LB-SGD algorithm implements stochastic gradient ascent on the log barrier function {\small$B_\eta^{\theta_t}(\rho)$} using the sampling scheme provided in Section \ref{samplech}, where $\eta$ controls the optimality of the algorithm's output. If the norm of the estimated gradient is smaller than $\frac{\eta}{2}$, the algorithm terminates. However, if the norm exceeds this threshold, the algorithm proceeds with stochastic gradient ascent, based on the stepsize specified in line 8.
\begin{algorithm}
\caption{LB-SGD}\label{alg:cap}
\begin{algorithmic}[1]
\State \textbf{Input}: Smoothness parameter {\small$M=\frac{M_g^2+M_h}{(1-\gamma)^2}$}, batch size $n$, truncated horizon $H$, number of iterations $T$, confidence bound $\delta\in(0,1)$, $\eta>0$.
\For{$t=0,1,\dots,T-1$}
\State Compute {\small$\hat V_i^{\theta_t}(\rho)$} and {\small$\hat \nabla_\theta V_i^{\theta_t}(\rho)$} using sampling scheme~\ref{samplech}.
\State Compute {\small$\hat\nabla_\theta B_\eta^{\theta_t}(\rho)$} using Eq.\eqref{eq:lbg_estimator}.
\If{{\small$\|\hat\nabla_\theta B_\eta^{\theta_t}(\rho)\| \le \frac{\eta}{2}$}}
\State Break and return $\theta_{\text{break}}$.
\EndIf
\State {\small$\theta_{t+1}=\theta_t+\gamma_t \hat\nabla_\theta B_\eta^{\theta_t}(\rho)$} , where $\gamma_t$ is defined in Eq.\eqref{gammat}.
\EndFor
\State Return $\theta_{\text{out}}$, which can be either $\theta_{\text{break}}$ or $\theta_T$.
\end{algorithmic}
\end{algorithm}

Under appropriate assumptions on the policy parameterization, we prove that Algorithm \ref{alg:cap} can find a policy that is {\small${\mathcal{O}}(\sqrt{\varepsilon_{bias}}) + \Tilde{\mathcal{O}}(\varepsilon)$}-optimal with a sample complexity of {\small$\Tilde{\mathcal{O}}(\varepsilon^{-6})$}. Here, {\small$\varepsilon_{bias}$} represents the transfer error from Assumption \ref{bae}, as detailed in Section \ref{convergence}. We provide an informal statement of the main result of our paper here, and in the next section, we elaborate on the assumptions to formalize and prove this statement. \looseness-1
\begin{theorem}
\label{mainn} 
Under suitable assumptions (see Theorem \ref{main} for the precise statement), Algorithm \ref{alg:cap} has the following properties using {\small$\Tilde{\mathcal{O}}(\varepsilon^{-6})$} samples: 
\begin{enumerate}
    \item Safe exploration (see Definition \ref{safedef}) is satisfied with high probability.
    \item The output policy $\pi_{\theta_{\text{out}}}$ achieves {\small$\mathcal{O}(\sqrt{\varepsilon_{bias}}) + \Tilde{\mathcal{O}}(\varepsilon)$}-optimality with high probability.
\end{enumerate}
\end{theorem}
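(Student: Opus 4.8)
The plan is to combine four ingredients: a high-probability concentration event, a safe-exploration argument driven by the stepsize rule \eqref{gammat}, an ascent-and-stationarity analysis of the log barrier, and a lift from approximate stationarity to global optimality via the gradient dominance of Lemma \ref{gdm}. I would first fix the per-estimate failure probability to $\delta/((m+1)T)$ and define the good event $\mathcal{E}$ on which, for every $t\in\{0,\dots,T-1\}$ and every $i$, the value and gradient estimators lie within the confidence radii of Proposition \ref{prosmo}; a union bound over the $(m+1)T$ estimates gives $\PP(\mathcal{E})\ge 1-\delta$. On $\mathcal{E}$ the confidence bounds satisfy $\underline{\alpha}_i(t)\le V_i^{\theta_t}(\rho)$ and $\overline{\beta}_i(t)\ge |\langle \gf,\hgb/\|\hgb\|\rangle|$. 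To obtain safe exploration I would invoke the $M$-smoothness of each $V_i^\theta(\rho)$ (Proposition \ref{prosmo}, part 1) to write $V_i^{\theta_{t+1}}(\rho)\ge V_i^{\theta_t}(\rho)+\gamma_t\langle \gf,\hgb\rangle-\tfrac{M}{2}\gamma_t^2\|\hgb\|^2$, replace $V_i^{\theta_t}(\rho)$ and the inner product by their bounds $\underline{\alpha}_i(t)$ and $-\overline{\beta}_i(t)\|\hgb\|$, and check that the second branch of \eqref{gammat} is chosen precisely so that the resulting quadratic in $\gamma_t\|\hgb\|$ stays nonnegative. This yields $\min_{i\in[m]}V_i^{\theta_{t+1}}(\rho)\ge 0$ for all $t$, i.e. safe exploration in the sense of Definition \ref{safedef}.

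Next, for the optimality guarantee I would establish a one-step ascent inequality for $B_\eta$. The first branch $1/\hat M_t$ of the stepsize, together with the local smoothness constant $\hat M_t$, gives $B_\eta^{\theta_{t+1}}(\rho)\ge B_\eta^{\theta_t}(\rho)+\tfrac12\gamma_t\|\gb\|^2-(\text{estimation error})$, where the error term is controlled on $\mathcal{E}$ through Lemma \ref{gapdelt} by taking the per-iteration gradient accuracy of order $\eta$. Telescoping this inequality and using that $B_\eta^\theta(\rho)$ is bounded above along the trajectory (the objective is at most $1/(1-\gamma)$ and each $\log V_i^\theta(\rho)\le\log\tfrac{1}{1-\gamma}$, while the initial value is finite because the initialization is interior) shows that, within $T=\Tilde{\mathcal{O}}(\varepsilon^{-2})$ iterations, either the termination test $\|\hgb\|\le\eta/2$ triggers or $\min_t\|\gb\|$ falls below a threshold of order $\eta$. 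In either case, transferring back through Lemma \ref{gapdelt}, the returned parameter $\theta_{\text{out}}$ satisfies $\|\nabla_\theta B_\eta^{\theta_{\text{out}}}(\rho)\|=\mathcal{O}(\eta)$ with high probability.

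Finally I would invoke the gradient dominance of the log barrier (Lemma \ref{gdm}): an approximately stationary point of $B_\eta$ is approximately optimal for the barrier problem up to the transfer error of Assumption \ref{bae}, so that the objective gap is bounded by $\mathcal{O}(\sqrt{\varepsilon_{bias}})+\Tilde{\mathcal{O}}(\eta)$. It remains to convert this bound on the barrier surrogate into a bound on $V_0$, which requires controlling the barrier-induced bias $\eta\sum_{i=1}^m\log V_i^\theta(\rho)$ at both $\theta_{\text{out}}$ and the constrained optimum; with $\eta=\Theta(\varepsilon)$ this bias is $\Tilde{\mathcal{O}}(\varepsilon)$, giving the claimed $\mathcal{O}(\sqrt{\varepsilon_{bias}})+\Tilde{\mathcal{O}}(\varepsilon)$-optimality. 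For the sample complexity I would sum the per-iteration cost $nH$ over the $T$ iterations: with $\eta=\Theta(\varepsilon)$, boundary distance $\Omega(\eta)$, and gradient accuracy $\Theta(\eta)$, Lemma \ref{gapdelt} gives $n=\Tilde{\mathcal{O}}(\varepsilon^{-4})$ and $H=\Tilde{\mathcal{O}}(1)$, so that $T\cdot n\cdot H=\Tilde{\mathcal{O}}(\varepsilon^{-6})$; here the gradient dominance is what reduces the iteration count from the $\Tilde{\mathcal{O}}(\varepsilon^{-3})$ of a plain stationary-point analysis (which would cost $\Tilde{\mathcal{O}}(\varepsilon^{-7})$ samples) down to $\Tilde{\mathcal{O}}(\varepsilon^{-2})$.

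The step I expect to be the main obstacle is controlling the distance of the iterates to the constraint boundary. Both the local smoothness $\hat M_t$ (through $\underline{\alpha}_i(t)$ in its denominator) and the per-iteration sample size (through the $(\min_{i\in[m]}V_i^\theta(\rho))^{-4}$ factor in Lemma \ref{gapdelt}) blow up as an iterate approaches the boundary, which would simultaneously destroy the ascent rate and the sample budget. The crux is therefore to prove that the log-barrier drift keeps $\min_{i\in[m]}V_i^{\theta_t}(\rho)$ bounded below by $\Omega(\eta)$ along the whole trajectory, so that $\hat M_t$ and $n$ remain controlled, and then to carry this $\eta$-dependence consistently through the gradient-dominance step. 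This coupling between the barrier parameter $\eta$, the boundary distance, and the optimality gap is what makes the $\Tilde{\mathcal{O}}(\varepsilon^{-6})$ bookkeeping delicate.
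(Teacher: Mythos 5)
Your skeleton (concentration event, stepsize-based safety, ascent inequality, then Lemma \ref{gdm} to lift stationarity to optimality) matches the paper's, and your sample-complexity bookkeeping $T\cdot n\cdot H = \Tilde{\mathcal{O}}(\varepsilon^{-2})\cdot\Tilde{\mathcal{O}}(\varepsilon^{-4})\cdot\Tilde{\mathcal{O}}(1)$ is the right one. But there are two genuine gaps. First, the step you yourself flag as ``the main obstacle'' --- proving $\min_{i\in[m]}V_i^{\theta_t}(\rho)=\Omega(\eta)$ along the whole trajectory --- is left unresolved, and it cannot be resolved by the ingredients you list. Your one-step smoothness argument with the second branch of \eqref{gammat} only gives $V_i^{\theta_{t+1}}(\rho)\ge V_i^{\theta_t}(\rho)/2$, so over $T$ steps the boundary distance may shrink geometrically, which destroys the budget $n=\mathcal{O}((\min_i V_i^{\theta}(\rho))^{-4})$ from Lemma \ref{gapdelt}. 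The paper closes this with the Extended MFCQ assumption (Assumption \ref{emf}), which you never invoke: when some constraint value drops below $\frac{\ell\eta}{L(1+4m/3)}$, the MFCQ direction $s_\theta$ forces $\|\sum_{i\in\B_\eta(\theta_t)}\nabla V_i^{\theta_t}(\rho)/V_i^{\theta_t}(\rho)\|$ to dominate the remaining part of the barrier gradient, so the \emph{product} $\prod_{i\in\B_\eta(\theta_t)}V_i^{\theta_t}(\rho)$ is non-decreasing at such steps (Lemma \ref{ee1}); combined with the per-step halving this yields the uniform bound $c\eta$ of Proposition \ref{small}. Indeed, the paper shows (Theorem \ref{without}) that without Assumption \ref{emf} the iterates can come $\exp(-1/\eta)$ close to the boundary and the sample complexity becomes exponential, so no argument that omits this assumption can work.

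Second, your convergence mechanism does not deliver what you claim. Telescoping the ascent inequality with stepsize $\gamma_t=\Theta(\eta)$ and target gradient norm $\Theta(\eta)$ certifies only that \emph{some} iterate $t^*\le T$ has $\|\nabla_\theta B_\eta^{\theta_{t^*}}(\rho)\|=\mathcal{O}(\eta)$, and only after $T=\Tilde{\mathcal{O}}(\eta^{-3})$ iterations (this is exactly the $\Tilde{\mathcal{O}}(\varepsilon^{-7})$ ``vanilla non-convex'' rate the paper is trying to beat); moreover a min-over-$t$ statement says nothing about the returned iterate $\theta_T$ when the break test never fires, so it does not give the last-iterate guarantee of Theorem \ref{main}. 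You correctly assert that gradient dominance is what buys the improvement, but the mechanism is not ``telescoping plus Lemma \ref{gdm} at the end'': the paper substitutes Lemma \ref{gdm} \emph{into} the per-step ascent inequality (using $\|\nabla_\theta B_\eta^{\theta_t}(\rho)\|\ge \frac{\mu_F}{M_h}(V_0^{\pi^*}(\rho)-V_0^{\theta_t}(\rho)-a)$ whenever the break does not fire), which turns the ascent into a geometric contraction $V_0^{\pi^*}(\rho)-V_0^{\theta_{t+1}}(\rho)\le(1-\frac{C\mu_F\eta^2}{8M_h})(V_0^{\pi^*}(\rho)-V_0^{\theta_t}(\rho))+\mathcal{O}(\eta^3)+\mathcal{O}(a\mu_F\eta^2)$ (plus barrier-log corrections controlled by the $c\eta$ bound). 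This recursion is what simultaneously gives the last-iterate guarantee and the $T=\Tilde{\mathcal{O}}(\varepsilon^{-2})$ iteration count; without it your accounting collapses to $\varepsilon^{-7}$ samples and a best-iterate statement.
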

This result extends the findings of \cite{usmanova2022log} in the optimization setting to the RL setting. Their work focuses on non-convex objective and constraint functions {\small\(V_i^\theta(\rho)\)}, and consequently, only convergence to a stationary point can be guaranteed. In this paper, we further demonstrate that LB-SGD ensures that the last iterate policy converges towards a globally optimal policy while guaranteeing safe exploration. Despite the non-convex nature of the objective and constraint functions in the RL setting~\cite{agarwal2021theory}, our sample complexity aligns with that of  zeroth-order feasible iterate method for solving convex constrained optimization problems, as demonstrated in \cite[Theorem 10]{usmanova2022log}.
\section{Technical analysis of log barrier for CMDPs}
The proof of Theorem \ref{main} is divided into safe exploration analysis (Section \ref{safety}) and convergence analysis (Section \ref{convergence}). In the safe exploration analysis (Proposition \ref{small}), we utilize Slater's condition and an Extended Mangasarian-Fromovitz constraint qualification (MFCQ) assumption (see Assumptions \ref{sl} and~\ref{emf}, respectively) to establish lower bounds on the distance of the iterates from the boundary. For the convergence analysis, we rely on the properties of the policy parametrization. Specifically, under the relaxed Fisher non-degeneracy assumption and the bounded transfer error assumption (see Assumptions~\ref{fn} and~\ref{bae}, respectively), we establish the gradient dominance property of the log barrier function in Lemma \ref{gdm}. This allows us to bound the gap between {\small\(V_0^\theta(\rho)\)} and {\small\(V_0^{\pi^*}(\rho)\)} by the norm of the gradient {\small\(\nabla_\theta B_\eta^\theta(\rho)\)}.

\subsection{Safe exploration of the algorithm}
\label{safety}

\begin{assumption}[Slater’s condition]\label{sl} 
There exist a known starting point {\small$\theta_0 \in \Theta$} and $\nu_s > 0$ such that {\small$ V_{i}^{\theta_0}(\rho)\ge \nu_s$, $\forall i\in[m]$}.
\end{assumption}
Assumption \ref{sl} has been commonly used in the analysis of CMDPs \cite{bai2022achieving2,ding2020natural,liu2021learning,liu2021policy}. It is natural to consider this assumption since without a safe initial policy $\pi_{\theta_0}$, the safe exploration (see Definition \ref{safedef}) cannot be satisfied. 

Our next assumption  requires the existence of a  direction that ensures the iterates too close to the boundary (within a distance of $\nu_{\text{emf}}$), can move away from it (by $\ell$). To formalize it, for $p > 0$, let {\small$\mathbf{B}_p(\theta):=\left\{i\in [m]\,|\, 0<V_{i}^{\theta}(\rho)\le p \right\}$} be the set of constraints indicating that $\theta$ is approximately $p$-close to the boundary.
\begin{assumption}[Extended MFCQ]\label{emf}
 There exist constants $0 < \nu_{\text{emf}} \leq \nu_s$ and $\ell > 0$ such that for any $\theta \in \Theta$, there is a direction $s_\theta \in \mathbb{R}^d$ with {\small$\|s_\theta\| = 1$} satisfying {\small$\langle s_\theta, \nabla V_{i}^{\theta}(\rho) \rangle > \ell$} for all {\small$i \in \mathbf{B}_{\nu_{\text{emf}}}(\theta)$}.
\end{assumption}
Note that the MFCQ assumption \cite{mangasarian1967fritz}, commonly considered in non-convex constrained optimization \cite{muehlebach2022constraints,boob2023stochastic}, corresponds to the above with $l=0$ and $\nu_{emf} = 0$.\footnote{The MFCQ assumption ensures that the Karush-Kuhn-Tucker conditions are necessary optimality conditions  \cite{muehlebach2022constraints,boob2023stochastic}.}  Without Assumption \ref{emf}, the iterates generated by the log barrier approach can be {\small$\mathcal{O}\left(\exp-{\eta}^{-1}\right)$} close to the boundary, as illustrated in an example  in Appendix \ref{disimpactemf}. This, in turn, would require very accurate gradient estimators to ensure safe exploration, resulting in extremely high sample complexity, as inferred from Lemma \ref{gapdelt}.  For further insights, we show the cases in which this assumption is implied by the MFCQ assumption in Appendix \ref{disem}.


We are now ready to establish the safe exploration property of Algorithm \ref{alg:cap}.
\begin{proposition}\label{small}
Let Assumptions~\ref{smoli}, \ref{sl}, and \ref{emf} hold. Fix a confidence level $\delta\in(0,1]$. By setting {\small$\eta \leq \nu_{emf}$, $n = \mathcal{O}\left(\eta^{-4} \ell^{-4m}{(1-\gamma)^{-6-8m} }\ln{\delta}^{-1}\right)$} and {\small$H = \mathcal{O}\left(\ln (\ell\eta)^{-1}\right)$}, we have
\begin{align*}
    \PP\left\{\forall t\in \{0,\dots, T\},\,\min_{i\in[m]} V_i^{\theta_t}(\rho)\ge c\eta\right\}\ge 1-mT\delta
\end{align*}
with $c:=\left(\frac{\ell(1-\gamma)^2}{4M_g(1+\frac{4m}{3})}\right)^m$.
\end{proposition}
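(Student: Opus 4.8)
The plan is to prove the bound by induction on $t$, maintaining the invariant $\min_{i\in[m]}V_i^{\theta_s}(\rho)\ge c\eta$ for all $s\le t$, after first conditioning on a high-probability event on which all estimators behave as designed. Concretely, I would first invoke Proposition~\ref{prosmo} at each iterate $\theta_t$ and each constraint $i$: choosing $n$ and $H$ as in the statement makes the bias terms $b^0(H),b^1(H)$ and the sub-Gaussian deviations $\sigma^0(n),\sigma^1(n)$ small relative to the $\eta\ell$-scale quantities that drive the argument, and guarantees that the confidence quantities satisfy $\underline{\alpha}_i(t)\le V_i^{\theta_t}(\rho)$ and $|\langle \gf,\hgb/\|\hgb\|\rangle|\le \overline{\beta}_i(t)$ simultaneously. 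A union bound over the $m$ constraints and the iterations then yields the failure probability $mT\delta$ appearing in the statement, and the rest of the argument is carried out deterministically on this event. The base case $t=0$ is immediate from Slater's condition (Assumption~\ref{sl}) together with $\eta\le\nu_{\text{emf}}\le\nu_s$ and $c\le 1$ (the latter since $\ell<L$ by Cauchy--Schwarz applied to the unit direction in Assumption~\ref{emf}), as then $V_i^{\theta_0}(\rho)\ge\nu_s\ge\eta\ge c\eta$.

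For the inductive step I would fix a constraint $i$ and lower-bound $V_i^{\theta_{t+1}}(\rho)$ through the $M$-smoothness of $V_i$ (Proposition~\ref{prosmo}, part~1):
\begin{align*}
V_i^{\theta_{t+1}}(\rho)\ge V_i^{\theta_t}(\rho)+\gamma_t\langle \gf,\hgb\rangle-\tfrac{M}{2}\gamma_t^2\|\hgb\|^2.
\end{align*}
The stepsize \eqref{gammat} controls the displacement via $\gamma_t\|\hgb\|\le \underline{\alpha}_i(t)/(\sqrt{M\underline{\alpha}_i(t)}+2|\overline{\beta}_i(t)|)$ for every $i$, and the confidence bound on $\overline{\beta}_i(t)$ controls the first-order term. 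I would then split on whether $\theta_t$ is near the boundary for constraint $i$. In the \emph{far} regime, where $i\notin\mathbf{B}_{\nu_{\text{emf}}}(\theta_t)$, the displacement bound alone shows the one-step decrease of $V_i$ is controlled and cannot drop $V_i^{\theta_{t+1}}(\rho)$ below $c\eta$; already the crude estimate $\overline{\beta}_i(t)\,u_i+\tfrac{M}{2}u_i^2\le\underline{\alpha}_i(t)\le V_i^{\theta_t}(\rho)$ with $u_i:=\gamma_t\|\hgb\|$ yields feasibility. In the \emph{near} regime, where $i\in\mathbf{B}_{\nu_{\text{emf}}}(\theta_t)$, I would invoke the Extended MFCQ direction $s_\theta$ (Assumption~\ref{emf}): since $\langle s_\theta,\gf\rangle>\ell$ holds \emph{simultaneously} for all near-active constraints, the barrier term $\eta\,\nabla V_i/V_i$ inside $\gb=\nabla V_0^{\theta_t}+\eta\sum_j \nabla V_j^{\theta_t}/V_j$ dominates and forces $\langle s_\theta,\gb\rangle$ to be large and positive whenever $V_i^{\theta_t}(\rho)$ approaches $c\eta$, so the ascent step pushes $V_i$ back up rather than down.

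The main obstacle is precisely this near-boundary case when several constraints are close to the boundary at once. For a single active constraint, the diagonal barrier term $\eta\|\nabla V_i\|^2/V_i$ makes $\langle \gf,\gb\rangle$ positive, but the cross terms $\eta\langle\nabla V_i,\nabla V_j\rangle/V_j$ with $j\ne i$ also near-active can be comparably negative, so a naive single-constraint push fails. Handling this requires working with the common ascent direction $s_\theta$ supplied by Assumption~\ref{emf} and tracking its joint effect on the whole active set $\mathbf{B}_{\nu_{\text{emf}}}(\theta_t)$; quantifying the worst-case interaction across the (up to $m$) simultaneously active constraints is what produces the nested bound and hence the factor $\big(\ell(1-\gamma)^2/(4M_g(1+\tfrac{4m}{3}))\big)^m$ in $c$. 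A secondary technical point is to ensure the estimation errors from Proposition~\ref{prosmo} --- entering through $\underline{\alpha}_i(t)$, $\overline{\beta}_i(t)$, and the gap between $\hgb$ and $\gb$ --- are uniformly dominated by the $\eta\ell$-scale push; this is exactly what dictates the stated orders $n=\mathcal{O}(\eta^{-4}\ell^{-4m}(1-\gamma)^{-6-8m}\ln\delta^{-1})$ and $H=\mathcal{O}(\ln(\ell\eta)^{-1})$.
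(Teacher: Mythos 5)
Your setup — conditioning on the high-probability estimator event from Proposition \ref{prosmo}, the union bound giving $1-mT\delta$, the base case from Slater's condition, and the far-from-boundary control via the stepsize (the halving property $V_i^{\theta_{t+1}}(\rho)\ge V_i^{\theta_t}(\rho)/2$) — matches the paper. The genuine gap is in your near-boundary inductive step, and you in fact point at it yourself without resolving it. The claim that the extended MFCQ direction forces ``the ascent step to push $V_i$ back up'' is false: the step is taken along $\hgb$, not along $s_\theta$, and even when $\langle s_\theta,\gb\rangle$ is large and positive, the inner product $\langle \gf,\gb\rangle$ for an individual near-active constraint $i$ can be negative because of exactly the cross terms $\eta\langle\nabla V_i^{\theta_t}(\rho),\nabla V_j^{\theta_t}(\rho)\rangle/V_j^{\theta_t}(\rho)$ you mention. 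Consequently an individual near-boundary constraint value genuinely can decrease at a step (the paper states this explicitly after Lemma \ref{small1}), so the per-constraint invariant $\min_{i}V_i^{\theta_t}(\rho)\ge c\eta$ cannot be propagated one constraint at a time; your proposed fix (``tracking the joint effect on the whole active set'') names the difficulty but supplies no quantity whose evolution is actually controlled.

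The paper's resolution (Lemma \ref{ee1}) is to change the tracked quantity: whenever $\min_{i\in[m]}V_i^{\theta_t}(\rho)\le \frac{\ell\eta}{L(1+\frac{4m}{3})}$, the \emph{product} $\prod_{i\in\B}V_i^{\theta_t}(\rho)$ over any set $\B\supset\B_\eta(\theta_t)$ of near-boundary constraints is non-decreasing with high probability. The proof writes $\hgb/\eta=D_1+D_2$ with $D_1=\sum_{i\in\B_\eta(\theta_t)}\gf/V_i^{\theta_t}(\rho)$, lower bounds the change of $\eta\sum_{i\in\B_\eta(\theta_t)}\log V_i^{\theta_t}(\rho)$ by $\frac{\gamma_t\eta^2}{2}\left(\|D_1\|^2-\|D_2\|^2\right)$ using $\gamma_t\le 1/M_t$, and then uses Assumption \ref{emf} \emph{only} to lower bound a norm, $\|D_1\|\ge\langle D_1,s_\theta\rangle\ge\sum_{i\in\B_\eta(\theta_t)}\ell/V_i^{\theta_t}(\rho)\ge L(1+\frac{4m}{3})/\eta$, while $\|D_2\|$ is upper bounded by the same quantity via Lipschitzness and the estimation errors. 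So MFCQ never enters as a statement about the direction of motion of any single constraint; it is a one-sided norm bound on the barrier part of the gradient. Combining product non-decrease with the per-step halving and the upper bound $V_i^{\theta}(\rho)\le 1/(1-\gamma)$ (as in the cited Lemma 6 of \cite{usmanova2022log}) is what yields $c=\left(\frac{\ell(1-\gamma)^2}{4M_g(1+\frac{4m}{3})}\right)^m$: the exponent $m$ comes from distributing the product lower bound across up to $m$ factors, not from a ``nested worst-case interaction'' estimate. Without this potential-function idea your induction does not close.
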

The proof of Proposition \ref{small} can be found in Appendix~\ref{psmall}. As the transition dynamics are unknown, Algorithm~\ref{alg:cap} only relies on estimates of the true gradients. Accordingly, Proposition \ref{small} demonstrates that the LB-SGD algorithm ensures safe exploration with high probability. Importantly, it also shows that the iterates remain within a distance of $\Omega(\eta)$ from the boundary. This ensures an upper bound on the sample complexity for an accurate estimate of the log barrier gradient, as inferred from
Lemma \ref{gapdelt}, and consequently helps derive the sample complexity of our algorithm.
\subsection{Convergence and sample complexity}
\label{convergence}
To establish algorithm convergence, we rely on two commonly employed assumptions. The first assumption, relaxed Fisher non-degeneracy (see Assumption~\ref{fn}), ensures that the policy can adequately explore the state-action space. The second assumption,  bounded transfer error (see Assumption \ref{bae}), ensures that our parameterized policy set sufficiently covers the entire stochastic policy set. 

To introduce the relaxed Fisher non-degeneracy assumption, we first define the discounted state-action visitation distribution as {\small$d_\rho^{\theta}(s,a):=(1-\gamma)\sum_{t=0}^{\infty}\gamma^t P(s_t=s,a_t=a)$}. The Fisher information matrix induced by policy $\pi_\theta$ is defined as 
$$F^\theta(\rho):=\mathbb{E}_{(s,a)\sim d_\rho^{\theta}}[\nabla \log\pi_\theta(a|s)\left(\nabla \log\pi_\theta(a|s)\right)^T].$$
\begin{assumption}[Relaxed Fisher non-degeneracy]\label{fn}
For any {\small$\theta\in\Theta$}, there exists a positive constant {\small$\mu_F$} such that the smallest non-zero eigenvalue of {\small$F^\theta(\rho)$} is lower bounded by {\small$\mu_F$}.
\end{assumption}

Assumption \ref{fn} weakens the Fisher non-degeneracy assumption commonly used to prove convergence of policy gradient methods \cite{bai2022achieving2,liu2020improved,yuan2022general,masiha2022stochastic,heniao2023,ding2022global}, which further requires {\small$F^\theta(\rho)$} to be strictly positive definite, as formalized in the following definition.

\begin{definition}[Fisher non-degeneracy]\label{def:fisher_nd}
For any {\small$\theta\in\Theta$}, there exists a positive constant {\small$\mu_F$} such that all eigenvalues of {\small$F^\theta(\rho)$} are lower bounded by {\small$\mu_F$}.
\end{definition}


Despite the importance of the above, there has been only a partial understanding of which policy classes comply with Assumption \ref{fn} and Definition~\ref{def:fisher_nd}. It has been claimed that in the tabular setting, the softmax parameterization fails to satisfy Fisher non-degeneracy, particularly when the policy approaches a deterministic policy \cite{ding2022global}. Our result below complements this understanding.\looseness-1
\begin{fact}\label{fact1}
    1) The softmax parameterization does not satisfy \textbf{Fisher non-degeneracy}. 2) Softmax, log-linear, and neural softmax parameterizations fail to satisfy \textbf{relaxed Fisher non-degeneracy} as the policy approaches determinism. \looseness-1
\end{fact}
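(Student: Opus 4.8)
The plan is to reduce both claims to a single structural observation: for each of these parameterizations the score function $\nabla_\theta \log \pi_\theta(a|s)$ is a \emph{centered} quantity, so that $F^\theta(\rho)$ becomes a visitation-weighted sum of per-state covariance matrices. I would first record the score functions. For softmax, $\nabla_{\theta_{s',\cdot}}\log\pi_\theta(a|s)=\mathbb{1}[s=s'](e_a-p_s)$, where $p_s:=\pi_\theta(\cdot|s)$ and $e_a$ is the action indicator; for log-linear, $\nabla_\theta\log\pi_\theta(a|s)=\phi(s,a)-\mathbb{E}_{a'\sim\pi_\theta(\cdot|s)}[\phi(s,a')]$; and for neural softmax, $\nabla_\theta\log\pi_\theta(a|s)=\nabla_\theta f_\theta(s,a)-\mathbb{E}_{a'\sim\pi_\theta(\cdot|s)}[\nabla_\theta f_\theta(s,a')]$. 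Substituting into the definition of $F^\theta(\rho)$ gives $F^\theta(\rho)=\sum_s d_\rho^{\theta}(s)\,\mathrm{Cov}_{a\sim\pi_\theta(\cdot|s)}[\psi_\theta(s,a)]$, where $\psi_\theta(s,a)$ is $e_a$, $\phi(s,a)$, or $\nabla_\theta f_\theta(s,a)$ respectively. In the softmax case this sum is additionally block-diagonal across states, with state-$s$ block equal to $d_\rho^{\theta}(s)\,(\mathrm{diag}(p_s)-p_s p_s^{\top})$.

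For claim~1, I would exhibit a zero eigenvalue of the softmax Fisher matrix that persists at every $\theta$. Since $(e_a-p_s)^{\top}\mathbf{1}=0$ for all $a$, the all-ones vector lies in the kernel of each block $\mathrm{diag}(p_s)-p_s p_s^{\top}$; by block-diagonality this yields a zero eigenvalue of $F^\theta(\rho)$ for every $\theta\in\Theta$. Hence $\lambda_{\min}(F^\theta(\rho))=0$, so no $\mu_F>0$ can lower bound all eigenvalues, contradicting Fisher non-degeneracy (Definition~\ref{def:fisher_nd}).

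For claim~2 the quantitative step is that each per-state covariance collapses as the policy concentrates on one action. I would bound the nonzero spectrum by the trace: $\mathrm{tr}\big(\mathrm{Cov}_{a\sim\pi_\theta(\cdot|s)}[\psi_\theta(s,a)]\big)\to 0$ as $p_s\to e_{a^*}$ (for softmax this trace is exactly $1-\|p_s\|^2$). Because every nonzero eigenvalue of a positive semidefinite matrix is at most its trace, the smallest nonzero eigenvalue of each block vanishes in this limit. To transfer this to $F^\theta(\rho)$ I would fix a state $s_0$ in the support of $\rho$, use $d_\rho^{\theta}(s_0)\ge(1-\gamma)\rho(s_0)>0$ to keep its weight bounded away from zero, and construct a sequence $\theta_k$ along which $\pi_{\theta_k}(\cdot|s_0)$ approaches a deterministic distribution while remaining nondeterministic. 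Then the smallest nonzero eigenvalue of $F^{\theta_k}(\rho)$ is upper bounded by that of the $s_0$-block, which tends to $0$, ruling out any uniform $\mu_F>0$ and hence relaxed Fisher non-degeneracy (Assumption~\ref{fn}).

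The main obstacle I anticipate is the bookkeeping in claim~2 for the log-linear and neural-softmax cases, where $F^\theta(\rho)$ is no longer block-diagonal because parameters are shared across states. There I must ensure the relevant per-state covariance is genuinely nonzero along the approach—so that a nonzero eigenvalue actually exists whose smallness is meaningful—which requires the feature vectors $\phi(s_0,\cdot)$, respectively the gradients $\nabla_\theta f_\theta(s_0,\cdot)$, to differ across the actions retaining positive probability. Handling this cleanly, either through a mild nondegeneracy hypothesis on the features or by choosing the deterministic limit so that two distinct-feature actions keep vanishing but strictly positive mass, is the delicate part; the softmax case sidesteps it entirely because block-diagonality isolates the single vanishing block.
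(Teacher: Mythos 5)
Your proof of claim 1 and of the softmax half of claim 2 is correct and is essentially the paper's own argument: the paper likewise uses $\mathbf{1}^{T}\left(e_a-\pi_\theta(\cdot|s)\right)=0$ (phrased as rank deficiency, $\mathrm{rank}(F^\theta(\rho))\le(|\mathcal{A}|-1)|\mathcal{S}|$, rather than as an explicit kernel vector), and kills relaxed Fisher non-degeneracy for softmax by showing the trace of the Fisher matrix is $\mathcal{O}(\varepsilon)$ for $\varepsilon$-deterministic policies; your per-block version of the trace bound is legitimate there because block-diagonality makes the spectrum of $F^\theta(\rho)$ the union of the block spectra.

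The genuine gap is in the log-linear and neural-softmax half of claim 2, and it sits exactly in the step you defer to "bookkeeping." Your transfer step---"the smallest nonzero eigenvalue of $F^{\theta_k}(\rho)$ is upper bounded by that of the $s_0$-block"---is valid only under block-diagonality. Once parameters are shared across states, $F^\theta(\rho)=\sum_s d_\rho^\theta(s)\,\mathrm{Cov}_{a\sim\pi_\theta(\cdot|s)}[\psi_\theta(s,a)]$ is a sum of PSD matrices, and for such sums Weyl's inequality gives $\lambda_i(A+B)\ge\lambda_i(A)$: the other states' contributions can only push eigenvalues up and can absorb the direction in which the $s_0$-covariance is small. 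Concretely, if the $s_0$ contribution is $\epsilon vv^{T}$ and the remaining states contribute the identity, the smallest nonzero eigenvalue of the sum is at least $1$ regardless of how small $\epsilon$ is. So driving a single state's policy toward determinism, even with distinct features at that state, does not violate Assumption \ref{fn}. Your diagnosis of the difficulty is therefore off target: the issue is not ensuring that the $s_0$-covariance is nonzero, but ensuring that its smallness survives the sum over states---which in general it does not.

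The repair is to abandon localization and argue globally, which is what the paper does. Under its notion of approaching determinism (Definition \ref{deter}, near-determinism at \emph{every} state) and the boundedness hypothesis $\|\nabla_\theta f_\theta(s,a)\|\le M$ (which you also need and should state explicitly), one gets $\|\nabla_\theta\log\pi_\theta(a_{i_s}|s)\|\le 2M\varepsilon$ for each dominant action and $\le 2M$ for the rest, whence $\|F^\theta(\rho)\|\le 4M^2\varepsilon^2+4M^2\varepsilon$. Since the operator norm (or the trace, which your covariance decomposition controls just as easily) dominates every nonzero eigenvalue, all nonzero eigenvalues vanish uniformly as $\varepsilon\to 0$, ruling out any $\mu_F>0$. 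In short: your decomposition of $F^\theta(\rho)$ is the right object, but for the shared-parameter policies you must bound it in aggregate over all states, not through a single block.
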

The proof of Fact \ref{fact1} can be found in Appendix~\ref{disfn}. Building upon this fact, to ensure softmax, log-linear, and neural softmax parameterizations satisfy relaxed Fisher non-degeneracy, we can constrain the parameter space $\Theta$ to a compact set, preventing the policy from approaching determinism. Note that for continuous action spaces, Gaussian and Cauchy policies satisfy Fisher non-degeneracy (see \cite[Appendix B]{heniao2023}).\looseness-1
While the above assumption ensures sufficient exploration of the policy, the bounded transfer error assumption concerns the richness of the policy class. We define the notion of ``richness" in policy parameterization as follows:
\begin{definition}[Richness of Policy Parameterization]
    Define \(\Pi\) as the closure of all stochastically parameterized policies, denoted by \(\textbf{Cl}\{\pi_\theta\mid\theta\in \mathbb{R}^{d}\}\). If we have another policy parameterization \(\Pi'\), we say \(\Pi'\) is a richer parameterization compared to \(\Pi\) if \(\Pi'\subsetneq\Pi\).
\end{definition}

To formalize the bounded transfer error assumption, we start by defining the state value function as {\small$V^{\theta}_i(s):=\mathbb{E}_{\tau\sim\pi_\theta}[\sum_{t=0}^{\infty}\gamma^t r_i(s_t,a_t)|s_0=s]$}, the state-action value function as {\small$Q^{\theta}_i(s,a):=\mathbb{E}_{\tau\sim\pi_\theta}[\sum_{t=0}^{\infty}\gamma^t r_i(s_t,a_t)|s_0=s,a_0=a]$}, and the advantage function as {\small$A_i^\theta(s,a)=Q_i^\theta(s,a)-V^\theta_i(s)$}. With these definitions in place, the transfer error is defined as {\small$L(\mu_i^*,\theta,d_\rho^{\pi^*}):=\mathbb{E}_{(s,a)\sim d_\rho^{\pi^*}} \!\! \left[\left(A_i^\theta(s,a)-(1-\gamma){\mu_i^*}^T\nabla_\theta\log\pi_\theta(a|s)\right)^2\right]$}, where {\small$\mu_i^*=\left(F^\theta\left(\rho\right)\right)^{-1} \nabla_\theta V_i^\theta(\rho)$}. This formulation is termed as the transfer error because it shows the error in approximating the advantage function {\small$A_i^\theta$}, which depends on {\small$d_\rho^{\theta}$}, while the expectation of the error is taken with respect to a fixed measure {\small$d_\rho^{\pi^*}$}. 
\begin{assumption}[Bounded transfer error]\label{bae}
For any \(\theta \in \Theta\), there exists a non-negative constant \(\varepsilon_{bias}\) such that for {\small\(i \in \{0, \dots, m\}\), \(L(\mu_i^*,\theta,d_\rho^{\pi^*}) \leq \varepsilon_{bias}\)}.
\end{assumption}
Assumption \ref{bae} has been  utilized in several works~\cite{liu2020improved,yuan2022general,heniao2023,ding2022global}. The general understanding is that softmax parameterization results in {\small$\varepsilon_{bias}=0$}. This result is extended by either assuming a very specific class of MDPs, such as a linear MDP model with low-rank transition dynamics \cite{pmlr-v70-jiang17c,jin2020provably,yang2019sample}, or a very specific policy class, such as a ``rich" two-layer neural network \cite{wang2019neural}. Building upon these findings, we present a more general result connecting the richness of policy classes to the transfer error. \looseness-1

\begin{fact}\label{fact2}
For log-linear and neural softmax policy parameterizations, a richer policy parameterization leads to a decrease in the transfer error {\small\(\varepsilon_{bias}\)}.
\end{fact}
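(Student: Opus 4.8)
The plan is to recognize the transfer error as a weighted least-squares approximation residual and to show that this residual can only shrink when the span of the compatible features is enlarged. Throughout, write $g_\theta(s,a) := \nabla_\theta\log\pi_\theta(a|s)$ for the score, and note that for the log-linear family $\pi_\theta(a|s)\propto\exp(\theta^\top\phi(s,a))$ one has $g_\theta(s,a)=\phi(s,a)-\E_{a'\sim\pi_\theta(\cdot|s)}[\phi(s,a')]$, while for the neural softmax family $\pi_\theta(a|s)\propto\exp(f_\theta(s,a))$ one has $g_\theta(s,a)=\nabla_\theta f_\theta(s,a)-\E_{a'\sim\pi_\theta(\cdot|s)}[\nabla_\theta f_\theta(s,a')]$. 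In both cases the compatible term $(1-\gamma)(\mu_i^*)^\top g_\theta$ lies in the subspace $\mathcal{G}_\theta:=\mathrm{span}\{g_{\theta,1},\dots,g_{\theta,d}\}\subseteq L^2(d_\rho^{\pi^*})$, so $L(\mu_i^*,\theta,d_\rho^{\pi^*})$ is the squared $L^2(d_\rho^{\pi^*})$-distance from the advantage $A_i^\theta$ to one particular point of $\mathcal{G}_\theta$.

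First I would pin down the role of $\mu_i^*$. Using the policy gradient identity $\nabla_\theta V_i^\theta(\rho)=\frac{1}{1-\gamma}\E_{(s,a)\sim d_\rho^\theta}[A_i^\theta(s,a)\,g_\theta(s,a)]$ together with $F^\theta(\rho)=\E_{d_\rho^\theta}[g_\theta g_\theta^\top]$, the normal equations of the on-policy regression $\min_\mu \E_{(s,a)\sim d_\rho^\theta}[(A_i^\theta-(1-\gamma)\mu^\top g_\theta)^2]$ read $(1-\gamma)F^\theta(\rho)\mu=(1-\gamma)\nabla_\theta V_i^\theta(\rho)$, whose solution is exactly $\mu_i^*=(F^\theta(\rho))^{-1}\nabla_\theta V_i^\theta(\rho)$. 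Hence $\mu_i^*$ realizes the $L^2(d_\rho^\theta)$-projection of $A_i^\theta$ onto $\mathcal{G}_\theta$, and the minimal on-policy residual $\mathcal{E}_\theta:=\min_\mu\E_{d_\rho^\theta}[(A_i^\theta-(1-\gamma)\mu^\top g_\theta)^2]$ is attained at $\mu_i^*$.

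Next I would transfer from $d_\rho^\theta$ to $d_\rho^{\pi^*}$ by a change of measure, bounding $L(\mu_i^*,\theta,d_\rho^{\pi^*})\le \big\|d_\rho^{\pi^*}/d_\rho^\theta\big\|_\infty\,\mathcal{E}_\theta$, where the concentrability factor is finite under a standard coverage condition, since the discounted visitation of a strictly stochastic policy assigns positive mass to every reachable state–action pair. It then remains to show $\mathcal{E}_\theta$ decreases under enrichment. I would compare the two parameterizations at a common policy: if the richer class augments the feature map $\phi$ to $\phi'=(\phi,\psi)$ (resp.\ widens the network), then the embedding $\theta\mapsto\theta'=(\theta,0)$ leaves the policy, and therefore $A_i^\theta$ and $d_\rho^\theta$, unchanged, while the new compatible subspace satisfies $\mathcal{G}_\theta\subseteq\mathcal{G}_{\theta'}$. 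Since the projection residual onto a larger subspace is no larger, $\mathcal{E}_{\theta'}\le\mathcal{E}_\theta$, and the same inequality propagates to the transferred bound, yielding the claimed monotonicity of $\varepsilon_{bias}$. A clean limiting case of this argument is realizability: once $\mathcal{G}_\theta$ is rich enough to contain $A_i^\theta$, the on-policy residual vanishes, so $A_i^\theta-(1-\gamma)(\mu_i^*)^\top g_\theta=0$ holds $d_\rho^\theta$-a.e., hence everywhere on the support of $d_\rho^\theta$, forcing the transfer error to vanish as well.

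The main obstacle is the mismatch between the distribution $d_\rho^\theta$ under which $\mu_i^*$ is optimal and the distribution $d_\rho^{\pi^*}$ against which the transfer error is measured: because $\mu_i^*$ is only the on-policy projection, one cannot directly argue monotonicity of the off-policy evaluation without the change-of-measure step, which in turn relies on the coverage/concentrability condition above. A secondary delicacy is the comparison across parameterizations: the statement should be read as monotonicity of the per-policy (transferred) residual at policies common to both classes via the zero-padding embedding, rather than of a supremum taken over a strictly larger parameter set. The embedding is precisely what guarantees that the function being projected is identical across the two classes, so that only the approximating subspace changes and the Hilbert-space projection inequality applies.
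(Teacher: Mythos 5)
Your proof follows essentially the same route as the paper's: a change-of-measure (concentrability) step that reduces the off-policy transfer error to the on-policy regression residual, followed by the observation that $\mu_i^*=(F^\theta(\rho))^{-1}\nabla_\theta V_i^\theta(\rho)$ makes this residual an orthogonal-projection distance onto the span of the centered feature gradients, which can only shrink when the feature set is enlarged. Your zero-padding embedding $\theta\mapsto(\theta,0)$ makes explicit a point the paper leaves implicit—that the comparison across parameterizations must be made at a common policy, so that the advantage function and visitation distribution are fixed and only the approximating subspace grows—but this is a clarification of the same argument, not a different one.
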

We have provided the exact formalization of ``richness" of a parametrization and the proof of Fact \ref{fact2} in Appendix \ref{disbae}. 

With these assumptions in place, we can establish bounds on the optimality of the policy {\small$\pi_\theta$} given {\small$\nabla_\theta B_\eta^\theta(\rho)$} in the following lemma.
\begin{lemma}\label{gdm}
Let Assumptions \ref{smoli}, \ref{fn}, and \ref{bae} hold. For any {\(\theta \in \Theta\),} we have
\begin{align*}
   V_{0}^{\pi^*}\left(\rho\right)-V_{0}^{\theta}\left(\rho\right)\le& m\eta+ \frac{\sqrt{\varepsilon_{bias}}}{1-\gamma}\left(1+\sum_{i\in[m]}\frac{\eta}{ V_{i}^{\theta}\left(\rho\right)}\right)\\
   &+ \frac{M_h}{\mu_F}\left\| \nabla_\theta B_\eta^{\theta}(\rho)\right\|. 
\end{align*}
\end{lemma}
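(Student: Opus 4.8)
The plan is to apply a compatible-function-approximation argument to the log-barrier search direction itself, so that the barrier weights $\eta/V_i^\theta(\rho)$ are absorbed into the transfer-error term rather than the gradient-norm term. Write $\phi_\theta(s,a):=\nabla_\theta\log\pi_\theta(a|s)$ and $\mu_i^*:=\big(F^\theta(\rho)\big)^{+}\nabla_\theta V_i^\theta(\rho)$, the pseudo-inverse applied to each value gradient (needed because under Assumption~\ref{fn} the matrix $F^\theta(\rho)$ may be singular). By linearity together with \eqref{gradientlog}, the single vector
\[
\mu_B^*:=\mu_0^*+\eta\sum_{i=1}^m\frac{\mu_i^*}{V_i^\theta(\rho)}=\big(F^\theta(\rho)\big)^{+}\nabla_\theta B_\eta^\theta(\rho)
\]
is the compatible parameter for the barrier direction. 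I would then bound the \emph{weighted} gap
\[
\Delta:=\big(V_0^{\pi^*}(\rho)-V_0^\theta(\rho)\big)+\eta\sum_{i=1}^m\frac{V_i^{\pi^*}(\rho)-V_i^\theta(\rho)}{V_i^\theta(\rho)},
\]
which, by the Kakade--Langford performance difference lemma applied to each of $V_0,\dots,V_m$, equals $\frac{1}{1-\gamma}\E_{(s,a)\sim d_\rho^{\pi^*}}\big[A_0^\theta(s,a)+\eta\sum_{i=1}^m\frac{A_i^\theta(s,a)}{V_i^\theta(\rho)}\big]$.

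Next I would add and subtract the compatible linear predictor $(1-\gamma)(\mu_B^*)^{\top}\phi_\theta$ inside the expectation and split $\Delta$ into an approximation-error term and a linear term. For the first, Jensen's inequality $|\E X|\le\sqrt{\E X^2}$ followed by the Minkowski inequality in $L^2(d_\rho^{\pi^*})$ decomposes the residual across indices; each piece is exactly $\sqrt{L(\mu_i^*,\theta,d_\rho^{\pi^*})}\le\sqrt{\varepsilon_{bias}}$ by Assumption~\ref{bae}, and collecting the weights gives $\frac{\sqrt{\varepsilon_{bias}}}{1-\gamma}\big(1+\sum_{i\in[m]}\frac{\eta}{V_i^\theta(\rho)}\big)$. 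For the linear term, $\E_{d_\rho^{\pi^*}}[(\mu_B^*)^{\top}\phi_\theta]\le\|\mu_B^*\|\,\E_{d_\rho^{\pi^*}}[\|\phi_\theta\|]\le M_g\|\mu_B^*\|$ by Assumption~\ref{smoli}, and the Fisher lower bound yields $\|\mu_B^*\|\le\mu_F^{-1}\|\nabla_\theta B_\eta^\theta(\rho)\|$, producing the gradient-norm term $\frac{M_g}{\mu_F}\|\nabla_\theta B_\eta^\theta(\rho)\|$ of the statement.

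Finally I would rearrange $V_0^{\pi^*}(\rho)-V_0^\theta(\rho)=\Delta-\eta\sum_{i=1}^m\frac{V_i^{\pi^*}(\rho)-V_i^\theta(\rho)}{V_i^\theta(\rho)}$ and invoke feasibility of $\pi^*$: since $V_i^{\pi^*}(\rho)\ge0$ while $V_i^\theta(\rho)>0$ on the interior where the barrier is defined, each factor $1-\frac{V_i^{\pi^*}(\rho)}{V_i^\theta(\rho)}\le1$, so $-\eta\sum_{i=1}^m\frac{V_i^{\pi^*}(\rho)-V_i^\theta(\rho)}{V_i^\theta(\rho)}\le m\eta$. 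Adding the three bounds gives the claimed inequality.

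The step I expect to be most delicate is the Fisher bound $\|\mu_B^*\|\le\mu_F^{-1}\|\nabla_\theta B_\eta^\theta(\rho)\|$ under the \emph{relaxed} (rather than full) non-degeneracy of Assumption~\ref{fn}: since $F^\theta(\rho)$ may be rank-deficient, I must first verify $\nabla_\theta V_i^\theta(\rho)\in\mathbf{Im}(F^\theta(\rho))$ (equivalently $\perp\mathbf{Ker}(F^\theta(\rho))$) for every $i$, so that the pseudo-inverse acts as a genuine inverse on the relevant subspace with operator norm $\le\mu_F^{-1}$ and the definition $\mu_i^*=(F^\theta(\rho))^{-1}\nabla_\theta V_i^\theta(\rho)$ in Assumption~\ref{bae} is well posed. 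This containment follows from the policy gradient theorem, which expresses $\nabla_\theta V_i^\theta(\rho)$ as a $d_\rho^\theta$-weighted average of the score vectors $\phi_\theta$ that span $\mathbf{Im}(F^\theta(\rho))$; the remainder is the bookkeeping needed to carry the weights $\eta/V_i^\theta(\rho)$ cleanly through the $L^2$ triangle inequality.
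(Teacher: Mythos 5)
Your proposal is correct and follows essentially the same route as the paper's proof: the performance difference lemma applied to the barrier-weighted gap, Jensen's inequality with Assumption \ref{bae} (your Minkowski split across indices is equivalent to the paper's per-index application), the Fisher pseudo-inverse bound under Assumption \ref{fn} (the paper handles possible rank deficiency by an explicit kernel/image projection of $\nabla_\theta B_\eta^\theta(\rho)$ rather than your image-containment check via the policy gradient theorem, but the two are interchangeable), and feasibility of $\pi^*$ to produce the $m\eta$ term. One note: your derivation yields the constant $M_g/\mu_F$ whereas the lemma states $M_h/\mu_F$; the paper's own proof bounds $\|\nabla_\theta\log\pi_\theta(a|s)\|$ --- which Assumption \ref{smoli} caps at $M_g$ --- yet labels that bound $M_h$, so your constant is the internally consistent one and the mismatch is a typo carried into the paper's statement rather than a gap in your argument.
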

The proof of Lemma \ref{gdm} is provided in Appendix~\ref{proofgdm}. Previous works such as \cite{ding2022global,masiha2022stochastic} established gradient dominance of the value function in the MDP setting, bounding the optimality gap for {\small\(V_0^\theta(\rho)\)} by the norm of its gradient {\small\(\nabla_\theta V_0^\theta(\rho)\)}. Here, we establish this property in the CMDP setting by bounding the optimality gap using the norm of the log barrier gradient, {\small\( \nabla_\theta B_\eta^\theta(\rho) \)}, along with an additional term {\small\( \mathcal{O}\left(\sum_{i\in[m]} {\eta \sqrt{\varepsilon_{\text{bias}}}}/{V_i^\theta(\rho)}\right) \)}. Therefore, to bound the sub-optimality of the stationary point of the log barrier function, we need to provide a lower bound on the distance of the stationary point from the boundary, which is shown in the following lemma.
\begin{lemma}
\label{stationl}
Let Assumptions \ref{smoli} and \ref{emf} hold. For any stationary point \( \theta_{\text{st}} \) of the log barrier function, we have
\[
\min_{i\in[m]}\left\{ V_i^{\theta_{\text{st}}}(\rho) \right\} \ge \min\left\{{\nu_{\text{emf}}},\, \frac{\min\{\eta, \nu_{\text{emf}}\} \ell}{mL} \right\}.
\]
\end{lemma}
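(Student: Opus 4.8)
The plan is to exploit the first-order stationarity condition $\nabla_\theta B_\eta^{\theta_{\text{st}}}(\rho)=0$ together with the Extended MFCQ direction supplied by Assumption~\ref{emf}. Since the log barrier $B_\eta^\theta(\rho)=V_0^\theta(\rho)+\eta\sum_{i=1}^m\log V_i^\theta(\rho)$ is only defined and differentiable where every $V_i^\theta(\rho)>0$, any stationary point $\theta_{\text{st}}$ automatically lies in the interior of $\Theta$, so $V_i^{\theta_{\text{st}}}(\rho)>0$ for all $i$ and the divisions below are legitimate. Writing stationarity through Equation~\eqref{gradientlog} gives $\nabla V_0^{\theta_{\text{st}}}(\rho)+\eta\sum_{i=1}^m \nabla V_i^{\theta_{\text{st}}}(\rho)/V_i^{\theta_{\text{st}}}(\rho)=0$, and this single identity is what the whole argument rests on.

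First I would dispose of the easy case: if $\min_{i\in[m]}V_i^{\theta_{\text{st}}}(\rho)\ge\nu_{\text{emf}}$, the claimed bound holds trivially, since $\nu_{\text{emf}}$ already dominates the right-hand side. Otherwise let $i_0$ be an index achieving $\min_i V_i^{\theta_{\text{st}}}(\rho)$, so that $i_0\in\mathbf{B}_{\nu_{\text{emf}}}(\theta_{\text{st}})$ and this near-boundary set is nonempty. I would then take the inner product of the stationarity identity with the unit direction $s_{\theta_{\text{st}}}$ from Assumption~\ref{emf} and split the constraint sum into the near-boundary index set $\mathbf{B}_{\nu_{\text{emf}}}(\theta_{\text{st}})$ and its complement, isolating the near-boundary part on one side.

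For the near-boundary indices, Assumption~\ref{emf} gives $\langle s_{\theta_{\text{st}}},\nabla V_i^{\theta_{\text{st}}}(\rho)\rangle>\ell>0$, so keeping only the $i_0$ term (all terms in this part being positive) bounds it below by $\eta\ell/V_{i_0}^{\theta_{\text{st}}}(\rho)$. For the remaining indices, $V_i^{\theta_{\text{st}}}(\rho)>\nu_{\text{emf}}$, and the $L$-Lipschitz continuity from Proposition~\ref{prosmo} gives $|\langle s_{\theta_{\text{st}}},\nabla V_i^{\theta_{\text{st}}}(\rho)\rangle|\le\|\nabla V_i^{\theta_{\text{st}}}(\rho)\|\le L$; the objective term $\langle s_{\theta_{\text{st}}},\nabla V_0^{\theta_{\text{st}}}(\rho)\rangle$ is bounded by $L$ in the same way. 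Collecting these bounds, and using that at most $m-1$ constraints are far (since $i_0$ is near), I would arrive at $\eta\ell/V_{i_0}^{\theta_{\text{st}}}(\rho)< L\bigl(1+\eta(m-1)/\nu_{\text{emf}}\bigr)$, which rearranges to a lower bound on $V_{i_0}^{\theta_{\text{st}}}(\rho)$.

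Finally I would convert this into the stated compact constant by a short case split on whether $\eta\le\nu_{\text{emf}}$ or $\eta>\nu_{\text{emf}}$: in the first case $1+\eta(m-1)/\nu_{\text{emf}}\le m$, and in the second case $\nu_{\text{emf}}+\eta(m-1)<\eta m$, each yielding $V_{i_0}^{\theta_{\text{st}}}(\rho)>\min\{\eta,\nu_{\text{emf}}\}\,\ell/(mL)$, and the two cases together establish the claim. The main obstacle is bookkeeping rather than conceptual: the far-constraint terms carry the wrong sign and must be controlled through Lipschitzness, and squeezing the resulting bound into the clean $\min\{\eta,\nu_{\text{emf}}\}$ form without losing constant factors is exactly what the case analysis is for.
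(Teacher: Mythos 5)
Your proposal is correct and follows essentially the same route as the paper's proof: you take the inner product of the stationarity identity with the Extended MFCQ direction $s_{\theta_{\text{st}}}$, split the constraints into $\mathbf{B}_{\nu_{\text{emf}}}(\theta_{\text{st}})$ and its complement, lower-bound the near-boundary sum by its minimizing term, and control the objective and far-constraint terms via the $L$-Lipschitz bound from Proposition~\ref{prosmo}. The only cosmetic difference is that you reach the constant $\frac{\min\{\eta,\nu_{\text{emf}}\}\,\ell}{mL}$ through an explicit case split on $\eta$ versus $\nu_{\text{emf}}$, whereas the paper absorbs both terms directly into $\frac{mL}{\min\{\eta,\nu_{\text{emf}}\}}$; the two computations are equivalent.
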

The proof of Lemma \ref{stationl} can be found in Appendix~\ref{station}. From the above lemma, we prove that the stationary points of the log barrier function are at most {\small\( \Omega(\nu_{\text{emf}} + \eta) \)} close to the boundary. Combining this with Lemma \ref{gdm}, we conclude that the stationary points are {\small\( \mathcal{O}(\eta + \sqrt{\varepsilon_{\text{bias}}} \max\{1, {\eta}/{\nu_{\text{emf}}}\}) \)}-optimal. Meanwhile, the gradient ascent method ensures convergence to the stationary point of {\small\( B_\eta^\theta(\rho) \)}. Leveraging Lemmas~\ref{small} and \ref{gdm}, we complete the proof of Theorem~\ref{mainn}. Below, we provide the precise statement of Theorem~\ref{mainn}.
\begin{theorem}\label{main} 
Let Assumptions~\ref{smoli},~\ref{sl},~\ref{emf},~\ref{fn}, and~\ref{bae} hold. Fix a confidence level $\delta\in(0,1]$, by setting $n$ and $H$ as in Proposition \ref{small}, after $T$ iterations of the Algorithm \ref{alg:cap}, the following holds:
\begin{enumerate}
    \item {\small $\PP\left( \forall t\in \{0,\dots,T\},\,\min_{i\in[m]}V_{i}^{\theta_t}(\rho) \ge 0\right) \ge 1-mT\delta$}.
    \item With a probability of at least {\small$1-mT\delta$}, the output policy $\pi_{\theta_{\text{out}}}$ satisfies
    
    \vspace{-0.5cm}
    \begin{small}
        \begin{align}
            &V_0^{\pi^*}(\rho) - V_0^{ \theta_{\text{out}}}(\rho)\le \mathcal{O}(\frac{\sqrt{\varepsilon_{\text{bias}}}}{\ell^m(1-\gamma)^{2m+1}}) + {\mathcal{O}}(\frac{\eta}{\mu_F})\nonumber\\
            &+{\mathcal{O}}(\eta\ln\frac{1}{\eta})+\mathcal{O}(\exp{(-C\mu_F T \eta^2)})(V_{0}^{\pi^*} (\rho)- V_{0}^{\theta_{0}}(\rho)),\label{red}
        \end{align}
        \end{small} 
    \vspace{-0.5cm}
    
    where {\small$C:=\frac{c}{2L^2(1+\frac{m}{c})\max\left\{4+\frac{5Mc}{L^2},1+\sqrt{\frac{Mc}{4L^2}}\right\}}$}.
    \end{enumerate}
\end{theorem}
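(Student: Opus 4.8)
The first claim is essentially a corollary of Proposition \ref{small}. With the stated choices of $n$, $H$ and $\eta \le \nu_{\text{emf}}$, that proposition already delivers $\min_{i\in[m]} V_i^{\theta_t}(\rho) \ge c\eta > 0$ for all $t$ with probability at least $1-mT\delta$, so a fortiori $\min_{i\in[m]} V_i^{\theta_t}(\rho) \ge 0$. I would therefore spend all the effort on the optimality bound \eqref{red}, working throughout on the high-probability event $\mathcal{E}$ of Proposition \ref{small}. Note that with the $n$ prescribed there, Lemma \ref{gapdelt} simultaneously guarantees $\|\hat\nabla_\theta B_\eta^{\theta_t}(\rho) - \nabla_\theta B_\eta^{\theta_t}(\rho)\| \le \varepsilon$ with $\varepsilon = \mathcal{O}(\eta)$ (a fixed small fraction of $\eta$) at every iterate, so I may treat the gradient as accurate on $\mathcal{E}$; the same union bound over the $m$ constraints and $T$ iterations keeps the failure probability at $mT\delta$.

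Conditioned on $\mathcal{E}$, the plan is to first record a one-step ascent inequality for the barrier. Since the iterates stay in the region where $\hat M_t$ is a valid local smoothness constant and the stepsize obeys $\gamma_t \le 1/\hat M_t$ by construction \eqref{gammat}, the smoothness estimate together with the gradient-accuracy bound yields, on each non-terminating step, $B_\eta^{\theta_{t+1}}(\rho) \ge B_\eta^{\theta_t}(\rho) + \tfrac{\gamma_t}{2}\|\nabla_\theta B_\eta^{\theta_t}(\rho)\|^2 - \mathcal{O}(\gamma_t\varepsilon^2)$, where the estimation cross-term is absorbed because $\varepsilon$ is a small fraction of $\eta \le \|\hat\nabla_\theta B_\eta^{\theta_t}(\rho)\|$ on a non-terminating step. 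The second, boundary-limited, branch of \eqref{gammat} ensures no step leaves the safe region, so this ascent is globally consistent with Proposition \ref{small}.

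The heart of the argument is to turn the degree-one dominance of Lemma \ref{gdm} into a geometric contraction. Writing $\Phi_t := V_0^{\pi^*}(\rho) - V_0^{\theta_t}(\rho)$ and using $V_i^{\theta_t}(\rho) \ge c\eta$ with $c = (\ell(1-\gamma)^2/(4M_g(1+4m/3)))^m$, Lemma \ref{gdm} gives $\|\nabla_\theta B_\eta^{\theta_t}(\rho)\| \ge \tfrac{\mu_F}{M_h}(\Phi_t - \Delta)$ with a floor $\Delta = m\eta + \tfrac{\sqrt{\varepsilon_{bias}}}{1-\gamma}(1 + m/c) = \mathcal{O}(\eta) + \mathcal{O}(\sqrt{\varepsilon_{bias}}/(\ell^m(1-\gamma)^{2m+1}))$, the last order following by substituting the explicit $c$. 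The key observation is that I only need to drive $\Phi_t$ down to level $\Delta + \mathcal{O}(\eta/\mu_F)$: as long as $\Phi_t - \Delta \gtrsim \eta/\mu_F$, one factor $(\Phi_t-\Delta)$ may be replaced by its lower bound $\Omega(\eta/\mu_F)$, upgrading the degree-one bound to an effective Polyak--\L ojasiewicz inequality $\|\nabla_\theta B_\eta^{\theta_t}(\rho)\|^2 \ge \tfrac{\mu_F^2}{M_h^2}(\Phi_t-\Delta)^2 \ge c'\mu_F\eta\,(\Phi_t-\Delta)$. Since the smoothness-limited branch gives $\gamma_t = 1/\hat M_t = \Omega(c^2\eta)$ (using $\hat M_t = \mathcal{O}(L^2/(c^2\eta))$, as $\underline{\alpha}_i(t)$ is of order $c\eta$ and $\overline{\beta}_i(t) = \mathcal{O}(L)$), plugging this into the ascent inequality and controlling the barrier increments $\eta\sum_i(\log V_i^{\theta_{t+1}}(\rho) - \log V_i^{\theta_t}(\rho))$ yields the recursion $(\Phi_{t+1}-\Delta) \le (1 - C\mu_F\eta^2)(\Phi_t-\Delta)$. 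Iterating over $T$ steps produces the transient term $\mathcal{O}(\exp(-C\mu_F T\eta^2))(V_0^{\pi^*}(\rho) - V_0^{\theta_0}(\rho))$.

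Finally I would collect the floor. If the algorithm terminates with $\|\hat\nabla_\theta B_\eta^{\theta_t}(\rho)\| \le \eta/2$, then on $\mathcal{E}$ one has $\|\nabla_\theta B_\eta^{\theta_t}(\rho)\| \le \eta/2 + \varepsilon = \mathcal{O}(\eta)$, so Lemma \ref{gdm} returns $\Phi_t \le \Delta + \mathcal{O}(\eta/\mu_F)$ directly; Lemma \ref{stationl} provides the complementary structural guarantee that such near-stationary points sit $\Omega(\nu_{\text{emf}} + \eta)$ from the boundary, while the trajectory bound $V_i^{\theta_t}(\rho)\ge c\eta$ of Proposition \ref{small} is what produces the $\ell^m$ and $(1-\gamma)^{2m}$ factors. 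The three non-transient contributions of \eqref{red} then correspond to the three pieces of the floor: $\sqrt{\varepsilon_{bias}}/(\ell^m(1-\gamma)^{2m+1})$ from the transfer-error part of $\Delta$, $\eta/\mu_F$ from the residual gradient norm, and $\eta\ln(1/\eta)$ from the accumulated barrier increments $\eta\sum_i\log V_i^{\theta}(\rho)$ over the region where $V_i^{\theta}(\rho)=\Omega(\eta)$. I expect the main obstacle to be the third paragraph: justifying the effective Polyak--\L ojasiewicz upgrade rigorously, i.e. showing that the boundary-dependent smoothness $\hat M_t$ (which blows up like $1/\eta$ near the boundary) interacts with the adaptive stepsize \eqref{gammat} and the degree-one dominance of Lemma \ref{gdm} to produce exactly the exponent $C\mu_F\eta^2$, all while keeping the stochastic and truncation errors of Lemma \ref{gapdelt} strictly below the per-step geometric progress.
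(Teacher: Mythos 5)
Your proposal is, in substance, the paper's own proof: Part 1 is read off from Proposition \ref{small}; the ascent inequality $B_\eta^{\theta_{t+1}}(\rho)-B_\eta^{\theta_t}(\rho)\ge\frac{\gamma_t}{2}\|\nabla_\theta B_\eta^{\theta_t}(\rho)\|^2-\frac{\gamma_t}{2}\|\hat\nabla_\theta B_\eta^{\theta_t}(\rho)-\nabla_\theta B_\eta^{\theta_t}(\rho)\|^2$ is exactly the paper's starting point; combining it with $\gamma_t\ge C\eta$, the trajectory bound $V_i^{\theta_t}(\rho)\ge c\eta$, and Lemma \ref{gdm} produces the same linear recursion and hence the $\exp(-C\mu_F T\eta^2)$ transient; and the three floor terms are collected exactly as you describe (Lemma \ref{stationl} is indeed not needed --- the paper, too, relies only on Proposition \ref{small} here).

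The single point of divergence is how the degree-one dominance is upgraded to a per-step contraction, and your variant leaves a corner uncovered. The paper never conditions on the size of the optimality gap: on every non-terminating step, the algorithmic test $\|\hat\nabla_\theta B_\eta^{\theta_t}(\rho)\|\ge\eta/2$ together with the estimation bound $\|\hat\nabla_\theta B_\eta^{\theta_t}(\rho)-\nabla_\theta B_\eta^{\theta_t}(\rho)\|\le\eta/4$ gives $\|\nabla_\theta B_\eta^{\theta_t}(\rho)\|\ge\eta/4$, hence $\|\nabla_\theta B_\eta^{\theta_t}(\rho)\|^2\ge\frac{\eta}{4}\|\nabla_\theta B_\eta^{\theta_t}(\rho)\|$, and Lemma \ref{gdm} is then applied to the remaining linear factor; so the contraction holds uniformly at all non-terminating steps, and the break case supplies the floor. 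Your split on $\Phi_t-\Delta\gtrsim\eta/\mu_F$ instead leaves unhandled the non-terminating steps at which the gap has already dropped below the floor: there your PL upgrade is vacuous, and since only the barrier --- not $V_0^{\theta_t}(\rho)$ itself --- is monotone along the iterates, nothing you state prevents $\Phi_t$ from drifting back up before iteration $T$ (relevant because $\theta_{\text{out}}=\theta_T$ when no break occurs). The hole is closable with an ingredient you already invoke: barrier monotonicity gives, for $t'>t$, $\Phi_{t'}\le\Phi_t+\eta\sum_{i=1}^{m}\log\bigl(V_i^{\theta_{t'}}(\rho)/V_i^{\theta_t}(\rho)\bigr)\le\Phi_t+m\eta\log\frac{1}{c\eta(1-\gamma)}$, so post-floor drift stays within your $\mathcal{O}(\eta\ln\frac{1}{\eta})$ budget; or simply adopt the paper's split on the termination test, which removes the issue altogether. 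Two smaller remarks: the clean recursion $(\Phi_{t+1}-\Delta)\le(1-C\mu_F\eta^2)(\Phi_t-\Delta)$ cannot hold as stated --- the barrier-increment and $\mathcal{O}(\eta^3)$ error terms enter additively at each step, and it is their geometric sum that is absorbed into the floor, which is how the paper organizes the computation --- and the lower bound $\gamma_t=\Omega(c^2\eta)$ must be verified on both branches of \eqref{gammat}, not only the smoothness-limited one.
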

The proof of Theorem \ref{main} can be found in Appendix~\ref{proofmain}. Inequality \eqref{red} yields the following key insights: the last iterate of the algorithm converges to the neighborhood of the optimal point of {\small\(V_{0}^{\pi^*}(\rho)\)} at a rate of {\small\(\mathcal{O}(\exp(-C\mu_F T\eta^2 ))\)}. Hence, larger values of {\(\mu_F\)} lead to faster convergence. On the other hand, the neighborhood's radius is influenced by two factors: {\(\mu_F\)} and the transfer error {\(\varepsilon_{bias}\)}. A smaller {\(\mu_F\)} corresponds to a less random policy, reducing exploration. A larger {\(\varepsilon_{bias}\)} indicates inadequate policy parameterization. Consequently, smaller {\(\mu_F\)} values and larger {\(\varepsilon_{bias}\)} values prevent the algorithm from reaching the optimal policy. Therefore, {\(\mu_F\)} controls optimality and convergence speed.\looseness-1

Based on Theorem \ref{main}, we can determine the sample complexity of the algorithm required to ensure safe exploration and achieve $\varepsilon$-optimality, as stated in the following corollary, whose proof is provided in Appendix \ref{proofcor1}.\looseness-1
\begin{corollary} \label{cor1}
    Given {\small$\varepsilon\!\!>\!\!0$}, the sample complexity of Algorithm~\ref{alg:cap} to return an {\small\({\mathcal{O}}\left(\varepsilon\ln{\varepsilon}^{-1}+{\varepsilon}{\mu_F}^{-1}\right) + \mathcal{O}\left({{\varepsilon_{\text{bias}}}}^{{1}/{2}}\ell^{-m}{(1-\gamma)^{-2m-1}}\right)\)}-optimal policy while ensuring safe exploration with high probability is {\small${\mathcal{O}}\left({\mu_F^{-1}\varepsilon^{-6}\ell^{-6m} (1-\gamma)^{-10-12m} }\ln{\ell}^{-1}\ln^3{\varepsilon}^{-1}\right)$}.
\end{corollary}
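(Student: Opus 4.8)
The plan is to specialize the last-iterate optimality bound \eqref{red} of Theorem \ref{main} by choosing the barrier parameter $\eta$ and the iteration count $T$ so that every reducible term on its right-hand side is driven down to the $\mathcal{O}(\varepsilon)$ scale, while the irreducible approximation term $\mathcal{O}(\sqrt{\varepsilon_{\text{bias}}}\,\ell^{-m}(1-\gamma)^{-2m-1})$ is left untouched, and then to multiply the resulting $T$ by the per-iteration sample cost $n\cdot H$ dictated by Proposition \ref{small}.

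First I would set $\eta = \Theta(\varepsilon)$. This turns $\mathcal{O}(\eta/\mu_F)$ into $\mathcal{O}(\varepsilon/\mu_F)$ and $\mathcal{O}(\eta\ln\frac1\eta)$ into $\mathcal{O}(\varepsilon\ln\varepsilon^{-1})$, which together reproduce the claimed $\mathcal{O}(\varepsilon\ln\varepsilon^{-1}+\varepsilon\mu_F^{-1})$ reducible optimality, and leaves the bias term exactly in the stated form. For the remaining exponentially decaying term, I would bound the initial suboptimality gap $V_0^{\pi^*}(\rho)-V_0^{\theta_0}(\rho)=\mathcal{O}((1-\gamma)^{-1})$ using $r_0\in[0,1]$, and then require $\exp(-C\mu_F T\eta^2)=\mathcal{O}(\varepsilon)$. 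Solving for $T$ gives $T=\Omega\!\left(\frac{\ln\varepsilon^{-1}}{C\mu_F\eta^2}\right)=\Omega\!\left(\frac{\ln\varepsilon^{-1}}{C\mu_F\varepsilon^2}\right)$.

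The crux is then to expose the dependence of the constant $C$ on $\ell$ and $(1-\gamma)$. Using its definition in Theorem \ref{main} together with $c=\Theta\big((\ell(1-\gamma)^2)^m\big)$ from Proposition \ref{small}, $L^2=\Theta((1-\gamma)^{-4})$, and $M=\Theta((1-\gamma)^{-2})$, I would verify that in the relevant regime of small $\ell$ and small $(1-\gamma)$ the ratio $Mc/L^2\to 0$, so the $\max\{\cdot\}$ factor is $\Theta(1)$ and $1+\frac mc\approx \frac mc$; the dominant behaviour is therefore $C=\Theta(c^2/L^2)$, yielding $C^{-1}=\mathcal{O}(\ell^{-2m}(1-\gamma)^{-4m-4})$. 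Substituting this into the bound for $T$ gives $T=\mathcal{O}\!\left(\mu_F^{-1}\varepsilon^{-2}\ell^{-2m}(1-\gamma)^{-4m-4}\ln\varepsilon^{-1}\right)$.

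Finally I would assemble $T\cdot n\cdot H$. Evaluating Proposition \ref{small} at $\eta=\Theta(\varepsilon)$ gives $n=\mathcal{O}(\varepsilon^{-4}\ell^{-4m}(1-\gamma)^{-6-8m}\ln\delta^{-1})$ and $H=\mathcal{O}(\ln(\ell\varepsilon)^{-1})$. Collecting exponents reproduces $\varepsilon^{-6}$ (from $\varepsilon^{-2}\cdot\varepsilon^{-4}$), $\ell^{-6m}$ (from $\ell^{-2m}\cdot\ell^{-4m}$), $(1-\gamma)^{-10-12m}$ (from $(1-\gamma)^{-4m-4}\cdot(1-\gamma)^{-6-8m}$), and the single $\mu_F^{-1}$; the union bound over the $mT$ events forces $\delta=\Theta(1/(mT))$, so $\ln\delta^{-1}=\mathcal{O}(\ln\varepsilon^{-1}+\ln\ell^{-1})$, and combining this with the $\ln\varepsilon^{-1}$ factors carried by $T$ and $H$ and bounding crudely yields the stated $\ln\ell^{-1}\ln^3\varepsilon^{-1}$ logarithmic factor. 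The main obstacle I anticipate is the bookkeeping of the third step: because $c$ enters $C$ as an $m$-th power and couples to both $L^2$ and the $\max$ term, tracking the precise exponents of $\ell$ and $(1-\gamma)$ in $C^{-1}$, and hence in $T$, is delicate, and it is exactly this step that determines whether the final exponents $-6m$ and $-10-12m$ emerge correctly.
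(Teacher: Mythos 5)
Your proposal is correct and follows essentially the same route as the paper's proof: set $\eta=\Theta(\varepsilon)$, take $n$ and $H$ from Proposition~\ref{small}, choose $T=\mathcal{O}\left(\mu_F^{-1}\varepsilon^{-2}\ell^{-2m}(1-\gamma)^{-4m-4}\ln\varepsilon^{-1}\right)$ to drive the exponential term in Theorem~\ref{main} down to $\mathcal{O}(\varepsilon)$, multiply $T\cdot n\cdot H$, and set $\delta=\Theta(1/(mT))$ for the union bound. The only difference is presentational: the paper states this $T$ directly, whereas you explicitly carry out the bookkeeping showing $C^{-1}=\mathcal{O}\left(\ell^{-2m}(1-\gamma)^{-4m-4}\right)$ via $c=\Theta\left((\ell(1-\gamma)^2)^m\right)$, a computation the paper leaves implicit.
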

When compared to the state-of-the-art policy gradient-based algorithm, C-NPG-PDA \cite{bai2022achieving2}, which only provides guarantees for averaged zero constraint violation assuming Fisher non-degeneracy, our algorithm demands an additional {\small\(\mathcal{O}(\varepsilon^{-2})\)} samples only assuming relaxed Fisher non-degeneracy. This increase in sampling requirement serves as the price for ensuring safe exploration.

\section{Computational Experiment}

While our work primarily focuses on establishing theoretical guarantees for safe exploration in CMDPs, we validate the performance of our algorithm against benchmark algorithms in Appendix \ref{expri}. Since our paper emphasizes the feasibility of iterates rather than bounding average constraint violations, we compare with two typical CMDP learning algorithms: (1) the IPO algorithm \cite{liu2020ipo}, which combines the log barrier method with a policy gradient approach and fixed stepsize, and (2) the RPG-PD algorithm~\cite{ding2024last}, a primal-dual method that guarantees the feasibility of the last iterate, using an entropy-regularized policy gradient and quadratic-regularized gradient ascent for the dual variable. 

As RPG-PD is designed for discrete action spaces, we conducted experiments in a standard gridworld environment with softmax parameterization. Our experiments show that both IPO and RPG-PD ensure safe exploration through manual stepsize tuning, LB-SGD achieves safe exploration using adaptive stepsizes estimated from samples, eliminating the need for tuning.  Moreover, LB-SGD is more sample-efficient in finding the optimal policy compared to RPG-PD, which aligns with theoretical results, as RPG-PD requires \( \mathcal{O}(\varepsilon^{-14}) \) samples to find an \( \mathcal{O}(\varepsilon) \)-optimal policy, while LB-SGD requires only \( \Tilde{\mathcal{O}}(\varepsilon^{-6}) \) samples.\looseness-1

Our experiment confirmed that LB-SGD achieves safe exploration while converges to the optimal policy efficiently. However, as expected, ensuring these guarantees necessitates a higher number of samples per iteration near the boundary for accurate estimates. It would be interesting to determine whether this sample complexity is inherent to our algorithm and its analysis or to the safe exploration requirement. \looseness-1
\section{Conclusion}\label{conclusion}
We employed a log barrier policy gradient approach for ensuring safe exploration in CMDPs. Our work establishes the convergence of the algorithm to an optimal point and characterizes its sample complexity. A potential direction for future research is to explore methods that can further reduce the sample complexity of safe exploration. This could involve incorporating variance reduction techniques, leveraging MDP structural characteristics (e.g., natural policy gradient method), and extending the relaxed Fisher non-degenerate parameterization to general policy representations. Another potential research avenue is to establish lower bounds for the safe exploration problem.

\section{Acknowledgments}
This research is gratefully supported by the Swiss National Science Foundation (SNSF).
\bibliographystyle{apalike}
\bibliography{main}   
\onecolumn
\appendix
\section{Comparison of model-free safe RL algorithms} 
\label{distabel}
Regarding the past work on policy gradient in infinite horizon discounted CMDPs, we further provide details on the notion of constraint satisfaction. To this end, we provide an extended version of Table \ref{table:comparison} to include the assumptions. 
\begin{table*}[t] 
\centering
\caption[]{Sample complexity for achieving $\varepsilon$-optimal objectives with guarantees on constraint violations in stochastic policy gradient-based algorithms, considering various parameterizations for discounted infinite horizon CMDPs.}
 \resizebox{\textwidth}{!}{\begin{tabular}{lllllll}
    \toprule
    \multicolumn{7}{c}{Stochastic policy gradient-based algorithms}    \\
    \cmidrule(r){1-7}
    Parameterization    & Algorithm     & Sample complexity & Constraint violation & Optimality& Generative model&Slater's condition \\
    \midrule 
     Softmax & NPG-PD \cite{ding2020natural} & ${\mathcal{O}}(\varepsilon^{-2})$ &  Averaged $\mathcal{O}(\varepsilon)$ & Average&$\checkmark$ & $\checkmark$ \\ 
      Softmax & PD-NAC \cite{zeng2022finite} & ${\mathcal{O}}(\varepsilon^{-6})$ &  Averaged $\mathcal{O}(\varepsilon)$ & Average&$\times$ & $\checkmark$ \\  
    Neural softmax(ReLu)& CRPO \cite{xu2021crpo} & ${\mathcal{O}}(\varepsilon^{-6})$ & Averaged ${\mathcal{O}}(\varepsilon)$ & Average&$\checkmark$& $\times$\\ 
    Log-linear & RPG-PD \cite{ding2024last} & ${\Tilde{\mathcal{O}}}(\varepsilon^{-14})$ & 0 at last iterate & Last iterate&$\checkmark$& $\checkmark$\\ 
    Fisher non-degenerate & PD-ANPG \cite{mondal2024sample} & $\Tilde{\mathcal{O}}(\varepsilon^{-3})$ &  Averaged ${\mathcal{O}}(\varepsilon)$& Average &$\checkmark$&$\checkmark$\\ 
    Fisher non-degenerate & C-NPG-PDA \cite{bai2022achieving2} & $\Tilde{\mathcal{O}}(\varepsilon^{-4})$ & Averaged zero& Average &$\times$& $\checkmark$\\ 
    Relaxed Fisher non-degenerate & LB-SGD \cite{usmanova2022log} & $\bm{\Tilde{\mathcal{O}}(\varepsilon^{-6})}$ & \textbf{Safe exploration w.h.p } & Last iterate&$\times$& $\checkmark$\\
    \bottomrule
  \end{tabular}}
\end{table*}
\begin{enumerate}
\item (Slater’s condition) Compared to Table \ref{table:comparison}, the above table includes an additional column detailing the assumptions required for convergence analysis. LB-SGD, unlike all the other methods in the table, requires a feasible initial policy since our work is the only one that focuses on safe exploration.
\item (Constraint violation) In our work, we define safe exploration as ensuring constraint satisfaction throughout the learning process, as defined in the property \ref{safedef}. Our LB-SGD algorithm achieves safe exploration with high probability. However, in \cite{bai2022achieving2}, the authors claim to achieve zero constraint violation but employ a different definition, specified as:
\[\frac{1}{T}\sum_{t=0}^{T-1}V_{i}^{\theta_t}(\rho)\ge 0,\,\forall i\in[m].\]
It is important to note that while their algorithm aims for zero constraint violation, individual iterates during the learning process may still violate the constraints. Hence, we refer to it as an averaged zero constraint violation, since safe exploration represents a stronger notion of constraint violation guarantees. Additionally, in \cite{ding2024last}, the regularized policy gradient primal-dual (RPG-PD) algorithm returns the last iterate policy satisfying the constraints, but it does not provide guarantees for safe exploration.
\item (Sample complexity) In the constraint-rectified policy optimization (CRPO) algorithm \cite{xu2021crpo}, the authors provide a general result for measuring the algorithm's performance in \cite[Theorem 2]{xu2021crpo}. We conclude that $\mathcal{O}(\varepsilon^{-6})$ is the optimal sample complexity for achieving an $\mathcal{O}(\varepsilon)$ optimality gap for the CRPO algorithm.
\end{enumerate}

\section{Discussion on Assumption \ref{emf}}\label{disemf1}
 
\subsection{Sufficient conditions for Assumption \ref{emf}}
\label{disem}
In this section, we first study the relationship between the extended MFCQ assumption and the MFCQ assumption. Let us state the MFCQ assumption \cite{mangasarian1967fritz} below.

\begin{assumption}[MFCQ \cite{mangasarian1967fritz}]\label{FCQ}
For every $\theta\in\Theta'$, where $\Theta'=\{\theta\in\Theta\mid \exists i\in[m], V_i^{\theta}(\rho)=0\}$, there exists a direction $s_{\theta}$ such that $\left\langle s_{\theta}, \nabla V_{i}^{\theta}(\rho) \right\rangle > 0$ for all $i \in \mathbf{B}_{0}(\theta):=\left\{i\in [m]\,|\, V_{i}^{\theta}(\rho)=0 \right\}$. 
\end{assumption}
Let us define
\begin{align*}
    \ell_{\theta}&:=\min_{i \in \B_{0}(\theta)}\left\{\left\langle \frac{s_{\theta}}{\|s_{\theta}\|}, \nabla V_{i}^{\theta}(\rho) \right\rangle \right\},\\
    \nu_{\theta}&:=\begin{cases}
\min_{i\in\left\{[m]\setminus \B_0(\theta)\right\}}\left\{V_i^{\theta}(\rho)\right\},\quad &\left\{[m]\setminus \B_0(\theta)\right\} \neq \emptyset, \\
\frac{1}{1-\gamma},\quad &{\text{otherwise}.}
\end{cases}
\end{align*}
Under the MFCQ assumption, for $\theta\in\Theta'$, both $\ell_{\theta}$ and $\nu_{\theta}$ are strictly positive. Now, we argue that under Assumption \ref{smoli}, we can establish a relationship between the MFCQ assumption and the extended MFCQ assumption.
\begin{proposition}\label{emfdis}
Let Assumptions \ref{smoli} and \ref{FCQ} hold. Set $\ell:=\inf_{\theta \in\Theta'}\left\{\frac{\ell_{\theta}}{2}\right\}$ and $\nu_{1}:=\inf_{\theta \in \Theta'}\left\{\frac{\nu_{\theta}}{3}\right\}$. If $\ell,\nu_{1} >0$, then Assumption \ref{emf}, namely, the extended MFCQ Assumption, holds.
\end{proposition}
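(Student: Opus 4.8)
\textbf{Proof proposal for Proposition \ref{emfdis}.}

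The plan is to show that the extended MFCQ assumption holds by producing, for each $\theta \in \Theta$ that is $\nu_{\text{emf}}$-close to the boundary, a unit direction $s_\theta$ with uniform positive inner product against all the near-active constraint gradients. The natural candidate is to borrow the MFCQ direction from a nearby boundary point: given $\theta \in \Theta$ with $\mathbf{B}_{\nu_{\text{emf}}}(\theta) \neq \emptyset$, I would locate a boundary point $\theta' \in \Theta'$ close to $\theta$ and use (a normalization of) its MFCQ direction $s_{\theta'}$ as the direction $s_\theta$. The two quantities $\ell_{\theta'}$ and $\nu_{\theta'}$ defined before the proposition measure, respectively, the strength of the MFCQ direction at $\theta'$ and the gap to the inactive constraints; the infima $\ell$ and $\nu_1$ make these bounds uniform over $\Theta'$.

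The key steps, in order, are as follows. First I would fix $\nu_{\text{emf}}$ to be a sufficiently small constant, chosen in terms of $\ell$, $\nu_1$, and the Lipschitz/smoothness constants $L, M$ from Proposition \ref{prosmo} (and bounded above by $\nu_s$ as the assumption requires); I expect something like $\nu_{\text{emf}} = \min\{\nu_1, \ell/(2M \cdot \text{something}), \nu_s\}$. Second, for an arbitrary $\theta \in \Theta$ with some constraint in $\mathbf{B}_{\nu_{\text{emf}}}(\theta)$, I would identify a boundary point $\theta' \in \Theta'$ within distance $O(\nu_{\text{emf}})$ of $\theta$ — using continuity of the $V_i^\theta(\rho)$ and the fact that $V_i^\theta(\rho)$ is small but positive forces $\theta$ to be near the zero level set. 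Third, I would set $s_\theta := s_{\theta'}/\|s_{\theta'}\|$ so $\|s_\theta\| = 1$, and verify $\langle s_\theta, \nabla V_i^\theta(\rho)\rangle > \ell$ for each $i \in \mathbf{B}_{\nu_{\text{emf}}}(\theta)$. This last verification is where the pieces combine: at $\theta'$ the index $i$ is genuinely active (so $\langle s_\theta, \nabla V_i^{\theta'}(\rho)\rangle \ge \ell_{\theta'} \ge 2\ell$ by definition of the infimum), and $M$-smoothness of $V_i^\theta(\rho)$ controls the gradient perturbation $\|\nabla V_i^\theta(\rho) - \nabla V_i^{\theta'}(\rho)\| \le M\|\theta - \theta'\| = O(M\nu_{\text{emf}})$, which the choice of $\nu_{\text{emf}}$ makes smaller than $\ell$, leaving $\langle s_\theta, \nabla V_i^\theta(\rho)\rangle > \ell$. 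The role of $\nu_1$ is to guarantee that any index $i$ active at $\theta'$ is indeed near-active at $\theta$ (and conversely, indices in $\mathbf{B}_{\nu_{\text{emf}}}(\theta)$ correspond to indices in $\mathbf{B}_0(\theta')$), so that the MFCQ direction at $\theta'$ covers exactly the constraints we need.

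The main obstacle I anticipate is the matching of active index sets across the two points: I must ensure that every constraint in $\mathbf{B}_{\nu_{\text{emf}}}(\theta)$ is genuinely active at the chosen $\theta'$ (i.e.\ lies in $\mathbf{B}_0(\theta')$), while constraints that are inactive at $\theta'$ stay safely away from zero at $\theta$. This is precisely what $\nu_1 = \inf_\theta \{\nu_{\theta}/3\}$ buffers against — by separating the active constraints from the inactive ones by a uniform margin, and by choosing $\nu_{\text{emf}} \le \nu_1$, the perturbation from moving between $\theta$ and $\theta'$ cannot cause an inactive constraint to masquerade as active or vice versa. A secondary technical point is the existence of the nearby boundary point $\theta'$; if a clean projection onto $\Theta'$ is awkward, one can instead move from $\theta$ along a descent direction of the active constraints until hitting the zero level set, invoking Lipschitz continuity of $V_i^\theta(\rho)$ to keep the displacement $O(\nu_{\text{emf}})$. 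Once these bookkeeping facts are in place, the smoothness-based gradient comparison is routine.
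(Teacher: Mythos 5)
Your overall skeleton matches the paper's proof: borrow the MFCQ direction $s_{\theta'}$ from a boundary point $\theta'\in\Theta'$, use $M$-smoothness (Proposition \ref{prosmo}) to transfer the bound $\ell_{\theta'}\ge 2\ell$ on $\langle s_{\theta'},\nabla V_i^{\theta'}(\rho)\rangle$ to the nearby point $\theta$, and use $\nu_1$ to keep the active index sets matched. However, there is a genuine gap in your second step: you assert that if some $V_i^{\theta}(\rho)$ is small but positive, then ``continuity'' forces $\theta$ to lie within $O(\nu_{\text{emf}})$ of a boundary point $\theta'\in\Theta'$. This is false in general. A constraint can be nearly flat at a small positive value (or have a strictly positive local minimum) in a region far from its zero level set; Assumption \ref{FCQ} constrains gradients only \emph{on} $\Theta'$, so nothing rules this out. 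Your fallback does not repair this: to bound the displacement needed to descend from value $\nu_{\text{emf}}$ down to $0$, Lipschitz continuity gives the \emph{lower} bound $\nu_{\text{emf}}/L$ on the distance travelled, not an upper bound; an upper bound requires the gradient along the descent path to be bounded away from zero, which is exactly the kind of information you only have near the boundary (and the descent can stall at a positive local minimum, never reaching $\Theta'$).

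The paper circumvents this by never proving ``small value $\Rightarrow$ near boundary.'' Instead, it defines for each $\theta\in\Theta'$ the neighborhood $\mathcal{R}(\theta):=\{\theta_1\in\Theta \mid \|\theta_1-\theta\|\le\min\{\frac{\ell_\theta}{2M},\frac{\nu_\theta}{3L}\}\}$, shows by your same Lipschitz/smoothness bookkeeping that inside $\mathcal{R}(\theta)$ one has $\mathbf{B}_{\nu_\theta/3}(\theta_1)\subset \mathbf{B}_0(\theta)$ and $\langle \frac{s_\theta}{\|s_\theta\|},\nabla V_i^{\theta_1}(\rho)\rangle\ge\frac{\ell_\theta}{2}\ge\ell$, and then introduces the additional quantity
\begin{align*}
\nu_2:=\inf\Bigl\{V_i^\theta(\rho),\, i\in[m] \;\Big|\; \theta\in\Theta\setminus \bigcup_{\theta'\in\Theta'}\mathcal{R}(\theta')\Bigr\},
\end{align*}
setting $\nu_{\text{emf}}=\min\{\nu_1,\frac{\nu_2}{2}\}$. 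With this choice, any point whose constraint value drops below $\nu_{\text{emf}}$ lies inside some $\mathcal{R}(\theta')$ \emph{by definition of $\nu_2$}, which is precisely the proximity fact you tried to get from continuity. In particular, your plan to choose $\nu_{\text{emf}}$ purely from $\ell,\nu_1,L,M,\nu_s$ (e.g.\ $\min\{\nu_1,\ell/(2M\cdot\text{something}),\nu_s\}$) cannot work: a quantity like $\nu_2$, measuring how small the constraints can get \emph{away} from the boundary neighborhoods, must enter the definition. (The paper asserts $\nu_2>0$ without proof — it holds, e.g., under compactness, cf.\ Corollary \ref{equal} — but its logical structure avoids the false implication on which your argument rests.)
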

\begin{proof}[Proof of Proposition \ref{emfdis}]
Under Assumption \ref{smoli}, we know that $V_i^\theta(\rho)$ is $L$-Lipschitz continuous and $M$-smooth, as shown in Proposition \ref{prosmo}. For each $\theta\in \Theta'$, consider $\theta_1\in \mathcal{R}(\theta) := \left\{\theta_1\mid \theta_1\in \Theta, \, \|\theta_1-\theta\|\le\min\{\frac{\ell_{\theta}}{2M}, \frac{\nu_{\theta}}{3L}\}\right\}$. For $i \in \B_0(\theta)$, we have
\begin{align*}
V_i^{\theta_1}(\rho) &\le V_i^{\theta}(\rho) + L\|\theta-\theta_1\| \le \frac{\nu_{\theta}}{3}.
\end{align*}
For $i \notin \B_0(\theta)$, we have
\begin{align*}
V_i^{\theta_1}(\rho) &\ge V_i^{\theta}(\rho) - L\|\theta-\theta_1\| \ge \frac{2\nu_{\theta}}{3}.
\end{align*}
Therefore, we can conclude that $\B_{\frac{\nu_{\theta}}{3}}(\theta_1) \subset \B_0(\theta)$ for $\theta_1\in \mathcal{R}(\theta)$. Next, we apply Assumption \ref{FCQ} on $\theta_1\in \mathcal{R}(\theta)$, we have for each $i \in \B_{\frac{\nu_{\theta}}{3}}(\theta_1)$
\begin{align*}
\left\langle \frac{s_{\theta}}{\|s_{\theta}\|}, \nabla V_{i}^{\theta_1}(\rho) \right\rangle 
=& \left\langle \frac{s_{\theta}}{\|s_{\theta}\|}, \nabla V_{i}^{\theta}(\rho) \right\rangle+\left\langle \frac{s_{\theta}}{\|s_{\theta}\|}, \nabla V_{i}^{\theta_1}(\rho) - \nabla V_{i}^{\theta}(\rho) \right\rangle\\
\ge &\ell_{\theta}-\left\|\nabla V_{i}^{\theta_1}(\rho) - \nabla V_{i}^{\theta}(\rho)\right\|\\
\ge &\ell_{\theta}-M\|\theta-\theta_1\|\\
\ge& \frac{\ell_{\theta}}{2}.
\end{align*} 
We further set $\nu_2$ as
\begin{align*}
\nu_2:=\inf\left\{V_i^\theta(\rho),i\in[m]\mid \theta \in \Theta\setminus \bigcup_{\theta\in\Theta'}\mathcal{R}(\theta)\right\}.
\end{align*}
Notice that $\nu_2 > 0$, we set $\nu_{emf}=\min\left\{\nu_1,\frac{\nu_2}{2}\right\}>0.$ Therefore, for any $\theta_2\in \Theta$, we have $\B_{\nu_{emf}}(\theta_2)\subset \bigcup_{\theta\in\Theta'} \B_{0}(\theta)$, since $\theta_2$ must be close to one of the $\theta$ in $\Theta'$ since $V_i^\theta(\rho)$ is a continuous function for $i\in[m]$. Consequently, we have
\begin{align*}
\left\langle \frac{s_{\theta}}{\|s_{\theta}\|}, \nabla V_{i}^{\theta}(\rho) \right\rangle \ge {\ell}.
\end{align*}
Therefore, Assumption \ref{emf} holds with such $\nu_{emf}$ and $\ell$.
\end{proof}
\begin{corollary}\label{equal}
    If $\Theta'$ is compact, Assumptions \ref{smoli} and \ref{FCQ} imply Assumption \ref{emf}.
\end{corollary}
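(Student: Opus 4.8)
The plan is to establish Corollary \ref{equal} by invoking Proposition \ref{emfdis} and showing that compactness of $\Theta'$ guarantees the strict positivity conditions $\ell > 0$ and $\nu_1 > 0$ that Proposition \ref{emfdis} requires as hypotheses. Recall that Proposition \ref{emfdis} already derives Assumption \ref{emf} from Assumptions \ref{smoli} and \ref{FCQ} \emph{provided} that $\ell:=\inf_{\theta\in\Theta'}\{\ell_\theta/2\}$ and $\nu_1:=\inf_{\theta\in\Theta'}\{\nu_\theta/3\}$ are strictly positive. Since the MFCQ assumption \ref{FCQ} only asserts that each \emph{individual} $\ell_\theta$ and $\nu_\theta$ is positive, the content of the corollary is precisely that compactness upgrades these pointwise positivity statements to a \emph{uniform} positive lower bound over $\Theta'$. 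So the whole proof reduces to this uniformity argument.

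First I would argue that $\nu_1 > 0$. The quantity $\nu_\theta$ is, by definition, either the minimum of the strictly positive constraint values $V_i^\theta(\rho)$ over the non-active indices, or the constant $\tfrac{1}{1-\gamma}$. Under Assumption \ref{smoli}, Proposition \ref{prosmo} gives that each $V_i^\theta(\rho)$ is $L$-Lipschitz, hence continuous, so $\theta\mapsto\nu_\theta$ is at worst a minimum of finitely many continuous functions, which is continuous (or is handled by the constant branch). A continuous strictly positive function on a compact set $\Theta'$ attains its positive minimum, so $\inf_{\theta\in\Theta'}\nu_\theta>0$, giving $\nu_1>0$. The subtle point to check is that the branch/index set $[m]\setminus\mathbf{B}_0(\theta)$ can change with $\theta$, so $\nu_\theta$ need not be globally continuous; I would handle this by noting that at any $\theta$ the active set is nonempty (by definition of $\Theta'$), and that along any sequence $\theta_k\to\theta$ the limit value remains bounded below by $\min_i\{V_i^\theta(\rho): V_i^\theta(\rho)>0\}>0$ together with the $\tfrac{1}{1-\gamma}$ fallback, so the infimum over the compact set cannot be zero.

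Next I would argue that $\ell > 0$, which is the more delicate half. Here the obstacle is that the witnessing direction $s_\theta$ in the MFCQ assumption is not asserted to vary continuously or even measurably in $\theta$, so one cannot directly claim $\theta\mapsto\ell_\theta$ is continuous. The plan is to use a compactness/covering argument: for each fixed $\theta\in\Theta'$, MFCQ gives a unit direction $s_\theta$ with $\langle s_\theta/\|s_\theta\|,\nabla V_i^\theta(\rho)\rangle>0$ for all active $i$, so $\ell_\theta>0$; by the $M$-smoothness of $V_i$ (Proposition \ref{prosmo}), the \emph{same} direction continues to give a positive inner product $\geq \ell_\theta/2$ on a neighborhood of $\theta$ and for the locally relevant active constraints, exactly as computed inside the proof of Proposition \ref{emfdis} via $\langle s_\theta/\|s_\theta\|,\nabla V_i^{\theta_1}(\rho)\rangle \ge \ell_\theta - M\|\theta-\theta_1\|\ge \ell_\theta/2$. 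This produces an open cover of $\Theta'$ by neighborhoods on each of which the infimum of the relevant inner products is bounded below by a positive constant; extracting a finite subcover by compactness yields a single uniform positive lower bound, so $\ell=\inf_{\theta\in\Theta'}\{\ell_\theta/2\}>0$.

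Having established both $\ell>0$ and $\nu_1>0$, I would conclude by directly applying Proposition \ref{emfdis}, whose hypotheses are now all verified, to obtain Assumption \ref{emf}. I expect the \textbf{main obstacle} to be the $\ell>0$ step, specifically controlling the possibly discontinuous dependence of the MFCQ direction $s_\theta$ on $\theta$; the covering argument above circumvents this by exploiting that each fixed direction remains a valid ascent direction on a whole smoothness-controlled neighborhood, so one never needs continuity of $\theta\mapsto s_\theta$ itself. A secondary technical care-point is ensuring the active-set inclusion $\mathbf{B}_{\nu_\theta/3}(\theta_1)\subset\mathbf{B}_0(\theta)$ holds uniformly across the finite subcover, which again follows from the Lipschitz bound in Proposition \ref{prosmo} exactly as in the proof of Proposition \ref{emfdis}.
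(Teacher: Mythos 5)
Your reduction is the same as the paper's: both proofs invoke Proposition \ref{emfdis} and argue that compactness of $\Theta'$ upgrades the pointwise positivity of $\ell_\theta$ and $\nu_\theta$ to the uniform bounds $\ell>0$, $\nu_1>0$. For the $\ell>0$ half your route is genuinely different: the paper argues by contradiction, extracting a sequence $\theta_i$ with $\ell_{\theta_i}\to 0$, pigeonholing an index $j$ that is active infinitely often, and passing to a limit point where $\nabla_\theta V_j^{\theta_{lim}}(\rho)=0$ and $V_j^{\theta_{lim}}(\rho)=0$, contradicting MFCQ; you instead cover $\Theta'$ by smoothness-controlled balls on which a fixed MFCQ direction retains margin $\ell_\theta/2$ and extract a finite subcover. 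Your version is sound (provided, as you note, each radius is also taken small relative to $\nu_\theta/L$ so that the active set of any point in a ball is contained in the active set of its centre), and it actually avoids a weak step in the paper: the inference that $\ell_{\theta_i}\to 0$ forces $\sum_{j\in\mathbf{B}_0(\theta_i)}\|\nabla_\theta V_j^{\theta_i}(\rho)\|\to 0$ is not valid in general, since the best-direction margin can also collapse because active gradients become increasingly conflicting while their norms stay bounded away from zero.

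The genuine gap is in your $\nu_1>0$ step, precisely at the point you flag and then dismiss. The map $\theta\mapsto\nu_\theta$ is not lower semicontinuous: if a constraint $j$ is inactive along $\theta_k\to\theta$ but active at $\theta$, then $\nu_{\theta_k}\le V_j^{\theta_k}(\rho)\to V_j^{\theta}(\rho)=0$, whereas your claimed lower bound $\min_i\{V_i^{\theta}(\rho):V_i^{\theta}(\rho)>0\}$ excludes $j$ and stays positive; so ``the infimum cannot be zero'' does not follow. In fact no argument can close this gap, because the claim itself is false under the stated hypotheses: take, locally near the origin (suitably truncated so that $\Theta'=\{0\}\times[-1,1]$ is compact), $V_1^{\theta}=\theta_1$ and $V_2^{\theta}=\theta_1+\theta_2^2$. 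MFCQ holds on all of $\Theta'$ with $s_\theta=(1,0)$ (both inner products equal $1$), yet at $\theta=(0,t)$, $t\neq 0$, we have $\mathbf{B}_0(\theta)=\{1\}$ and $\nu_\theta=t^2$, so $\inf_{\theta\in\Theta'}\nu_\theta=0$. You are in good company here --- the paper disposes of this step with ``the same analysis applies,'' which fails for the same reason: the limit point one extracts merely has one additional active constraint, which contradicts nothing in MFCQ --- but as a standalone proof, yours does not establish $\nu_1>0$, and any correct proof of the corollary would have to bypass the $\nu_1$ hypothesis of Proposition \ref{emfdis}, e.g.\ by working directly with near-active sets $\mathbf{B}_p(\cdot)$ rather than exact active sets.
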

\begin{proof}[Proof of Corollary \ref{equal}]
From Proposition \ref{emfdis}, it is sufficient to prove that $\ell,\nu_1$ defined in the above proposition are positive. We will prove this by contradiction.

Let us begin by proving $l > 0$ by contradiction. Suppose $\ell = 0$. This implies the existence of a series of points $\{\theta_i\}_{i=1}^{\infty}$ in $\Theta'$ such that
\[
\lim_{i\to \infty} \ell_{\theta_i} = 0.
\]
Using the definition of $\ell_{\theta_i}$, we have
\begin{align*}
   \lim_{i\to \infty} \sum_{j\in \B_0(\theta_i)}\left\|\nabla_\theta V_j^{\theta_i}(\rho)\right\|=0.
\end{align*}
Let $k_j := \sum_{i=1}^{\infty}\mathbf{1}_{j\in \B_0(\theta_i)}$ for $j\in[m]$. Since $\sum_{j=1}^{m}k_j = \sum_{i=1}^{\infty}\sum_{j=1}^{m}\mathbf{1}_{j\in \B_0(\theta_i)} = \sum_{i=1}^{\infty}\left|\B_0(\theta_i)\right|=\infty$, there exists a $j\in[m]$ such that $k_j=\infty$. We choose a subset of indices $\{i\}$ as $\{i_j\}$ such that $j\in \B_0(\theta_{i_j})$. Then, we have
\begin{align*}
   \lim_{i_j\to \infty}\|\nabla_\theta V_j^{\theta_{i_j}}(\rho)\|=0.
\end{align*}
Since $\Theta'$ is a compact set, there exists a $\theta_{lim}$ such that $\lim_{i_j\to\infty} \theta_{i_j}=\theta_{lim}$. For such $\theta_{lim}$, we have
\begin{align*}
    \left\|\nabla_\theta V_j^{\theta_{lim}}(\rho)\right\|=0 \quad \text{and} \quad V_j^{\theta_{lim}}(\rho)=0.
\end{align*}
However, this contradicts Assumption \ref{FCQ}. The same analysis applies for $\nu_1=0$, leading to a contradiction as well. 
\end{proof}

\textbf{Discussion } Notice that if there is only one constraint in the CMDP, this assumption is satisfied trivially. For the more general case with more than one constraint, in this section, we proved that the MFCQ assumption, which is weaker than Assumption 4.2, can ensure the satisfaction of Assumption \ref{emf} if our feasible policy parameterization set $ \{\theta \mid V_i^\theta(\rho) \ge 0\} $ is compact. This compactness can be achieved through certain policy parameterizations. For example, we can use direct parameterization or limit the policy parameterized set to $ \{\pi_\theta \mid \theta \in \mathbf{K}\} $, where $ \mathbf{K} $ is a compact set in $ \mathbf{R}^d $.



\subsection{Impact of Assumption \ref{emf}}
\label{disimpactemf}
The extended MFCQ assumption ensures that for every point $\theta$ lying on the boundary, there exists a trajectory that guides $\theta$ away from the boundary. Essentially, the extended MFCQ assumption prevents the algorithm from becoming trapped at the boundary, assuming a reasonable policy exists to guide the system back within the feasible region. When the CMDP structure lacks this property, safe exploration becomes more challenging to achieve. To illustrate this point, we provide the following theorem.
\begin{theorem}\label{without}
Let Assumptions \ref{smoli}, \ref{sl}, \ref{fn}, and \ref{bae} hold, and set $\eta<\nu_{emf}$, $H = \Tilde{\mathcal{O}}(\frac{1}{\eta})$ and $n = \mathcal{O}\left(\exp\frac{4}{\eta}\ln \frac{1}{\delta}\right)$ and $T = \mathcal{O}\left(\exp\frac{2}{\eta}\right)$. 
After $T$ iterations of the Algorithm \ref{alg:cap}, the following holds:
\begin{enumerate}
    \item $\PP\left(\forall t\in[T],\,\min_{i\in[m]} V_i^{\theta_t}(\rho)\ge \Omega\left(\exp{\frac{-1}{\eta(1-\gamma)}}\right)\right)
    \ge 1-mT\delta$.
    \item We can bound the regret of the objective function with a probability of at least $1-mT\delta$ as follows:
    \begin{align*}
        \frac{1}{T}\sum_{i=0}^{T-1}\left(V_{0}^{\pi^*}\left(\rho\right)-V_{0}^{\theta_t}\left(\rho\right)\right)& \le\mathcal{O}(\eta)+ \mathcal{O}\left(\sqrt{\varepsilon_{bias}}\exp{\frac{1}{\eta}}\right).
    \end{align*}
\end{enumerate}
\end{theorem}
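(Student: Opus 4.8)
The plan is to prove the two parts essentially in sequence, with Part 1 (the distance-to-boundary bound) feeding directly into Part 2 (the averaged regret). The crucial structural fact I would exploit is that, in contrast to Proposition \ref{small}, here I do not invoke Assumption \ref{emf}; instead the lower bound on the distance to the boundary comes purely from the \emph{approximate monotonicity} of the log barrier value $B_\eta^{\theta_t}(\rho)$ along the iterates, which holds regardless of any constraint qualification.

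For Part 1, I would first establish that, with probability at least $1 - mT\delta$, the sequence $\{B_\eta^{\theta_t}(\rho)\}_t$ is (approximately) non-decreasing. This is the LB-SGD ascent estimate specialized to our biased stochastic setting: under Assumption \ref{smoli} the stepsize rule \eqref{gammat} keeps each iterate inside the local region where $\hat M_t$ is a valid smoothness constant and where feasibility is preserved, so that $B_\eta^{\theta_{t+1}}(\rho) \ge B_\eta^{\theta_t}(\rho)$ up to an error absorbed by the accuracy guarantee of Lemma \ref{gapdelt}. Chaining these gives $B_\eta^{\theta_t}(\rho) \ge B_\eta^{\theta_0}(\rho) \ge \eta m \log \nu_s$, where the last step uses Slater's condition (Assumption \ref{sl}). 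I would then bound the non-minimal pieces via $V_0^{\theta_t}(\rho) \le \tfrac{1}{1-\gamma}$ and $\log V_i^{\theta_t}(\rho) \le \log\tfrac{1}{1-\gamma}$, isolate the smallest constraint value, and rearrange to
\[
\eta \log \min_{i\in[m]} V_i^{\theta_t}(\rho) \ge B_\eta^{\theta_0}(\rho) - \tfrac{1}{1-\gamma} - \eta(m-1)\log\tfrac{1}{1-\gamma},
\]
which exponentiates to $\min_{i} V_i^{\theta_t}(\rho) \ge \Omega(\exp(-\tfrac{1}{\eta(1-\gamma)}))$, the claimed bound.

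For Part 2, I would apply Lemma \ref{gdm} at each $\theta_t$ and average over $t = 0,\dots,T-1$. The $m\eta$ term contributes $\mathcal{O}(\eta)$. For the transfer-error term I would substitute the Part 1 bound into $\tfrac{\eta}{V_i^{\theta_t}(\rho)}$, producing the factor $\mathcal{O}(\exp\tfrac{1}{\eta})$ multiplying $\sqrt{\varepsilon_{bias}}$. The remaining term $\tfrac{M_h}{\mu_F}\cdot\tfrac1T\sum_t\|\nabla_\theta B_\eta^{\theta_t}(\rho)\|$ I would control via the standard non-convex telescoping estimate $\sum_t \gamma_t\|\nabla_\theta B_\eta^{\theta_t}(\rho)\|^2 \le 2(B^{\max}-B_\eta^{\theta_0}(\rho)) + (\text{error})$, with $B^{\max}=\mathcal{O}(\tfrac{1}{1-\gamma})$. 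Since Part 1 forces $\hat M_t = \mathcal{O}(\exp(\tfrac{2}{\eta(1-\gamma)}))$ and hence $\gamma_t \ge \gamma_{\min} = \Omega(\exp(-\tfrac{2}{\eta(1-\gamma)}))$ up to polynomial factors in $1/\eta$, the choice $T = \mathcal{O}(\exp\tfrac{2}{\eta})$ makes $T\gamma_{\min}$ large enough that $\tfrac1T\sum_t\|\nabla_\theta B_\eta^{\theta_t}(\rho)\| = \mathcal{O}(\eta)$ after Cauchy--Schwarz. Summing the three contributions yields the stated regret.

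The main obstacle is the circular dependence between the sample size $n$ and the distance to the boundary: by Lemma \ref{gapdelt}, accurate gradient estimation at $\theta_t$ requires $n = \Omega((\min_i V_i^{\theta_t}(\rho))^{-4})$, yet the bound on $\min_i V_i^{\theta_t}(\rho)$ in Part 1 itself relies on those estimates being accurate (to secure the ascent). I would break the circularity by an induction over $t$: assuming $\min_i V_i^{\theta_s}(\rho)\ge\Omega(\exp(-\tfrac{1}{\eta(1-\gamma)}))$ for all $s\le t$ certifies that the prescribed $n=\mathcal{O}(\exp\tfrac4\eta\ln\tfrac1\delta)$ meets the accuracy requirement of Lemma \ref{gapdelt} at step $t$, which secures the ascent at step $t$ and extends the bound to $t+1$. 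A secondary difficulty is the careful bookkeeping of the $mT$ confidence events (one per constraint per iteration) to obtain the uniform $1-mT\delta$ guarantee, and balancing the polynomial-in-$1/\eta$ factors so that the averaged gradient term collapses to exactly $\mathcal{O}(\eta)$.
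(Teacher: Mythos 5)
Your proposal is correct and follows essentially the same route as the paper: Part 1 via the (approximate) ascent property of $B_\eta^{\theta_t}(\rho)$ before the break, chained with Slater's condition and the bound $V_i^\theta(\rho)\le\frac{1}{1-\gamma}$ to force $\min_i V_i^{\theta_t}(\rho)\ge\Omega(\exp(-\frac{1}{\eta(1-\gamma)}))$, and Part 2 by averaging Lemma \ref{gdm} over the iterates, with the average gradient norm controlled by telescoping the ascent inequality under the stepsize lower bound $\gamma_t\ge\Omega(\exp(-\frac{2}{\eta}))$ and $T=\mathcal{O}(\exp\frac{2}{\eta})$. The only cosmetic difference is that you pass from squared gradient norms to norms via Cauchy--Schwarz, whereas the paper linearizes using $\|\nabla_\theta B_\eta^{\theta_t}(\rho)\|\ge\frac{\eta}{4}$ before the break; your explicit induction to break the circularity between the sample size $n$ and the boundary distance is a clean formalization of what the paper does implicitly.
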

We provide the proof of Theorem \ref{without} in the following section. This theorem illustrates that Algorithm \ref{alg:cap} requires high sample complexity to achieve safe exploration and does not guarantee the optimality of the iterates simultaneously if Assumption \ref{emf} is not satisfied. Specifically, without Assumption \ref{emf}, the LB-SGD iterations could be as close as  {$\mathcal{O}\left(\exp\frac{-1}{\eta}\right)$} to the boundary as shown in Theorem Property 1. 

To illustrate this, we provide an example demonstrating that without Assumption \ref{emf}, LB-SGD iterations might approach the boundary at a level of $\mathcal{O}(\eta^{2k+1})$ for any $k \in \mathcal{N}$. This closeness to the boundary leads to slower convergence due to smaller stepsizes and increased sample complexity as the iterates approach the boundary to maintain small bias and low variance of the estimators {$\hat V_i^\theta(\rho)$} and {$\hat \nabla V_i^\theta(\rho)$}. Meanwhile, we cannot guarantee the optimality of the iterates as it magnifies the transfer error {$\varepsilon_{bias}$} by {$\exp{\frac{1}{\eta}}$}.


\begin{exmp}
    We consider the problem as follows:
\begin{align*}
  \max_{x,y} \quad & -y\\
  \textrm{s.t.} \quad & y^{2k+1} + x \geq 0,\\
  & y^{2k+1} - 2x \geq 0,
\end{align*}
where $k\in \mathcal{N}$.
\end{exmp}
We can verify that the above example does not satisfy the MFCQ assumption since the constraint gradients at the point $(0,0)$ are opposite to each other. 
Next, we define the log barrier function as follows:
\begin{align*}
    B_\eta(x,y) = -y + \eta \log (y^{2k+1} + x) + \eta \log (y^{2k+1} - 2x).
\end{align*}
We compute that the optimal solution for the original problem is $(x^*, y^*) = (0, 0)$, and the optimal solution for the log barrier function is $(x^*_\eta, y^*_\eta) = (-2^{2k-1}(2k+1)^{2k+1}\eta^{2k+1}, (4k+2)\eta)$. When implementing gradient ascent on the log barrier function, starting from the point $(x_0, y_0) = (0, 1)$, the trajectory follows a curve from $(x_0, y_0)$ to the optimal point $(x^*_\eta, y^*_\eta)$. Meanwhile, $(x^*_\eta, y^*_\eta)$ is at a distance of $\mathcal{O}(\eta^{2k+1})$ from the boundary. Consequently, the iterates can approach the boundary within a distance of $\mathcal{O}(\eta^{2k+1})$.


\subsubsection{Proof of Theorem \ref{without}}\label{appwithout}
To prove Theorem \ref{without}, we follow a similar structure to the proof of Theorem \ref{main}. For safe exploration analysis (as outlined in Lemma \ref{small2}), we make use of Assumptions \ref{sl} and the boundedness of the value function $V_i^\theta(\rho)$ to establish a lower bound on the distance of the iterates from the boundary and the stepsize $\gamma_t$.

By employing a stochastic gradient ascent method with an appropriate stepsize $\gamma_t$, we ensure convergence to the stationary point of the log barrier function. Using Lemma \ref{gdm}, we can measure the optimality of this stationary point.

\begin{lemma}\label{small2}
Let Assumptions~\ref{smoli} and \ref{sl} hold, and we set $ n = \mathcal{O}\left(\exp\frac{4}{\eta}\ln \frac{1}{\delta}\right)$ and $H =\Tilde{\mathcal{O}}\left(\frac{1}{\eta}\right)$. Then, by running $T$ iterations of the LB-SGD algorithm, we obtain 
\begin{align*}
    &\PP\left\{\forall t\in[T],\,\min_{i\in[m]} V_i^{\theta_t}(\rho)\ge c_1,\,\gamma_t \ge C_1\text{ and }\left\|\Delta_t\right\|\ge\frac{\eta}{4}\right\}
    \ge 1-mT\delta,
\end{align*}
where { $c_1:=\nu_{s}^m (1-\gamma)^{m-1}\exp{\frac{-1}{\eta(1-\gamma)}}$}, { $C_1:=c_1^2\min\biggl\{\frac{3}{(\sqrt{6c_1 M}+4L)(L+m\eta L)},\frac{1}{c_1^2 M+20m\eta c_1 M+32m\eta L^2}\biggr\}$} and $\Delta_t:=\hgb-\gb $.
\end{lemma}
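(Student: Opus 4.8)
The plan is to establish the three properties simultaneously by induction on $t$, placing everything on a single high-probability event obtained by a union bound over the per-iteration, per-constraint concentration events of Proposition \ref{prosmo}; since each iteration contributes $\mathcal{O}(m)$ such events, each of failure probability $\delta$, the overall confidence is $1-mT\delta$. For the base case $t=0$, Slater's condition (Assumption \ref{sl}) gives $\min_{i\in[m]}V_i^{\theta_0}(\rho)\ge \nu_s$, and one checks $c_1\le\nu_s$ because $c_1/\nu_s=(\nu_s(1-\gamma))^{m-1}\exp(-\frac{1}{\eta(1-\gamma)})\le 1$ (using $V_i\le\frac{1}{1-\gamma}$, so $\nu_s(1-\gamma)\le 1$). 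The inductive hypothesis is that for all $s\le t$ the iterate obeys $\min_i V_i^{\theta_s}(\rho)\ge c_1$, the error $\|\Delta_s\|\le\eta/4$, and $\gamma_s\ge C_1$.

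The core of the inductive step is a monotonicity-of-the-barrier argument. First I would show $\theta_{t+1}$ stays strictly feasible: the second term in the stepsize rule \eqref{gammat} is engineered (see Appendix \ref{step}) so that the displacement $\|\theta_{t+1}-\theta_t\|=\gamma_t\|\hgb\|$ is small enough that $M$-smoothness of each $V_i^\theta(\rho)$ (Proposition \ref{prosmo}) keeps $V_i^{\theta_{t+1}}(\rho)>0$. Then I would invoke the local $\hat M_t$-smoothness of $B_\eta^\theta(\rho)$ together with $\gamma_t\le 1/\hat M_t$; decomposing $\hgb=\gb+\Delta_t$ gives the ascent estimate
\[
B_\eta^{\theta_{t+1}}(\rho)\ge B_\eta^{\theta_t}(\rho)+\frac{\gamma_t}{2}\|\hgb\|^2-\gamma_t\|\Delta_t\|\,\|\hgb\|.
\]
Because the algorithm has not terminated we have $\|\hgb\|\ge\eta/2$, and because $\|\Delta_t\|\le\eta/4\le\frac12\|\hgb\|$ the right-hand side is at least $B_\eta^{\theta_t}(\rho)$; chaining over $s\le t$ yields $B_\eta^{\theta_{t+1}}(\rho)\ge B_\eta^{\theta_0}(\rho)$. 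Finally I would convert this into the distance bound: isolating $\eta\log V_k^{\theta_{t+1}}(\rho)$ for the binding constraint $k$ and using $V_0^{\theta_0}(\rho)-V_0^{\theta_{t+1}}(\rho)\ge-\frac{1}{1-\gamma}$, $\log V_i^{\theta_0}(\rho)\ge\log\nu_s$, and $\log V_i^{\theta_{t+1}}(\rho)\le\log\frac{1}{1-\gamma}$, then dividing by $\eta$ and exponentiating gives exactly $\min_iV_i^{\theta_{t+1}}(\rho)\ge \nu_s^m(1-\gamma)^{m-1}\exp(-\frac{1}{\eta(1-\gamma)})=c_1$.

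With $\min_iV_i^{\theta_{t+1}}(\rho)\ge c_1$ in hand, the other two properties at $t+1$ follow. For the gradient accuracy I would apply Lemma \ref{gapdelt} with target $\varepsilon=\eta/4$ and denominator $\min_iV_i\ge c_1$; substituting $c_1^{-4}=\Theta(\exp\frac{4}{\eta(1-\gamma)})$ turns its prescriptions into $n=\mathcal{O}(\exp\frac{4}{\eta}\ln\frac{1}{\delta})$ and $H=\tilde{\mathcal{O}}(\frac{1}{\eta})$, matching the hypotheses and certifying $\|\Delta_{t+1}\|\le\eta/4$. For the stepsize I would lower bound both terms in \eqref{gammat}: with $\underline\alpha_i(t)\gtrsim c_1$, $\overline\beta_i(t)\le L+\eta/4$, and the a priori bound $\|\hgb\|\le L(1+\tfrac{m\eta}{c_1})+\eta/4$, one controls $\hat M_t\le M+\frac{10Mm\eta}{c_1}+\frac{8m\eta L^2}{c_1^2}$ to bound $1/\hat M_t$ from below and bounds the region term's numerator and denominator separately; together they give $\gamma_t\ge C_1$ with the stated constant.

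The main obstacle I anticipate is threading the mutual dependence among $c_1$, the sample size $n$, and the local smoothness surrogate $\hat M_t$ into a single coherent induction. The sample size needed to force $\|\Delta_t\|\le\eta/4$ scales like $c_1^{-4}$, while the bound $\min_iV_i\ge c_1$ that defines $c_1$ relies on the ascent inequality, which itself needs $\|\Delta_t\|\le\eta/4$ and the feasibility/validity of $\hat M_t$ inside the region carved out by the stepsize. Certifying that the region-restricting stepsize term simultaneously keeps each iterate feasible and inside the zone where $\hat M_t$ is a legitimate smoothness constant, so that the ascent estimate applies at every step on one high-probability event, is the delicate part; once per-step ascent is secured, the telescoping of the barrier and the resulting exponential distance bound are short.
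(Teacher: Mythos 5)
Your proposal is correct and takes essentially the same route as the paper's own proof: both hinge on per-step monotonicity of the barrier via the local-smoothness ascent inequality (using $\|\hgb\|\ge\frac{\eta}{2}$ before termination and $\|\Delta_t\|\le\frac{\eta}{4}$ on a union-bounded event), telescope to $B_\eta^{\theta_t}(\rho)\ge B_\eta^{\theta_0}(\rho)$, convert this with boundedness of the value functions and Slater's condition into the lower bound $c_1=\nu_s^m(1-\gamma)^{m-1}\exp{\frac{-1}{\eta(1-\gamma)}}$, and then feed $c_1$ back into Lemma \ref{gapdelt} and the confidence bounds to fix $n$, $H$, and the stepsize bound $\gamma_t\ge C_1$. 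The only deviations are presentational: your explicit induction, your ascent estimate $\frac{\gamma_t}{2}\|\hgb\|^2-\gamma_t\|\Delta_t\|\|\hgb\|$ in place of the paper's $\frac{\gamma_t}{2}\|\gb\|^2-\frac{\gamma_t}{2}\|\Delta_t\|^2$, and your (correct) reading of the statement's ``$\|\Delta_t\|\ge\frac{\eta}{4}$'' as a typo for $\le$.
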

We first employ the sub-Gaussian tail bounds of the estimators $\hgf$ and $\hat V_i^{\theta_t}(\rho)$ to establish concentration bounds for $\left\|\hgb-\gb \right\|$ in Lemma \ref{gapdelt}. Additionally, as we apply the stochastic gradient ascent method with sufficient samples, $B_\eta^{\theta_t}(\rho)$ is non-decreasing. Combined with the boundedness of objective and constraint functions, we can establish a lower bound of $\mathcal{O}\left(\exp{\frac{-1}{\eta}}\right)$ for $V_i^{\theta_t}(\rho)$ with high probability.
\begin{proof}[Proof of Lemma \ref{small1}]
First, we prove the lower bound of the value functions $V_i^{\theta}(\rho),i\in\{0,\dots,m\}$. If we set 
\begin{align*}
     &{\sigma}^0(n) \le \min\left\{ \frac{1}{16\left(\sum_{i=1}^{m}\frac{ L}{\alpha_i(t)\hat \alpha_i(t)}\right)\sqrt{\ln \frac{2}{\delta}}}\right\}, {\sigma}^1(n) \le\min\left\{\frac{\eta}{16\left(1+\sum_{i=1}^{m}\frac{\eta}{\hat\alpha_i(t)}\right)\sqrt{\ln \frac{e^{\frac{1}{4}}}{\delta}}}\right\},\nonumber\\
     &b^0(H)\le \min\left\{ \frac{1}{16\left(\sum_{i=1}^{m}\frac{ L}{\alpha_i(t)\hat \alpha_i(t)}\right)}\right\},b^1(H)\le\min\left\{ \frac{\eta}{16\left(1+\sum_{i=1}^{m}\frac{\eta}{\hat\alpha_i(t)}\right)}\right\}.
\end{align*}
By Lemma \ref{gapdelt}, we have
\begin{align*}
    \PP(\Delta_t\le \frac{\eta}{4})\ge 1-\delta.
\end{align*}
Due to the choice of stepsize, $\PP(\gamma_t\le \frac{1}{M_t})\ge 1-\delta$, where $M_t$ is the local smoothness constant of the log barrier function $B_\eta^\theta(\rho)$. Then, we can bound $B_\eta^{\theta_{t+1}}(\rho)-B_\eta^{\theta_t}(\rho)$ with probability at least $1-\delta$ as follows:
\begin{align}
    &B_\eta^{\theta_{t+1}}(\rho)-B_\eta^{\theta_t}(\rho)\nonumber\\
    \ge& \gamma_t \left\langle\nabla_\theta B_\eta^{\theta_t}(\rho),\hat \nabla_\theta B_\eta^{\theta_t}(\rho)\right\rangle-\frac{M_t \gamma_t^2}{2}\left\|\hat \nabla_\theta B_\eta^{\theta_t}(\rho)\right\|^2\nonumber\\
    \ge& \gamma_t \left\langle\nabla_\theta B_\eta^{\theta_t}(\rho),\hat \nabla_\theta B_\eta^{\theta_t}(\rho)\right\rangle-\frac{\gamma_t}{2}\left\|\hat \nabla_\theta B_\eta^{\theta_t}(\rho)\right\|^2\nonumber\\
    = &\gamma_t \left\langle\nabla_\theta B_\eta^{\theta_t}(\rho),\left(\hat \nabla_\theta B_\eta^{\theta_t}(\rho)-\nabla_\theta B_\eta^{\theta_t}(\rho)\right)+\nabla_\theta B_\eta^{\theta_t}(\rho)\right\rangle-\frac{\gamma_t}{2}\left\|\left(\hat \nabla_\theta B_\eta^{\theta_t}(\rho)-\nabla_\theta B_\eta^{\theta_t}(\rho)\right)+\nabla_\theta B_\eta^{\theta_t}(\rho)\right\|^2\nonumber\\
    =&\frac{\gamma_t}{2}\left\| \nabla_\theta B_\eta^{\theta_t}(\rho)\right\|^2-\frac{\gamma_t}{2}\left\|\Delta_t\right\|^2.\label{ss1}
\end{align}
\normalsize
Before the break in algorithm \ref{alg:cap} line 5, we obtain that $\|\nabla_\theta B_\eta^{\theta_t}(\rho)\|\ge \frac{\eta}{4}$ since $\|\Delta_t\|\le \frac{\eta}{4}$. Therefore, 
$B_\eta^{\theta_{t+1}}(\rho)\ge B_\eta^{\theta_t}(\rho)$ which leads to $B_\eta^{\theta_{t}}(\rho)\ge B_\eta^{\theta_0}(\rho)$. Then,
\begin{align*}
   &V_0^{\theta_t}(\rho)+\eta\sum_{i\in[m]}\log V_i^{\theta_t}(\rho) \ge V_0^{\theta_0}(\rho)+\eta\sum_{i\in[m]}\log V_i^{\theta_0}(\rho),\\
   &\log V_j^{\theta_t} \ge \frac{V_0^{\theta_0}(\rho)-V_0^{\theta_t}(\rho)}{\eta}+\sum_{i\in[m]}\log V_i^{\theta_0}(\rho)- \sum_{\substack{i\in[m]\\i\neq j}} \log V_i^{\theta_t},\\
   &\log V_j^{\theta_t} \ge \frac{-1}{(1-\gamma)}+m\log \nu_{s}+ (m-1)\log (1-\gamma),
\end{align*}
where the last inequality comes from the boundness of the value functions $V_i^{\theta}(\rho),i\in\{0,\dots,m\}$. Therefore,
\begin{align}
    \min_{i\in[m]} V_i^{\theta_t}(\rho)\ge c_1, \text{$ \forall t\in\{0,\dots,T\}$,}\label{lowerbound}
\end{align}
where $c_1:=\nu_{s}^{m} (1-\gamma)^{m-1}\exp{\frac{-1}{\eta(1-\gamma)}} = \mathcal{O}\left(\exp{\frac{-1}{\eta}}\right)$. For each $i\in[m]$, if $\sigma^0(n) \leq \frac{\alpha_i(t)}{8\sqrt{\ln\frac{2}{\delta}}}$ and $b^0(H) \leq \frac{\alpha_i(t)}{8}$, we have $\mathbb{P}\left(\frac{3\alpha_i(t)}{4} \leq \hat \alpha_i(t)\right) \geq 1-\delta$ using the sub-Gaussian bound in Proposition \ref{prosmo}. Therefore, we need to bound the variances and biases as follows to make sure $\PP(\Delta_t\le \frac{\eta}{4})\ge 1-\delta$.
\begin{align*}
     &{\sigma}^0(n) \le \min\left\{ \frac{3}{64\left(\sum_{i=1}^{m}\frac{ L}{(\alpha_i(t))^2}\right)\sqrt{\ln \frac{2}{\delta}}},\frac{\alpha_i(t)}{8\sqrt{\ln\frac{2}{\delta}}}\right\},{\sigma}^1(n) \le\min\left\{\frac{3\eta}{64\left(1+\sum_{i=1}^{m}\frac{\eta}{\alpha_i(t)}\right)\sqrt{\ln \frac{e^{\frac{1}{4}}}{\delta}}}\right\},\nonumber\\
     &b^0(H)\le \min\left\{ \frac{3}{64\left(\sum_{i=1}^{m}\frac{ L}{(\alpha_i(t))^2}\right)},\frac{\alpha_i(t)}{8}\right\},b^1(H)\le\min\left\{ \frac{3\eta}{64\left(1+\sum_{i=1}^{m}\frac{\eta}{\alpha_i(t)}\right)}\right\}.
\end{align*}
\normalsize
By the lower bound in \eqref{lowerbound} and the Proposition \ref{prosmo}, the number of trajectories $n$ and the truncated horizon $H$ need to be set as follows:
\begin{align*}
    H = \Tilde{\mathcal{O}}\left(\frac{1}{\eta}\right), n = \mathcal{O}\left(\exp\frac{4}{\eta}\ln \frac{1}{\delta}\right).
\end{align*}
Meanwhile, we can further lower bound $\gamma_t$ which is 
\begin{align*}
    \gamma_t:=\min&\Biggl\{\min_{i\in[m]}\left\{\frac{\underline{\alpha}_i(t)}{\sqrt{M_i \underline{\alpha}_i(t)}+2|\overline{\beta}_i(t)|}\right\}\frac{1}{\|\hgb\|},\frac{1}{M +\sum_{i=1}^{m}\frac{10M\eta }{\underline{\alpha}_t^i}+8\eta \sum_{i=1}^{m}\frac{\left(\overline{\beta}_t^i\right)^2}{\left(\underline{\alpha}_t^i\right)^2}}\Biggr\}.
\end{align*}
Since $\sigma^0(n) \leq \frac{\alpha_i(t)}{8\sqrt{\ln\frac{2}{\delta}}}$ and $b^0(H) \leq \frac{\alpha_i(t)}{8}$, we have $\mathbb{P}\left(\frac{\alpha_i(t)}{2} \leq \underline \alpha_i(t)\le \frac{3}{2\alpha_i(t)}\right) \geq 1-\delta$ using the sub-Gaussian bound in Proposition \ref{prosmo}. Together with \eqref{lowerbound}, we have
 \begin{align*}
    \PP\left(\gamma_t\ge C_1\right)\ge 1-\delta,
\end{align*}
where $C_1$ is defined as
\begin{align*}
    C_1:=c_1^2\min&\biggl\{\frac{3}{(\sqrt{6c_1 M}+4L)(L+m\eta L)},\frac{1}{c_1^2 M+20m\eta c_1 M+32m\eta L^2}\biggr\}
\end{align*}
which is at the level of $\mathcal{O}\left(\exp\frac{-2}{\eta}\right)$.
\end{proof}
\begin{proof}[Proof of Theorem \ref{without}] We set the values for $n$, $H$, and $\eta$ to satisfy the conditions outlined in Lemma \ref{small2}. Due to our choice of stepsize, we have $\PP\left(\gamma_t \leq \frac{1}{M_{t}}\right) \geq 1-\delta$, where $M_{t}$ represents the local smoothness constant of the log barrier function $B_\eta^\theta(\rho)$. With this, we can bound $B_\eta^{\theta_{t+1}}(\rho) - B_\eta^{\theta_t}(\rho)$ with a probability of at least $1-\delta$ as
\begin{align}
    B_\eta^{\theta_{t+1}}(\rho) - B_\eta^{\theta_t}(\rho)
    \geq& \gamma_t \left\langle \nabla_\theta B_\eta^{\theta_t}(\rho),\hat \nabla_\theta B_\eta^{\theta_t}(\rho)\right\rangle - \frac{M_{t} \gamma_t^2}{2}\left\|\hat \nabla_\theta B_\eta^{\theta_t}(\rho)\right\|^2\nonumber\\
    \geq& \gamma_t \left\langle \nabla_\theta B_\eta^{\theta_t}(\rho),\hat \nabla_\theta B_\eta^{\theta_t}(\rho)\right\rangle - \frac{\gamma_t}{2}\left\|\hat \nabla_\theta B_\eta^{\theta_t}(\rho)\right\|^2\nonumber\\
    =& \gamma_t \left\langle \nabla_\theta B_\eta^{\theta_t}(\rho),\Delta_t + \nabla_\theta B_\eta^{\theta_t}(\rho)\right\rangle - \frac{\gamma_t}{2}\left\|\Delta_t + \nabla_\theta B_\eta^{\theta_t}(\rho)\right\|^2\nonumber\\
    =&\frac{\gamma_t}{2}\left\| \nabla_\theta B_\eta^{\theta_t}(\rho)\right\|^2-\frac{\gamma_t}{2}\left\|\Delta_t\right\|^2.\label{ss}
\end{align}
We divide the analysis into two cases based on the \textbf{if condition} in algorithm \ref{alg:cap} line 5.

\textbf{Case 1:} If $\|\hat \nabla_\theta B_\eta^{\theta_t}(\rho)\|\ge \frac{\eta}{2}$, then $\|\nabla_\theta B_\eta^{\theta_t}(\rho)\|\ge \frac{\eta}{4}$ because $\|\Delta_t\|\le \frac{\eta}{4}$ by Proposition \ref{small}. We can bound \eqref{ss} as
\begin{align}
B_\eta^{\theta_{t+1}}(\rho) - B_\eta^{\theta_t}(\rho) &\ge \frac{C_1\eta}{8}\left\|\nabla_\theta B_\eta^{\theta_t}(\rho)\right\| - \frac{C_1\eta^2}{32},\label{case1}
\end{align}
where we plug in $\gamma_t\ge C_1$ in the last step. Summing inequality \eqref{case1} from $t=0$ to $t=T-1$, we have
\begin{align*}
    B_\eta^{\theta_{T}}(\rho) - B_\eta^{\theta_0}(\rho) + \frac{ C_1\eta^2 T}{32}&\ge \sum_{t=0}^{T-1}\frac{C_1\eta}{8}\left\|\nabla_\theta B_\eta^{\theta_t}(\rho)\right\|\\
    \frac{8( B_\eta^{\theta_{T}}(\rho) - B_\eta^{\theta_0}(\rho))}{C_1 \eta T} + \frac{ \eta }{4}&\ge \frac{1}{T}\sum_{t=0}^{T-1}\left\|\nabla_\theta B_\eta^{\theta_t}(\rho)\right\|.
\end{align*}
Since the value functions $V_i^\theta(\rho)$ are upper bounded by $\frac{1}{1-\gamma}$, we can further bound the above inequality as
\begin{align*}
    \frac{1}{T}\sum_{t=0}^{T-1}\left\|\nabla_\theta B_\eta^{\theta_t}(\rho)\right\|\le 
    \frac{8\left( \frac{1}{1-\gamma}-m\eta \log (1-\gamma)- B_\eta^{\theta_0}(\rho))\right)}{C_1 \eta T} + \frac{ \eta }{4}
\end{align*}
By setting $T=\mathcal{O}\left(\frac{1}{C_1\eta^2}\right)=\mathcal{O}\left(\exp\frac{2}{\eta}\right)$, we have
\begin{align}
    \frac{1}{T}\sum_{t=0}^{T-1}\left\|\nabla_\theta B_\eta^{\theta_t}(\rho)\right\|\le 
   \mathcal{O}(\eta)\label{a1}
\end{align}
\textbf{Case 2:} If $ \|\hat \nabla_\theta B_\eta^{\theta_t}(\rho)\|\le \frac{\eta}{2}$, we have 
\begin{align}
    \| \nabla_\theta B_\eta^{\theta_t}(\rho)\|\le  \|\hat \nabla_\theta B_\eta^{\theta_t}(\rho)\|+\|\Delta_t\|\le \frac{3\eta}{4}.\label{a2}
\end{align}
Applying Lemma \ref{gdm} on \eqref{a1} and \eqref{a2}, we have
\begin{align*}
 \frac{1}{T}\sum_{t=0}^{T-1}\left(V_{0}^{\pi^*}\left(\rho\right)-V_{0}^{\theta_t}\left(\rho\right)\right)
\le& m\eta+ \frac{1}{T}\sum_{t=0}^{T-1}\left(\frac{\sqrt{\varepsilon_{bias}}}{1-\gamma}\left(1+\sum_{i=1}^{m}\frac{\eta}{ V_{i}^{\theta_t}\left(\rho\right)}\right)\right) +\mathcal{O}(\eta)\nonumber\\
=&\mathcal{O}(\eta)+\mathcal{O}\left(\sqrt{\varepsilon_{bias}}\exp{\frac{1}{\eta}}\right),
\end{align*}
where we use $V_i^\theta(\rho)\ge c_1=\mathcal{O}(\exp\frac{-1}{\eta})$ in the last step. Therefore, we can conclude the following: after $T = \mathcal{O}\left(\exp\frac{2}{\varepsilon}\right)$ iterations of the LB-SGD algorithm with $\eta = \varepsilon$, we have
\begin{align*}
   \frac{1}{T}\sum_{t=0}^{T-1}\left(V_{0}^{\pi^*}\left(\rho\right)-V_{0}^{\theta_t}\left(\rho\right)\right) \le\mathcal{O}(\varepsilon)+\mathcal{O}\left(\sqrt{\varepsilon_{bias}}\exp{\frac{1}{\varepsilon}}\right),
\end{align*}
while safe exploration is ensured with a probability of at least $1-mT\delta$.
\end{proof}
\section{Policy parameterization}
\label{policyop}
In this section, we delve deeper into widely accepted assumptions and derive conditions under which they may or may not hold. Specifically, we investigate the relationship between the (relaxed) Fisher non-degenerate assumption and the bounded transfer error assumption, especially concerning commonly used tabular policy parameterizations. These parameterizations include softmax, log-linear, and neural softmax policies. These policies are  defined as follows:
    $$\pi_\theta(a|s)=\frac{\exp{f_\theta(s,a)}}{\sum_{a'\in \A} \exp{f_\theta(s,a')}}.$$
\begin{enumerate}
    \item For softmax policy, $f_\theta(s,a)=\theta(s,a)$.
    \item For log-linear policy, $f_\theta(s,a)=\theta^T\cdot \phi(s,a)$, with $\theta\in \R^d$ and $\phi(s,a)\in \R^d$.
    \item For neural softmax policy, $f_\theta(s,a)$ is parameterized using neural networks. 
\end{enumerate}
We first introduce two critical concepts related to policy parameterization: the $\varepsilon$-deterministic policy and richness in the policy parameterization.

\begin{definition}[\(\varepsilon\)-deterministic Policy]\label{deter}
    We define a policy, \(\pi_\theta\), as an \(\varepsilon\)-deterministic policy if $\pi_\theta\in \Pi_\varepsilon:=\{\pi_\theta\,|\,\text{for every state $s$, there exists } a_{i_s}\in\mathcal{A} \text{ such that }\pi_\theta(a_{i_s}|s)\ge 1-\varepsilon\}$. As $\varepsilon$ approaches zero, the policy is said to approach a deterministic policy.
\end{definition}

\begin{definition}[Richness of Policy Parameterization]
    Define \(\Pi\) as the closure of all stochastically parameterized policies, denoted by \(\textbf{Cl}\{\pi_\theta\mid\theta\in \mathbb{R}^{d}\}\). If we have another policy parameterization \(\Pi'\), we say \(\Pi'\) is a richer parameterization compared to \(\Pi\) if \(\Pi'\subsetneq\Pi\).
\end{definition}
In the following section, we examine the commonly employed assumptions of (relaxed) Fisher non-degeneracy and bounded transfer error with tabular policy parameterizations as defined above. In Section \ref{disfn}, we demonstrate that softmax parameterization cannot satisfy Fisher non-degeneracy, and softmax, log-linear, and neural softmax parameterizations fail to meet relaxed Fisher non-degeneracy as the policy approaches a deterministic policy. This reveals the relationship between the relaxed Fisher non-degenerate assumption and the exploration rate of the policy. In Section \ref{disbae}, we prove that the transfer error bound can be reduced by increasing the richness of the policy set for log-linear and neural softmax policy parameterizations, indicating the relationship between the transfer error bound assumption and the richness of the policy parameterization.

\subsection{Discussion on Assumption \ref{fn}}\label{disfn}
In this section, we divide the analysis into two parts. In Section \ref{softf}, we prove that the softmax parameterization cannot satisfy the Fisher non-degeneracy assumption, and fails to satisfy relaxed Fisher non-degeneracy as the policy approaches determinism. Then, in Section \ref{loglineaar}, we demonstrate that log-linear and neural softmax parameterizations also fail to satisfy relaxed Fisher non-degeneracy when the policy approaches a deterministic state. This section reveals the relationship between the relaxed Fisher non-degeneracy assumption and the exploration rate of the policy, highlighting the limitations of algorithms that depend on satisfying this relaxed assumption.

\subsubsection{Softmax parameterization}
\label{softf}

 The authors of \cite{ding2022global} claim that the softmax parameterization fails to satisfy Fisher non-degeneracy when the policy approaches a deterministic policy. In this section, we prove a stronger version of this claim,
 \begin{proposition}\label{softmaxcannot}
 \begin{itemize}
     \item Softmax parameterization does not satisfy Fisher non-degeneracy.
     \item Softmax fails to satisfy relaxed Fisher non-degeneracy as the policy approaches determinism.
 \end{itemize}

 \end{proposition}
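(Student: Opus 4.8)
The plan is to handle the two claims via a single structural fact about the softmax score function. Writing each parameter coordinate as a pair $(s',a')$, a direct differentiation of $\log\pi_\theta(a|s)=\theta(s,a)-\log\sum_{a''}\exp\theta(s,a'')$ gives $\nabla_{\theta(s',a')}\log\pi_\theta(a|s)=\mathbf{1}[s'=s]\left(\mathbf{1}[a'=a]-\pi_\theta(a'|s)\right)$. Thus the score is supported only on the coordinate block of the conditioning state $s$, and its entries sum to zero within that block. Both failures stem from this over-parameterization: adding a constant to $\theta(s,\cdot)$ leaves $\pi_\theta$ unchanged.

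For the first claim (no Fisher non-degeneracy), I would exhibit a fixed kernel vector of $F^\theta(\rho)$ valid for every $\theta$. Let $e_{s_0}\in\mathbb{R}^d$ be the block indicator $e_{s_0}(s',a')=\mathbf{1}[s'=s_0]$. The block-sum-to-zero property gives $\langle e_{s_0},\nabla_\theta\log\pi_\theta(a|s)\rangle=\mathbf{1}[s=s_0]\sum_{a'}\left(\mathbf{1}[a'=a]-\pi_\theta(a'|s_0)\right)=0$ for all $(s,a)$. Hence $e_{s_0}^\top F^\theta(\rho)e_{s_0}=\mathbb{E}_{(s,a)\sim d_\rho^{\theta}}[\langle e_{s_0},\nabla_\theta\log\pi_\theta(a|s)\rangle^2]=0$, and since $F^\theta(\rho)\succeq 0$ this forces $F^\theta(\rho)e_{s_0}=0$. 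The matrix therefore always has a zero eigenvalue, so no $\mu_F>0$ can lower bound its smallest eigenvalue and Definition \ref{def:fisher_nd} fails.

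For the second claim (no relaxed Fisher non-degeneracy near determinism), I would build a test direction orthogonal to $\mathbf{Ker}(F^\theta(\rho))$ whose Rayleigh quotient vanishes as the policy concentrates. Fix $s_0\in\mathrm{supp}(\rho)$, so $d_\rho^{\theta}(s_0):=\sum_a d_\rho^{\theta}(s_0,a)\ge(1-\gamma)\rho(s_0)>0$, and fix any nonzero $u:\mathcal{A}\to\mathbb{R}$ with $\sum_a u(a)=0$; set $v(s',a')=\mathbf{1}[s'=s_0]u(a')$. The score identity yields $\langle v,\nabla_\theta\log\pi_\theta(a|s)\rangle=\mathbf{1}[s=s_0]\left(u(a)-\mathbb{E}_{a\sim\pi_\theta(\cdot|s_0)}[u]\right)$, hence the clean identity $v^\top F^\theta(\rho)v=d_\rho^{\theta}(s_0)\,\Var_{a\sim\pi_\theta(\cdot|s_0)}[u]$. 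Because softmax policies have full support for finite $\theta$, on every visited state the relation $\langle w,\nabla_\theta\log\pi_\theta(a|s)\rangle=w(s,a)-\sum_{a'}w(s,a')\pi_\theta(a'|s)=0$ forces $w(s,\cdot)$ to be constant, so $\mathbf{Ker}(F^\theta(\rho))$ consists of block-constant vectors on visited states together with arbitrary directions on unvisited states; since $\sum_a u(a)=0$ and $v$ lives on the visited block $s_0$, we get $v\perp\mathbf{Ker}(F^\theta(\rho))$, i.e. $v\in\mathbf{Im}(F^\theta(\rho))$. The variational characterization of the smallest nonzero eigenvalue then gives $\lambda^{+}_{\min}(F^\theta(\rho))\le v^\top F^\theta(\rho)v/\|v\|^2=d_\rho^{\theta}(s_0)\Var_{\pi_\theta(\cdot|s_0)}[u]/\|u\|^2$. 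Finally, for an $\varepsilon$-deterministic policy (Definition \ref{deter}) with dominant action $a_{i_{s_0}}$ at $s_0$, bounding $\Var_{\pi_\theta(\cdot|s_0)}[u]\le\mathbb{E}_{\pi_\theta(\cdot|s_0)}[(u-u(a_{i_{s_0}}))^2]\le 4\varepsilon\|u\|^2$ yields $\lambda^{+}_{\min}(F^\theta(\rho))\le 4\varepsilon\,d_\rho^{\theta}(s_0)\le 4\varepsilon$, which tends to $0$; so no uniform $\mu_F>0$ can exist and Assumption \ref{fn} fails.

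The hard part will be the orthogonality step $v\perp\mathbf{Ker}(F^\theta(\rho))$ in the second claim. A small Rayleigh quotient $v^\top F v/\|v\|^2$ only bounds the \emph{smallest nonzero} eigenvalue when $v$ carries no component in the kernel; otherwise the smallness could be an artifact of the kernel direction. The argument therefore genuinely needs the exact kernel characterization obtained from the full-support property of softmax, not merely the observation that $v\perp\mathrm{span}\{e_s\}$. Once that characterization is established, the score identity, the variance identity for $v^\top F^\theta(\rho)v$, and the $\varepsilon$-deterministic variance estimate are all routine.
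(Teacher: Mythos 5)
Your proof is correct. For the first claim your argument is essentially the paper's: both rest on the fact that block-constant directions annihilate every softmax score, the paper phrasing this as a rank count (the all-ones vector $\mathbf{1}$ satisfies $\mathbf{1}^T(e_a-\pi(\cdot|s))=0$, so $\mathrm{rank}(F^\theta(\rho))\le(|\mathcal{A}|-1)|\mathcal{S}|$), you phrasing it as an explicit kernel vector $e_{s_0}$ with $F^\theta(\rho)e_{s_0}=0$. For the second claim you take a genuinely different route. The paper avoids all spectral structure by bounding the \emph{trace}: it computes $\mathrm{tr}(F^\theta(\rho))=\sum_s d_\rho^\theta(s)\sum_a \pi(a|s)\bigl((1-\pi(a|s))^2+\sum_{a'\ne a}\pi(a'|s)^2\bigr)$, shows this is at most $4\varepsilon+4\varepsilon^2$ for an $\varepsilon$-deterministic policy, and concludes since the trace of a positive semi-definite matrix dominates each of its eigenvalues, including the smallest nonzero one. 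Your argument instead fixes a visited state $s_0$, builds a mean-zero test direction $v$ on its block, establishes the identity $v^\top F^\theta(\rho)v=d_\rho^\theta(s_0)\,\Var_{\pi_\theta(\cdot|s_0)}[u]$, characterizes $\mathbf{Ker}(F^\theta(\rho))$ (block-constant on visited states, arbitrary elsewhere) using the full support of softmax, and then applies the variational characterization of the smallest nonzero eigenvalue; you correctly identify and correctly discharge the one delicate step, namely $v\perp\mathbf{Ker}(F^\theta(\rho))$, without which a small Rayleigh quotient would bound nothing. The trade-off: the paper's trace argument is shorter and needs no kernel analysis (and is the same device it reuses for the log-linear and neural softmax cases, there via $\|F^\theta(\rho)\|$), while yours is sharper in hypothesis and more informative --- it gives $\lambda_{\min}^{+}(F^\theta(\rho))\le 4\varepsilon\, d_\rho^\theta(s_0)$ using near-determinism at a \emph{single} state in the support of $\rho$, rather than at every state as in Definition \ref{deter}, and it exposes the mechanism that the nonzero eigenvalues are weighted policy-variances of action functions on visited blocks.
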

  \begin{proof}
The first property may seem obvious by \cite[Appendix]{metelli2022policy}, as the softmax parameterization requires {\small $|S||A|$} parameters, which can result in rank deficiency in the Fisher information matrix due to the redundancy of {\small $|S|$} parameters. For completeness, we still provide a formal proof of this statement below.

Note that the Fisher information matrix is computed as:
\[ F^\theta(\rho)=\mathbb{E}_{(s,a)\sim d_\rho^{\theta}}[\nabla_\theta \log\pi_\theta(a|s)\left(\nabla_\theta\log\pi_\theta(a|s)\right)^T] \]
Therefore, the image of the Fisher information matrix is the span of the linear space \(\{\nabla_\theta\log\pi_\theta(a|s),\forall a\in\A, s\in\St\}\) if the state-action occupancy measure \(d_\rho^{\theta}\) is non-zero for every state-action pair \((s,a)\).

For softmax parameterization, \(\nabla_{\theta_{s'}}\log\pi_\theta(a|s)\) can be computed as:
\[ \nabla_{\theta_{s'}} \log \pi_\theta(a|s)= \mathbf{1}_{s'=s}\left(e_{a}-\pi(\cdot| s)\right), \]
where $\theta_{s'}:=\{\theta(s',a_1),\dots,\theta(s',a_{|\A|})\}$. Here, \(e_{a}\in\R^{|\A|}\) is an elementary vector, with zeros everywhere except in the \(a\)th position, and \(\pi(\cdot|s):=\left(\pi_\theta(a_1|s),\dots,\pi_\theta(a_{|\A|}|s)\right)^{T}\). Therefore, $\{\nabla_{\theta_s'}\log\pi_\theta(a|s),\allowbreak\forall a\in\A\}=\{\mathbf{0}\}$ if \(s'\neq s\) and
\begin{align*}
    \left\{\nabla_{\theta_s}\log\pi_\theta(a|s),\forall a\in\A\right\}
    =\{e_{a}-(\pi(\cdot|s), \forall a\in\A\}
    = \{e_{a_1}-(\pi(\cdot|s), e_{a_1}-e_{a_i},\forall i\in[|\A|]\},
\end{align*}
where the first equality is written by definition and second equality is computed by $(e_{a}-(\pi(\cdot|s))-(e_{a'}-(\pi(\cdot|s))$. The rank of \(\left\{\nabla_{\theta_s}\log\pi_\theta(a|s),\forall a\in\A\right\}\) is \(|\A|-1\) since the rank of $\{e_{a_1}-e_{a_i},\forall i\in[|\A|]\}$ is \(|\A|-1\). Additionally, one vector \(\mathbf{1}\in\R^{|\A|}\) is orthogonal to \(\left\{\nabla_{\theta_s}\log\pi_\theta(a|s),\forall a\in\A\right\}\) due to:
\[ \mathrm{1}^T \left(e_{a}-\pi(\cdot| s)\right)= 1 -\sum_{a\in \A}\pi(a|s)=0.\]
In conclusion, \(F^\theta(\rho)\) has rank  at most \((|\A|-1)|\St|\). This indicates that the Fisher information matrix is not a full-rank matrix, and therefore unable to satisfy the Fisher non-degeneracy.

To prove the second part of the proposition, we show that the trace of the Fisher information matrix, denoted as $\text{tr}(F^\theta(\rho))$, approaches zero as the policy approaches determinism. Consequently, the smallest non-zero eigenvalues of the Fisher information matrix also converge to zero, thereby fails to satisfy the relaxed Fisher non-degeneracy assumption.

We compute $\text{tr}(F^\theta(\rho))$ as follows:
\begin{align}
    \text{tr}(F^\theta(\rho)) = \sum_{s} d_\rho^\theta(s)\sum_{a\in\mathcal{A}}\pi(a| s)\left(\sum_{a'\in\mathcal{A},a'\neq a}\pi(a'|s)^2 + (1-\pi(a|s))^2\right).\label{soft1}
\end{align}
Given a policy $\pi_\theta\in\Pi_\varepsilon$, for any state $s\in\mathcal{S}$, there exists an action $a_{i_s}\in\mathcal{A}$ such that $\pi_\theta(a_{i_s}|s)\ge 1-\varepsilon$. Therefore, we have
\begin{align*}
    \sum_{a'\in\mathcal{A},a'\neq a_{i_s}}\pi(a'|s)^2 + (1-\pi(a_{i_s}|s))^2\le \left(\sum_{a'\in\mathcal{A},a'\neq a_{i_s}}\pi(a'|s)+ 1-\pi(a_{i_s}|s)\right)^2\le 4\varepsilon^2.
\end{align*}
Plugging the above inequality into inequality \eqref{soft1}, we have
\begin{align*}
    \text{tr}(F^\theta(\rho)) &= \sum_{s} d_\rho^\theta(s)\sum_{a\in\mathcal{A}}\pi(a| s)\left(\sum_{a'\in\mathcal{A},a'\neq a}\pi(a'|s)^2 + (1-\pi(a|s))^2\right)\\
    &\le \sum_{s} d_\rho^\theta(s)\sum_{a\in\mathcal{A},a\neq a_{i_s}}\pi(a| s)\left(\sum_{a'\in\mathcal{A},a'\neq a}\pi(a'|s)^2 + (1-\pi(a|s))^2\right)+4\varepsilon^2\sum_{s} d_\rho^\theta(s)\pi(a_{i_s}|s)\\
    &\le \sum_{s} d_\rho^\theta(s)\sum_{a\in\mathcal{A},a\neq a_{i_s}}\pi(a| s)\left(\sum_{a'\in\mathcal{A},a'\neq a}\pi(a'|s)^2 + (1-\pi(a|s))^2\right)+4\varepsilon^2\\
    &\le \sum_{s} d_\rho^\theta(s)\sum_{a\in\mathcal{A},a\neq a_{i_s}}\pi(a| s)\left(\sum_{a'\in\mathcal{A},a'\neq a}\pi(a'|s) + 1-\pi(a|s)\right)^2+4\varepsilon^2\\
    &\le 4\sum_{s} d_\rho^\theta(s)\sum_{a\in\mathcal{A},a\neq a_{i_s}}\pi(a| s)+4\varepsilon^2\\
    &\le 4\varepsilon+4\varepsilon^2.
\end{align*}
Therefore, when a policy $\pi_\theta$ approaches a deterministic policy, the trace of Fisher information matrix approaches 0.
\end{proof}
\subsubsection{Log-linear and neural softmax parameterizations}\label{loglineaar}
It is known that for MDPs, the optimal policy can be deterministic. For CMDPs, the optimal policy can be deterministic if none of the constraints are active. Furthermore, during the implementation of policy gradient-based algorithms, the algorithm can be trapped at a stationary point, which can be deterministic. While log-linear and neural softmax parameterizations are commonly employed for discrete state and action spaces, this section reveals their failure to exhibit relaxed Fisher non-degeneracy when approaching a deterministic policy.
\begin{proposition}\label{fnlog}
Let $\|\nabla_\theta f_\theta(s,a)\| \le M$ hold for all $s\in\St$ and $a\in\A$, Log-linear and neural softmax parameterizations fail to meet Assumption \ref{fn} as the policy approaches a deterministic policy.
\end{proposition}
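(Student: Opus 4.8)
The plan is to mirror the argument used for softmax in Proposition \ref{softmaxcannot}: rather than attacking the smallest non-zero eigenvalue directly, I would bound the \emph{trace} of the Fisher information matrix and show it vanishes as the policy becomes deterministic. Since $F^\theta(\rho)$ is positive semi-definite, its smallest non-zero eigenvalue is bounded above by $\mathrm{tr}(F^\theta(\rho))$, so a vanishing trace forces the smallest non-zero eigenvalue to zero, contradicting the uniform lower bound $\mu_F > 0$ required by Assumption \ref{fn}.

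First I would compute the score function for the general softmax-type parameterization $\pi_\theta(a|s) = \exp f_\theta(s,a)/\sum_{a'}\exp f_\theta(s,a')$, obtaining
\[
\nabla_\theta \log \pi_\theta(a|s) = \nabla_\theta f_\theta(s,a) - \sum_{a' \in \A} \pi_\theta(a'|s)\, \nabla_\theta f_\theta(s,a').
\]
Writing $g(s,a) := \nabla_\theta f_\theta(s,a)$ and $\bar g(s) := \sum_{a'} \pi_\theta(a'|s)\, g(s,a')$ for the policy-averaged gradient, the score is the centered quantity $g(s,a) - \bar g(s)$, so
\[
\mathrm{tr}(F^\theta(\rho)) = \E_{(s,a)\sim d_\rho^{\theta}}\big[\|g(s,a) - \bar g(s)\|^2\big] = \sum_{s} d_\rho^{\theta}(s) \sum_{a\in\A} \pi_\theta(a|s)\, \|g(s,a) - \bar g(s)\|^2.
\]
This specializes to $g(s,a) = \phi(s,a)$ in the log-linear case and to the network gradient in the neural softmax case; in both, the hypothesis $\|\nabla_\theta f_\theta(s,a)\| \le M$ gives $\|g(s,a)\| \le M$, hence $\|\bar g(s)\| \le M$ and $\|g(s,a) - \bar g(s)\| \le 2M$.

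Next, for a policy $\pi_\theta \in \Pi_\varepsilon$ (Definition \ref{deter}) I would use the dominant action $a_{i_s}$ with $\pi_\theta(a_{i_s}|s) \ge 1-\varepsilon$ to show that $\bar g(s)$ is $O(M\varepsilon)$-close to $g(s,a_{i_s})$, since $\|\bar g(s) - g(s,a_{i_s})\| \le 2M \sum_{a'\neq a_{i_s}} \pi_\theta(a'|s) \le 2M\varepsilon$. Splitting the inner sum over $a$ into the dominant term and the remainder, the dominant term contributes at most $4M^2\varepsilon^2$ (small centered norm) and the remaining terms contribute at most $4M^2\varepsilon$ (centered norm bounded by $2M$ times the small residual mass $\sum_{a\neq a_{i_s}}\pi_\theta(a|s)\le\varepsilon$). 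Summing over $s$ against $d_\rho^{\theta}$, whose state marginal sums to one, yields $\mathrm{tr}(F^\theta(\rho)) \le 4M^2(\varepsilon + \varepsilon^2)$, which tends to $0$ as $\varepsilon \to 0$ and closes the contradiction.

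I expect the only genuine obstacle to be the absence of the clean explicit form $e_a - \pi(\cdot|s)$ that was available for softmax; here $g(s,a) - \bar g(s)$ has no closed form. The resolution is exactly the bounded-gradient hypothesis $\|\nabla_\theta f_\theta(s,a)\| \le M$, which caps the centered score by $2M$ uniformly and lets the near-determinism of $\Pi_\varepsilon$ drive the score variance, and therefore the trace, to zero. Everything else is the routine positive-semi-definite and trace bookkeeping already carried out in the softmax proof.
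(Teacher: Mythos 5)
Your proposal is correct and follows essentially the same argument as the paper's proof: the same score-function formula, the same bounds $\|g(s,a_{i_s})-\bar g(s)\|\le 2M\varepsilon$ for the dominant action and $2M$ for the rest, and the same split over dominant versus residual probability mass, yielding the identical $4M^2(\varepsilon+\varepsilon^2)$ bound. The only difference is that you bound the trace of $F^\theta(\rho)$ (as the paper itself does in its softmax proof) while the paper's proof of this proposition bounds the operator norm; since both quantities dominate every eigenvalue of a positive semi-definite matrix, the two routes are interchangeable.
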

\begin{proof}
For a policy $\pi_\theta\in\Pi_\varepsilon$, which is $\varepsilon$-close to a deterministic policy, we first compute $\nabla_\theta \log \pi_\theta(a|s)$ as follows:
\begin{align*}
    \nabla_\theta \log \pi_\theta(a|s) 
    =& \nabla f_\theta(s,a)-\sum_{a'\in A}\nabla f_\theta(s,a') \pi_\theta(a'|s) 
    =\sum_{a'\in A,a'\neq a}\left(\nabla f_\theta(s,a)-\nabla f_\theta(s,a')\right) \pi_\theta(a'|s).
\end{align*}
To satisfy Assumption \ref{smoli} for log-linear and neural softmax parameterizations, it is commonly assumed that $\|\nabla f_\theta(s,a)\| \le M$ for all $a\in \mathcal{A}$ and $s\in \mathcal{S}$. Under this condition, we can bound $\nabla_\theta \log \pi_\theta(a_{i}|s)$ into two cases. For $i=i_s$, we know that $\sum_{a'\in A,a'\neq a_{i_s}}\pi_\theta(a'|s) \le \varepsilon$, therefore
\begin{align*}
   &\|\nabla_\theta \log \pi_\theta(a_{i_s}|s)\|
   = \|\sum_{a'\in A,a'\neq a_{i_s}}\left(\nabla f_\theta(a_i|s)-\nabla f_\theta(a'|s)\right) \pi(a'|s)\| 
   \le 2M \varepsilon.
\end{align*}
For $i\neq i_s$, we know that $\sum_{a'\in A,a'\neq a_{i}}\pi_\theta(a'|s) \le 1$, therefore
\begin{align*}
&\|\nabla_\theta \log \pi_\theta(a_{i}|s)\|
= \|\sum_{a'\in A,a'\neq a_i}\left(\nabla f_\theta(a_i|s)-\nabla f_\theta(a'|s)\right) \pi(a'|s)\| 
\le 2M.
\end{align*}
Using the above two inequalities, we can upper bound the norm of the Fisher information matrix for the policy $\pi_\theta$ as
\begin{align}
   \left\| F^\theta(\rho)\right\|
   =&\|\sum_{s\in \St} d_\rho^\theta(s)\sum_{i=1}^{|\A|} \pi_\theta(a_{i_s}|s)\nabla_\theta \log\pi_\theta(a_{i_s}|s)\left(\nabla_\theta\log\pi_\theta(a_{i_s}|s)\right)^T\|\nonumber\\
    \le&\|\sum_{s\in \St} d_\rho^\theta(s)\pi_\theta(a_{i_s}|s)\nabla_\theta\log\pi_\theta(a_{i_s}|s)\left(\nabla_\theta\log\pi_\theta(a_{i_s}|s)\right)^T\|\nonumber\\
    &+ \|\sum_{s\in \St} d_\rho^\theta(s)\sum_{a\neq a_{i_s},a\in \A} \pi(a|s)\nabla \log\pi_\theta(a|s)\left(\nabla \log\pi_\theta(a|s)\right)^T\|\nonumber\\
    \le& 4M^2 \varepsilon^2\|\sum_{s\in \St} d_\rho^\theta(s)\|+ 4M^2\|\sum_{s\in \St} d_\rho^\theta(s)\sum_{a_i\neq a_{i_s},a_i\in \A} \pi_\theta(a_{i}|s_i)\|\nonumber\\
    \le& 4\varepsilon^2M^2+4\varepsilon M^2.\nonumber
\end{align}
Therefore, when a policy $\pi_\theta$ approaches a deterministic policy, the norm of Fisher information matrix approaches 0. Then, we cannot find a positive constant $\mu_F$ such that the smallest non-zero eigenvalue of {$F^\theta(\rho)$} is lower bounded by {$\mu_F$}.
\end{proof}
\textbf{Discussion} For continuous action space, Assumption \ref{fn} is satisfied by Gaussian policies $\pi_\theta(\cdot|s) = \mathcal{N}(\mu_\theta(s), \Sigma)$ when the parameterized mean $\mu_\theta(s)$ has a full row rank Jacobian and the covariance matrix $\Sigma$ is fixed. Cauchy policies also satisfy these conditions.

   For discrete action space, we proved that softmax, log-linear and neural softmax parameterizations fail to satisfy Assumption \ref{fn} as the policy becomes more deterministic. Building upon this fact, to ensure softmax, log-linear and neural softmax parameterizations satisfy relaxed Fisher non-degeneracy, we can constrain the parameter space to a compact set, preventing the policy from approaching determinism.
\subsection{Discussion on Assumption \ref{bae}}\label{disbae}
Inspired by \cite{wang2019neural}, where the authors demonstrated that employing a rich two-layer neural-network parameterization can yield small \(\varepsilon_{bias}\) values, we generalize their result to demonstrate that increasing the richness of the policy set can lead to a reduction in the transfer error \(\varepsilon_{bias}\) for the log-linear and neural softmax policy parameterizations.

We consider the log-linear and neural softmax policy parameterization given by
$$\pi_\theta(a|s)=\frac{\exp{f_\theta(s,a)}}{\sum_{a'\in \A} \exp{f_\theta(s,a')}}.$$
\begin{proposition}
For log-linear and neural softmax policy parameterizations, increasing the dimension of the set \(\{f'(s,a),s\in \mathcal{S},a\in\mathcal{A}\}\) such that \(\{f(s,a),s\in \mathcal{S},a\in\mathcal{A}\}\subsetneq\{f'(s,a),s\in \mathcal{S},a\in\mathcal{A}\}\) results in a richer parameterization. This richer parameterization leads to a decrease in the transfer error \(\varepsilon_{bias}\).
\end{proposition}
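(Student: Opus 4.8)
The plan is to recognize the transfer error as a weighted least-squares residual for approximating the advantage function by the span of the score features, and then to show that enlarging the parameterization can only shrink this residual. First I would introduce the score feature map $g_\theta(s,a):=\nabla_\theta\log\pi_\theta(a|s)$ and, using the softmax form of the policy, write $g_\theta(s,a)=\nabla_\theta f_\theta(s,a)-\sum_{a'\in\A}\pi_\theta(a'|s)\nabla_\theta f_\theta(s,a')$, so that each coordinate of $g_\theta$ is a centered function of $(s,a)$ with $\sum_{a\in\A}\pi_\theta(a|s)g_\theta(s,a)=0$. By the policy gradient theorem, $\nabla_\theta V_i^\theta(\rho)=\tfrac{1}{1-\gamma}\E_{(s,a)\sim d_\rho^\theta}[A_i^\theta(s,a)g_\theta(s,a)]$, and $F^\theta(\rho)=\E_{(s,a)\sim d_\rho^\theta}[g_\theta g_\theta^\top]$, so $\mu_i^*=(F^\theta(\rho))^{-1}\nabla_\theta V_i^\theta(\rho)$ is precisely the normal-equation solution of $\min_{\mu}\E_{(s,a)\sim d_\rho^\theta}[(A_i^\theta(s,a)-(1-\gamma)\mu^\top g_\theta(s,a))^2]$. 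Consequently $(1-\gamma)\mu_i^{*\top}g_\theta$ is the $L^2(d_\rho^\theta)$-projection of $A_i^\theta$ onto the linear subspace $\mathcal{G}_\theta:=\mathrm{span}\{g_\theta^{(k)}:k\in[d]\}$ of functions on $\St\times\A$.

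Next I would establish the monotonicity. When the feature class is enlarged so that $\{f(s,a)\}\subsetneq\{f'(s,a)\}$ in a nested fashion --- for the log-linear case $\phi$ is a sub-block of the enlarged feature $\phi'$, and for the neural softmax case the additional neurons leave the original sub-network intact --- the set of attainable logits grows, and differentiating in $\theta$ shows the score span grows accordingly, $\mathcal{G}_\theta\subseteq\mathcal{G}_\theta'$. Because the $L^2$-projection residual onto a larger subspace cannot increase, the best $d_\rho^\theta$-weighted approximation error of $A_i^\theta$ is non-increasing under refinement; in the tabular softmax limit $\mathcal{G}_\theta'$ contains every centered function of $(s,a)$ and this residual vanishes, which recovers the known fact that softmax yields $\varepsilon_{bias}=0$.

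The hard part is that the transfer error is evaluated under $d_\rho^{\pi^*}$, whereas $\mu_i^*$ is optimal only under $d_\rho^\theta$, so projection monotonicity under $d_\rho^\theta$ does not by itself transfer to the off-policy measure. I would resolve this on two levels. In the exact-representation regime, once the refined class drives the $d_\rho^\theta$-residual to zero we have $A_i^\theta(s,a)=(1-\gamma)\mu_i^{*\top}g_\theta(s,a)$ for $d_\rho^\theta$-almost every $(s,a)$; since softmax, log-linear, and neural softmax policies assign strictly positive probability to every action, $\mathrm{supp}(d_\rho^{\pi^*})\subseteq\mathrm{supp}(d_\rho^\theta)$, so the identity persists $d_\rho^{\pi^*}$-almost everywhere and the transfer error is zero. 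In the approximate regime I would introduce a concentrability factor $C:=\sup_{(s,a)}d_\rho^{\pi^*}(s,a)/d_\rho^\theta(s,a)$ and bound the transfer error by $C$ times the $d_\rho^\theta$-residual, so that the monotone decrease of the latter along $\mathcal{G}_\theta\subseteq\mathcal{G}_\theta'$ gives a monotone upper bound on $\varepsilon_{bias}$. The delicate step to make rigorous is the inclusion $\mathcal{G}_\theta\subseteq\mathcal{G}_\theta'$ itself, since the centering term $\sum_{a'\in\A}\pi_\theta(a'|s)\nabla_\theta f_\theta(s,a')$ mixes coordinates; I would verify it by checking that the enlargement of $\nabla_\theta f_\theta$ together with its action-average stays within the enlarged span, which holds precisely because the refinement is nested.
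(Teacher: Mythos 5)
Your proposal is correct and takes essentially the same route as the paper's proof: both identify $(1-\gamma)\mu_i^{*\top}\nabla_\theta\log\pi_\theta$ as the $L^2(d_\rho^\theta)$-projection of $A_i^\theta$ onto the span of the score features (the paper writes the residual in matrix form as $\|\mathbb{A}_i^{\pi_\theta}-\mathbf{P}_{\mathbb{B}_{\pi_\theta}}\mathbb{A}_i^{\pi_\theta}\|_2^2$), both argue that nesting the feature class enlarges this span so the residual can only decrease, and both pass from the on-policy measure $d_\rho^\theta$ to $d_\rho^{\pi^*}$ via a concentrability-type bound (the paper's first displayed inequality, quoted from Agarwal et al.). Your explicit treatment of the off-policy issue --- the support argument in the exact-representation regime and the acknowledgment that in the approximate regime one only gets a monotone \emph{upper bound} on $\varepsilon_{bias}$ --- is, if anything, slightly more careful than the paper's one-line invocation of that bound, but it does not constitute a different method.
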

\begin{proof}
We first upper bound the transfer error as shown in \cite[page 29]{agarwal2021theory}:
\[
\begin{aligned}
    &L(\mu_i^*,\theta,d_\rho^{\pi^*}) \le \max_{s\in\St}\frac{\sum_{a\in\A}d_\rho^{\pi^*}(s,a)}{(1-\gamma)\rho(s)}L(\mu_i^*,\theta,d_\rho^{\theta}),
\end{aligned}
\]
where \(L(\mu_i^*,\theta,d_\rho^{\theta}) := \mathbb{E}_{(s,a)\sim d_\rho^{\theta}}[({A}_i^\theta(s,a)-(1-\gamma){\mu_i^*}^T \nabla_\theta\log\pi_\theta(a|s))^2]\) and \(\mu_i^* := \left(F^\theta(\rho)\right)^{-1} \nabla_\theta V_i^\theta(\rho)\). In the following, we demonstrate that \(L(\mu_i^*,\theta,d_\rho^{\theta})\) can be reduced due to the richer parametrization.

For every policy $\pi_\theta \in \Pi_{f}$, we set a vector-valued function $\mathbb{A}: \Pi_{f} \to \mathbb{R}^{|\St||\A|}$ as
\begin{equation*}
    \mathbb{A}_i^{\pi_\theta}:=\begin{bmatrix}
\sqrt{d_\rho^{\pi}(s_1,a_1)}{A}_i^{\pi}(s_1,a_1) \\
\vdots \\
\sqrt{d_\rho^{\pi}(s_{|\St|},a_{|\A|})}{A}_i^{\pi}(s_{|\St|},a_{|\A|})
\end{bmatrix}\nonumber
\end{equation*}
and set ${\mathbb{B}}$ as
\begin{equation*}
    \mathbb{B}_{\pi_\theta}:=\begin{bmatrix}
\sqrt{d_\rho^{\pi}(s_1,a_1)}\nabla_\theta \log\pi_\theta(a_{1}|s_1)\\
\vdots \\
\sqrt{d_\rho^{\pi}(s_{|\St|},a_{|\A|})}\nabla_\theta \log\pi_\theta(a_{|\A|}|s_{|\St|})
\end{bmatrix}^T.\nonumber
\end{equation*}
Notice that 
\begin{align*}
    \nabla_\theta \log \pi_\theta(a|s)=\sum_{\substack{a'\in \A, \,a'\neq a}}\left(\nabla f_\theta(s,a)-\nabla f_\theta(s,a')\right) \pi_\theta(a'|s).
\end{align*}
Then the column space of $\mathbb{B}_{\pi_\theta}$ is the span of $\{\nabla_\theta f(s,a)\}_{\substack{s\in\St,\, a\in\A}}$. Using the above notations, we write Fisher information matrix as 
\begin{align*}
    F^{\pi_\theta}(\rho)&=\mathbb{E}_{(s,a)\sim d_\rho^{\pi}}[\nabla_\theta \log\pi_\theta(a|s)\left(\nabla_\theta \log\pi_\theta(a|s)\right)^T]= \mathbb{B}_{\pi_\theta} \mathbb{B}_{\pi_\theta}^T,\nonumber
\end{align*}
and the gradient of value function as
\begin{align*}
    \nabla_\theta V_i^\theta(\rho)&=\frac{1}{1-\gamma}\mathbb{E}_{(s,a)\sim d_\rho^\pi}\left[\nabla_\theta\log\pi_\theta(a|s){A}_i^\theta(s,a)\right]=\frac{1}{1-\gamma}\mathbb{B}_{\pi_\theta} \mathbb{A}_i^{\pi_\theta}.\nonumber
\end{align*}
and $\mu_i^*= F^{\pi_\theta}(\rho)^{-1} \nabla_\theta V_i^\theta(\rho)=\frac{1}{1-\gamma}(\mathbb{B}_\pi \mathbb{B}^T_\pi)^{-1}\mathbb{B}_\pi \mathbb{A}_i^\pi$. Therefore, we have
\begin{align*}
L(\mu_i^*,\pi,d_\rho^{\pi_\theta})
=&\mathbb{E}_{(s,a)\sim d_\rho^{\pi}}\left[\left({A}_i^{\pi_\theta}(s,a)-(1-\gamma){\mu_i^*}^T \nabla_\theta\log\pi_\theta(a|s)\right)^2\right]\\
=&\sum_{s,a} \Bigl(\sqrt{d_\rho^{\pi}(s,a)}\left(\mathbb{B}_{\pi_\theta} \mathbb{B}_{\pi_\theta}^T\right)^{\dagger}\mathbb{B}_{\pi_\theta} \mathbb{A}_i^{\pi_\theta}\nabla_\theta \log\pi_\theta(a|s)-\sqrt{d_\rho^{\pi}(s,a)}{A}_i^{\pi_\theta}(s,a)\Bigr)^2\\
=&\left\|\mathbb{A}_i^{\pi_\theta}-\mathbb{B}_{\pi_\theta}^T(\mathbb{B}_{\pi_\theta} \mathbb{B}^T_{\pi_\theta})^{\dagger}\mathbb{B}_{\pi_\theta} \mathbb{A}_i^{\pi_\theta}\right\|_2^2\\
=&\left\|\mathbb{A}_i^{\pi_\theta}-\mathbf{P}_{\mathbb{B}_{\pi_\theta}}\mathbb{A}_i^{\pi_\theta}\right\|_2^2,
\end{align*}
where $\mathbf{P}_{\mathbb{B}_{\pi_\theta}}$ is the orthogonal projection onto the row space of $\mathbb{B}_{\pi_\theta}\in \mathbb{R}^{d\times|\mathcal{S}||\mathcal{A}|}$. If we increase the dimension of $\{f'(s,a),s\in \mathcal{S},a\in\mathcal{A}\}$ such that $\{f(s,a),s\in \mathcal{S},a\in\mathcal{A}\}\subsetneq\{f'(s,a),s\in \mathcal{S},a\in\mathcal{A}\}$, it results in a more richer parameterization and a decrease in $L(\mu_i^*,\pi,d_\rho^{\pi})$ since the column rank of $\mathbb{B}_{\pi_\theta}$ has increased.
\end{proof}
\section{Proofs}
\label{appproof}
\subsection{Proof of Proposition~\ref{prosmo}}\label{proofsml}
To set up sub-Gaussian bounds for the gradient estimates in the RL case, we require the following lemma.
\begin{lemma}\label{veber}
\textbf{Vector Bernstein Inequality \cite[Lemma 18]{kohler2017sub}: } Let $x_i\in\R^d$ be independent vector-valued random variables for $i\in[n]$. If there exist constants $B,\sigma\ge 0$ such that $\mathbb{E}[x_i]=0$, $\|x_i\|\le B$ and $\mathbb{E}[\|x_i\|^2]\le \sigma^2$, the following inequality holds:
\begin{align*}
    \PP\left(\left\|\frac{\sum_{i=1}^nx_i}{n}\right\|\ge \varepsilon\right)\le \exp\left(\frac{1}{4}-\frac{n\varepsilon^2}{8\sigma^2}\right),
\end{align*}
where $\varepsilon\in(0,\frac{\sigma^2}{B})$.
\end{lemma}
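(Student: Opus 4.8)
The plan is to establish the tail bound by the standard Chernoff (exponential-moment) method, after reducing the statement about the averaged sum to one about the raw sum $S_n := \sum_{i=1}^n x_i$. Since $\|\frac{1}{n}\sum_{i} x_i\| \ge \varepsilon$ is equivalent to $\|S_n\| \ge n\varepsilon$, and $\sum_{i}\mathbb{E}\|x_i\|^2 \le n\sigma^2 =: V$, it suffices to prove $\PP(\|S_n\| \ge t) \le \exp(\frac{1}{4} - \frac{t^2}{8V})$ for $t \in (0, V/B)$ and then substitute $t = n\varepsilon$, which reproduces the claimed exponent $\frac{1}{4} - \frac{n\varepsilon^2}{8\sigma^2}$ and the range $\varepsilon \in (0,\sigma^2/B)$ exactly. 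For any $\lambda > 0$, Markov's inequality gives $\PP(\|S_n\| \ge t) \le e^{-\lambda t}\,\mathbb{E}[e^{\lambda \|S_n\|}]$, so the entire problem reduces to controlling the moment generating function of the scalar random variable $\|S_n\|$.

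I would control this MGF using two ingredients. First, a Jensen/orthogonality bound on the mean: writing $S_k := \sum_{i \le k} x_i$ and using $\mathbb{E}[x_i] = 0$, the increments are orthogonal in $L^2$, so $\mathbb{E}\|S_n\|^2 = \sum_i \mathbb{E}\|x_i\|^2 \le V$, whence $\mathbb{E}\|S_n\| \le \sqrt{\mathbb{E}\|S_n\|^2} \le \sqrt{V}$ by concavity of the square root. Second, and this is the crux, I would control the fluctuation of $\|S_n\|$ about this mean by a sequential (martingale) argument on the telescoping increments $D_k := \|S_k\| - \|S_{k-1}\|$. Each increment satisfies $|D_k| \le \|x_k\| \le B$ by the triangle inequality, while the second-order expansion $\|a+b\| \le \|a\| + \langle a/\|a\|, b\rangle + \frac{\|b\|^2}{2\|a\|}$ together with $\mathbb{E}[x_k \mid \mathcal{F}_{k-1}] = 0$ cancels the first-order term, leaving a conditional drift governed by $\mathbb{E}\|x_k\|^2 \le \sigma^2$ rather than by $B$. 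Feeding these into the sequential conditioning of $\mathbb{E}[e^{\lambda \|S_n\|}]$, via a scalar estimate such as $e^{x} \le 1 + x + \frac{x^2}{2} e^{|x|}$ applied increment by increment, yields an MGF bound of the form $\mathbb{E}[e^{\lambda \|S_n\|}] \le \exp(\lambda \sqrt{V} + c\lambda^2 V)$ valid while $\lambda B$ stays bounded.

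Finally I would optimize the Chernoff exponent $-\lambda t + \lambda\sqrt{V} + c\lambda^2 V$ over $\lambda$. The unavoidable linear term $\lambda\sqrt{V}$ (coming from $\mathbb{E}\|S_n\| \le \sqrt V$) is what forces the restriction $t \le V/B$: only in this regime does the near-optimal $\lambda$ remain inside the range where the MGF bound holds, and bounding the residual $\lambda\sqrt{V}$-type contribution crudely is precisely what produces the additive constant $\frac{1}{4}$ in the final exponent. The main obstacle, and the step I would handle most carefully, is obtaining the variance-dependent ($\sigma^2$) rate rather than the range-dependent ($B^2$) rate that a naive bounded-differences (McDiarmid/Azuma) argument would give; this is exactly the Bernstein-versus-Hoeffding distinction, and it hinges on exploiting the zero-mean property to kill the linear term in the norm expansion, so that the conditional second moment, not the uniform bound $B$, drives the concentration.
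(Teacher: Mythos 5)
The paper does not actually prove this lemma: it is imported verbatim as Lemma 18 of \cite{kohler2017sub}, which is itself a rescaling of Gross's dimension-free vector Bernstein inequality, $\PP\left(\left\|\sum_{i=1}^n x_i\right\| \ge \sqrt{V}+s\right)\le \exp\left(-\frac{s^2}{4V}\right)$ with $V:=\sum_{i=1}^n\E\|x_i\|^2$, valid for $s$ up to order $V/B$. The form stated here follows by taking $s=n\varepsilon-\sqrt{V}$ and applying the elementary inequality $(a-b)^2\ge \frac{a^2}{2}-b^2$, which is exactly where the additive $\frac14$ and the $8$ in the denominator come from (when $n\varepsilon\le\sqrt{V}$ the exponent is positive and the bound is vacuous). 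So there is no in-paper argument to compare against; the only question is whether your sketch stands on its own as a proof.

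As written, it does not: the crux step has a genuine gap. Your outer Chernoff reduction, the orthogonality estimate $\E\|S_n\|\le\sqrt{V}$, the identification of the range $t\le V/B \Leftrightarrow \varepsilon\le\sigma^2/B$, and your account of the origin of the constant $\frac14$ are all correct. But the MGF bound $\E\left[e^{\lambda\|S_n\|}\right]\le \exp\left(\lambda\sqrt{V}+c\lambda^2 V\right)$, which is the entire content of the lemma, is not delivered by the sequential argument you describe. The expansion $\|a+b\|\le \|a\|+\langle a/\|a\|,b\rangle+\frac{\|b\|^2}{2\|a\|}$ leaves, after conditioning, a drift of at most $\frac{\sigma^2}{2\|S_{k-1}\|}$, and this degenerates exactly where it matters: near the origin the bound is useless, and at $S_{k-1}=0$ the conditional drift of $\|S_k\|$ is $\E\|x_k\|$, which can be of order $\sigma$ rather than $\sigma^2/(\cdot)$. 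Since the walk may spend many steps with $\|S_{k-1}\|=O(\sigma)$, the drift you can certify this way totals order $n\sigma$, not the $\sqrt{V}=\sqrt{n}\,\sigma$ that your optimization over $\lambda$ requires; moreover the compensator is random and unbounded, so it cannot simply be pulled out of the exponential moment. This singularity is precisely the obstruction that the classical proofs are engineered to avoid: Pinelis's argument replaces $e^{\lambda\|\cdot\|}$ by $\cosh(\lambda\|\cdot\|)$, a smooth even function of the norm with no singularity at the origin, and uses $2$-smoothness of the Hilbert norm to get a supermartingale bound driven by conditional second moments; alternatively one can regularize the norm to $\sqrt{\|\cdot\|^2+V}$ (Hessian bounded by $1/\sqrt{V}$, hence total drift $\le\frac12\sqrt{V}$) and invoke Freedman's martingale Bernstein inequality as the scalar engine. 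With one of these devices (or by simply citing Gross, as the paper effectively does) your plan closes; without one, the step you yourself flag as the crux is asserted rather than proved.
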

\begin{proof}[Proof of Proposition~\ref{prosmo}]
The first property has been proven in \cite[Lemma 2]{bai2022achieving2} and \cite[Lemma 4.4]{yuan2022general}. The bias bound $b^1(H)$ has been established  in \cite[Lemma 4.5]{yuan2022general}.

To establish the upper bound of $b^0(H)$ in the second property, we consider $i \in\{0,\dots,m\}$:
\begin{align*}
        \left|\mathbb{E}\left[\hat V_i^\theta(\rho)\right]- V_i^\theta(\rho)\right|
         \le  \frac{1}{n}\left|\mathbb{E}\left[\sum_{j=1}^{n}\sum_{t=0}^{H-1}\gamma^t r_i(s_t^j,a_t^j)-\sum_{j=1}^{n}\sum_{t=0}^{\infty}\gamma^t r_i(s_t^j,a_t^j)\right]\right|
         \le \sum_{t=H}^{\infty}\gamma^t =\frac{\gamma^{H}}{1-\gamma}.
\end{align*}
Next, we prove that the value function estimator $\hat V_i^\theta(\rho)$ has a sub-Gaussian bound. For $i \in\{0,\dots,m\}$, we note that $\hat{V}^{\theta}_i(\rho)$ is bounded in the interval $[\frac{-1}{1-\gamma},\frac{1}{1-\gamma}]$. Using Hoeffding's inequality, we have for any $\varepsilon>0$,
\begin{align}
    \PP\left(\left|\hat{V}^{\theta}_{i}(\rho)-\mathbb{E}\left[\hat V_i^\theta(\rho)\right]\right|\ge \varepsilon\right)\le 2\exp\left(-\frac{n\varepsilon
    ^2(1-\gamma)^2}{2}\right). \nonumber
\end{align}
We can rewrite the above inequality as:
\begin{align}
    \PP\left(\left|\hat{V}^{\theta}_{i}(\rho)-\mathbb{E}\left[\hat V_i^\theta(\rho)\right]\right|\le \sigma^0(n)\sqrt{\ln{\frac{2}{\delta}}}\right)\ge 1-\delta,\nonumber
    \end{align}
for any $\delta\in(0,1)$, where $\sigma^0(n):=\frac{\sqrt{2}}{\sqrt{n}(1-\gamma)}$. 

Finally, we prove that the gradient estimator $\hat \nabla V_i^\theta(\rho)$ has a sub-Gaussian bound. From \cite[Lemma 4.2]{yuan2022general}, we have
\begin{align*}
     \Var \left[\left(\hat\nabla_\theta{V}^{\theta}_i(\rho)\right)_j\right]\le \frac{M_g^2}{(1-\gamma)^3}.
\end{align*}
We conclude that $\left\|\left(\hat\nabla_\theta{V}^{\theta}_i(\rho)\right)_j\right\|\le \frac{M_g}{(1-\gamma)^2}$ from \cite[Proposition 4.2]{Xu2020Sample} and and $\left\|\mathbb{E} \left(\hat\nabla_\theta{V}^{\theta}_i(\rho)\right)_j\right\|\le \frac{M_g}{(1-\gamma)^2}$ from \cite[Lemma B.1]{liu2020improved}. Therefore, we have
\begin{align*}
   \left\|\left(\hat\nabla_\theta{V}^{\theta}_i(\rho)\right)_j-\mathbb{E}\left[\left(\hat \nabla_\theta V_i^\theta(\rho)\right)_j\right]\right\|\le \frac{2M_g}{(1-\gamma)^2}.
\end{align*}
We can apply Lemma \ref{veber} to the estimator $\hat \nabla_\theta V_i^\theta(\rho):=\frac{1}{n}\sum_{j=1}^{n}\left(\hat \nabla_\theta V_i^\theta(\rho)\right)_j$, yielding the inequality
\begin{align}  
&\PP\left(\left\|\hat\nabla_\theta{V}^{\theta}_i(\rho)-\mathbb{E}\left[\hat \nabla_\theta V_i^\theta(\rho)\right]\right\|\ge \varepsilon\right)
\le\exp\left(\frac{1}{4}-\frac{n\varepsilon^2(1-\gamma)^3}{8M_g^2}\right),\nonumber
\end{align}
where $\varepsilon\in(0,\frac{M_g}{2(1-\gamma)})$. Therefore, for any $\delta\in(0,1)$ and $n\ge8\ln{\frac{\exp{\frac{1}{4}}}{\delta}}$, we have
\begin{align}  
&\PP\left(\left\|\hat\nabla_\theta{V}^{\theta}_i(\rho)-\mathbb{E}\left[\hat \nabla_\theta V_i^\theta(\rho)\right]\right\|\le  \sigma^1(n)\sqrt{\ln{\frac{e^{\frac{1}{4}}}{\delta}}}\right)
\ge 1-\delta,\nonumber
\end{align}
where $\sigma^1(n):=\frac{2\sqrt{2}M_g}{\sqrt{n}(1-\gamma)^{\frac{3}{2}}}$.
\end{proof}
\subsection{Proof of Lemma \ref{gapdelt}}\label{proof_gapdelt}
For the rest of the paper, we first define \(\Delta_t := \hat \nabla_\theta B_\eta^{\theta_t}(\rho) - \nabla_\theta B_\eta^{\theta_t}(\rho)\). Now, we start the proof of Lemma \ref{gapdelt}.
\begin{proof}[Proof of Lemma \ref{gapdelt}]
we can bound $\|\Delta_t\|$ as
\begin{align}
    &\|\Delta_t\|\\
    =&\Biggl\|\nabla_\theta V_0^{\theta_t}(\rho)-\hat\nabla_\theta V_0^{\theta_t}(\rho)+\sum_{i=1}^{m}\biggl[\eta\frac{\hgf-\gf}{\hat \alpha_i(t)}+\eta \gf \left(\frac{1}{\hat{\alpha}_i(t)}-\frac{1}{{\alpha}_i(t)}\right)\biggr]\Biggr\|\nonumber\\
    \le &\left\|\nabla_\theta V_0^{\theta_t}(\rho)-\hat\nabla_\theta V_0^{\theta_t}(\rho)\right\|+\sum_{i=1}^{m}\Biggl[\eta\frac{\left\|\hgf-\gf\right\|}{\hat \alpha_i(t)}+\eta \left\|\gf\right\| \left|\frac{1}{\hat{\alpha}_i(t)}-\frac{1}{{\alpha}_i(t)}\right|\Biggr].\label{bounddel}
\end{align}
Using the sub-Gaussian bound in Proposition \ref{prosmo}, we have 
\begin{align*}
        &\PP\left\{\left|\hat \alpha_i(t)-\alpha_i(t)\right|\le b^0(H)+\sigma^0(n)\sqrt{\ln\frac{2}{\delta}}\right\}\ge 1-\delta,\\
        &\PP\left\{\left\|\hgf- \gf\right\|\le b^1(H)+\sigma^1(n)\sqrt{\ln{\frac{e^{\frac{1}{4}}}{\delta}}}\right\}\ge 1-\delta.
\end{align*}
Also, we know $\|\gf\|\le L$ by Proposition \ref{prosmo}. Combining these properties into inequality \eqref{bounddel}, we have:

\vspace{-0.3cm}
\begin{small}
   \begin{align*}
    \PP\Bigl(\|\hat\nabla_\theta B_\eta^{\theta}(\rho)-\nabla_\theta B_\eta^{\theta}(\rho)\|\le\bigl(1+\sum_{i=1}^{m}\frac{\eta}{\hat V_i^\theta(\rho)}\bigr)\bigl( b^1(H)+\sigma^1(n)\sqrt{\ln{\frac{e^{\frac{1}{4}}}{\delta}}}\bigr)+\sum_{i=1}^{m}\frac{\eta L}{\hat V_i^\theta(\rho) V_i^\theta(\rho)}\allowbreak\bigl( b^0(H)+\sigma^0(n)\sqrt{\ln\frac{2}{\delta}}\bigr)\Bigr)\ge 1-\delta.
\end{align*} 
\end{small}
\vspace{-0.2cm}

Meanwhile, for each $i\in[m]$, we have $\mathbb{P}\left(\frac{3V_i^\theta(\rho)}{4} \leq \hat V_i^\theta(\rho)\right) \geq 1-\delta$ if $\sigma^0(n) \leq \frac{V_i^\theta(\rho)}{8\sqrt{\ln\frac{2}{\delta}}}$ and $b^0(H) \leq \frac{V_i^\theta(\rho)}{8}$, using the sub-Gaussian bound from Proposition \ref{prosmo}. Therefore, to bound the gradient estimation error $\|\hat\nabla_\theta B_\eta^{\theta}(\rho)-\nabla_\theta B_\eta^{\theta}(\rho)\|$ by $\varepsilon$ with probability $1-\delta$, we need to set $\sigma^0(n)$, $\sigma^1(n)$, $b^0(H)$ and $b^1(H)$ as follows:
\begin{align*}
    &\sigma^0(n) \le \min_{i\in[m]}\left\{ \frac{V_i^\theta(\rho)}{8\sqrt{\ln\frac{2}{\delta}}},\frac{3\varepsilon V_i^\theta(\rho)^2}{16m\eta L\sqrt{\ln\frac{2}{\delta}}}\right\},\,\sigma^1(n)\le \min_{i\in[m]}\left\{\frac{3\varepsilon V_i^\theta(\rho)}{4(4m\eta+3V_i^\theta(\rho))\sqrt{\ln{\frac{e^{\frac{1}{4}}}{\delta}}}}\right\}, \\
    &b^0(H) \le \min_{i\in[m]}\left\{ \frac{V_i^\theta(\rho)}{8},\frac{3\varepsilon V_i^\theta(\rho)^2}{16m\eta L}\right\},\, b^1(H)\le \min_{i\in[m]}\left\{\frac{3\varepsilon V_i^\theta(\rho)}{4(4m\eta+3V_i^\theta(\rho))}\right\}.
\end{align*}
To satisfy the above inequality, we can choose $n$ and $H$ as follows using Proposition \ref{prosmo}:
\begin{align*}
    n = \mathcal{O}\left(\frac{\eta^2\ln\frac{1}{\delta}}{(1-\gamma)^6\varepsilon^2\min_{i\in[m]}V_i^\theta(\rho)^4}\right),\, H=\mathcal{O}\left(\ln \frac{1}{\varepsilon \min_{i\in[m]}V_i^\theta(\rho)}\right).
\end{align*}
\end{proof}

\subsection{Determining stepsize and proof of Proposition \ref{small}}
\label{psmall}

\subsubsection{Determining stepsize using local smoothness}
\label{step}
In this section, we discuss the choice of stepsize $\gamma_t$ as introduced in Algorithm \ref{alg:cap}, specifically in line 10. For simplicity, we denote 
\begin{align*}
&\alpha_i(t):=V_i^{\theta_t}(\rho),\,\hat \alpha_i(t):=\hat V_i^{\theta_t}(\rho),\,\underline \alpha_i(t):=\hat V_i^{\theta_t}(\rho)-b^0(H)-\sigma^0(n)\sqrt{\ln{\frac{2}{\delta}}}.
\end{align*}
Recall that the gradient of the log barrier function is defined as $\nabla_\theta B_\eta^\theta(\rho) = \nabla V_{0}^\theta(\rho) + \eta\sum_{i=1}^{m}\frac{\nabla V_{i}^\theta(\rho)}{V_{i}^\theta(\rho)}$. The log barrier function is non-smooth because the norm of the gradient exhibits unbounded growth when $\theta$ approaches the boundary of the feasible domain. However, it has been proven that within a small region around each iterate $\theta_t$, denoted as $\mathcal{R}(\theta_t) = \{\theta\in\Theta \mid V_i^{\theta}(\rho) \geq \frac{V_i^{\theta_t}(\rho)}{2}, i\in[m]\}$, the log barrier function is $M_t$ locally smooth around $\theta_t$ which is defined as
\begin{align*}
    \left\|\nabla B_\eta^{\theta}(\rho)-\nabla B_\eta^{\theta'}(\rho)\right\|\le M_t\|\theta-\theta'\|,\,\forall\, \theta,\theta'\in\mathcal{R}(\theta_t).
\end{align*}
Therefore, we first need to carefully choose the stepsize $\gamma_t$ to ensure that the next iterate $\theta_{t+1}$ remains inside the region $\mathcal{R}(\theta_t)$. In \cite[Lemma 3]{usmanova2022log}, the authors utilize the smoothness of the constraint functions to provide suggestions for choosing $\gamma_t$. We restate the lemma as follows:
\begin{lemma}
Under Assumption \ref{smoli}, if $$\gamma_t\le \min_{i\in[m]}\left\{\frac{\alpha_i(t)}{\sqrt{M\alpha_i(t)}+2|\beta_i(t)|}\right\}\frac{1}{\|\hgb\|},$$ 
we have
$$V_i^{\theta_{t+1}}(\rho)\ge \frac{V_i^{\theta_t}(\rho)}{2}.$$
\end{lemma}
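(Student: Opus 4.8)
The plan is to run a one-step descent-lemma argument using the $M$-smoothness of the value functions from Proposition~\ref{prosmo}. Along the update $\theta_{t+1}=\theta_t+\gamma_t\hgb$, I would apply the smoothness inequality to $-V_i^{\theta}(\rho)$ (which is also $M$-smooth, so the one-sided definition in the notations yields the lower-bound form), giving
$$V_i^{\theta_{t+1}}(\rho)\ge \alpha_i(t)+\gamma_t\langle\gf,\hgb\rangle-\frac{M}{2}\gamma_t^2\|\hgb\|^2.$$
Setting $\beta_i(t):=\langle\gf,\hgb/\|\hgb\|\rangle$ so that $\langle\gf,\hgb\rangle=\beta_i(t)\|\hgb\|$, and using $\beta_i(t)\ge-|\beta_i(t)|$ to handle the worst case in which the step decreases the constraint value, it then suffices to establish
$$\gamma_t|\beta_i(t)|\,\|\hgb\|+\frac{M}{2}\gamma_t^2\|\hgb\|^2\le\frac{\alpha_i(t)}{2}.$$

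The second step is to reduce this to a scalar inequality by introducing $u:=\gamma_t\|\hgb\|$, which by the hypothesis on $\gamma_t$ satisfies $u\le u_0:=\alpha_i(t)/(\sqrt{M\alpha_i(t)}+2|\beta_i(t)|)$. Since $|\beta_i(t)|u+\frac{M}{2}u^2$ is nondecreasing in $u\ge0$, it is enough to check the target at the endpoint $u=u_0$, i.e.\ to show $|\beta_i(t)|u_0+\frac{M}{2}u_0^2\le\alpha_i(t)/2$.

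Finally, I would verify this endpoint inequality by a short computation. Writing $a:=|\beta_i(t)|$, $b:=\alpha_i(t)$, $s:=\sqrt{Mb}$, and $D:=s+2a$ so that $u_0=b/D$ and $Mb=s^2$, the target becomes $2aD+s^2\le D^2$ after clearing denominators and substituting $Mb=s^2$. Expanding $D^2=s^2+4as+4a^2$ and $2aD=2as+4a^2$ gives $D^2-(2aD+s^2)=2as\ge0$, which closes the argument. The only genuinely non-routine points are the justification of the lower-bound form of smoothness from the one-sided definition and this final algebraic identity; everything else is mechanical.
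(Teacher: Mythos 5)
Your proof is correct, and there is no proof in the paper to compare it against step by step: the paper does not prove this lemma at all, but restates it as Lemma 3 of \cite{usmanova2022log}. Your derivation therefore supplies the missing argument, and it is essentially the standard one underlying that reference: a one-step bound from smoothness, reduction to a scalar quadratic inequality in $u=\gamma_t\|\hgb\|$, and verification at the endpoint $u_0$; your closing identity $D^2-(2aD+s^2)=2as\ge 0$ checks out.

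Two subtleties that you resolved deserve comment, because you had to make choices the paper leaves ambiguous. First, the lemma is only true with your definition $\beta_i(t)=\langle \gf,\hgb/\|\hgb\|\rangle$, i.e., the directional derivative of $V_i$ along the \emph{actual update direction}. The paper's Lemma \ref{123} instead writes $\beta_i(t)=\langle \gf,\gb/\|\gb\|\rangle$ with the true barrier gradient; under that reading the deterministic claim would fail, since controlling the derivative of $V_i$ along $\gb$ says nothing about its change along the step $\gamma_t\hgb$ unless one additionally bounds $\|\hgb-\gb\|$. Your choice is the internally consistent one: it matches the confidence bound $\overline{\beta}_i(t)$ constructed in Appendix \ref{step}, whose error terms account only for the discrepancy between $\hat\nabla_\theta V_i^{\theta_t}(\rho)$ and $\nabla_\theta V_i^{\theta_t}(\rho)$, not for any mismatch of directions. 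Second, the lower-bound form of smoothness you invoke is legitimate here: the $M$-smoothness of $V_i^\theta(\rho)$ in Proposition \ref{prosmo} is obtained from a bound on the Hessian norm (see \cite[Lemma 4.4]{yuan2022general}), which yields the two-sided inequality; equivalently, $-V_i^\theta(\rho)$ is the value function for the reward $-r_i$, which satisfies Assumption \ref{smoli} with the same constants and is hence itself $M$-smooth, exactly as you assert.
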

Next, the authors proved the existence of such $M_t$ \cite[Lemma 2]{usmanova2022log}, and we restate it as follows:
\begin{lemma}\label{123}
Let Assumption \ref{smoli} hold, the log barrier function $B_\eta^\theta(\rho)$ is $M_t$ locally smooth for $\theta \in \mathcal{R}(\theta_t)$, where
$$M_t = M +  \sum_{i=1}^{m}\frac{2M\eta }{\alpha_i(t)} + 4\eta\sum_{i=1}^{m}\frac{\left\langle \nabla_\theta V_i^{\theta_{t+1}}(\rho),\frac{\gb}{\left\|\gb\right\|}\right\rangle^2}{\left(\alpha_i(t)\right)^2}.$$
Moreover, if $\gamma_t\le \min_{i\in[m]}\left\{\frac{\alpha_i(t)}{\sqrt{M\alpha_i(t)}+2|\beta_i(t)|}\right\}\frac{1}{\|\hgb\|}$, then
$$M_t = M + \sum_{i=1}^{m}\frac{10 M\eta}{\alpha_i(t)} + 8\eta \sum_{i=1}^{m}\frac{\left(\beta_i(t)\right)^2}{\left(\alpha_i(t)\right)^2},$$
where $\beta_i(t) = \langle\nabla V_i^{\theta_t}(\rho),\frac{\gb}{\left\|\gb\right\|}\rangle$.    
\end{lemma}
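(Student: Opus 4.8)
The plan is to establish the bound by controlling the Hessian of $B_\eta^\theta(\rho)$ along the update direction over the local region $\mathcal{R}(\theta_t)$. First I would differentiate the gradient expression \eqref{gradientlog} once more to obtain
\[
\nabla_\theta^2 B_\eta^\theta(\rho) = \nabla_\theta^2 V_0^\theta(\rho) + \eta\sum_{i=1}^m\left(\frac{\nabla_\theta^2 V_i^\theta(\rho)}{V_i^\theta(\rho)} - \frac{\nabla_\theta V_i^\theta(\rho)\left(\nabla_\theta V_i^\theta(\rho)\right)^T}{\left(V_i^\theta(\rho)\right)^2}\right).
\]
Since the descent-type inequality driving the analysis only requires controlling the second derivative along the step, I would fix the direction $v := \gb/\|\gb\|$ and bound $v^T\nabla_\theta^2 B_\eta^\theta(\rho)v$ from below (equivalently, bound its negative part from above) for every $\theta$ on the segment from $\theta_t$ to $\theta_{t+1}$, which lies in $\mathcal{R}(\theta_t)$ by the preceding lemma's stepsize restriction.

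Next I would treat the three groups of terms separately. The objective Hessian contributes $|v^T\nabla_\theta^2 V_0^\theta(\rho)v|\le M$ by the $M$-smoothness in Proposition \ref{prosmo}. For the curvature terms $\frac{v^T\nabla_\theta^2 V_i^\theta(\rho)v}{V_i^\theta(\rho)}$, I would combine $\|\nabla_\theta^2 V_i^\theta(\rho)\|\le M$ with the region property $V_i^\theta(\rho)\ge\alpha_i(t)/2$ to get the per-constraint bound $2M\eta/\alpha_i(t)$. For the rank-one terms $\frac{\langle\nabla_\theta V_i^\theta(\rho),v\rangle^2}{(V_i^\theta(\rho))^2}$, the same lower bound $V_i^\theta(\rho)\ge\alpha_i(t)/2$ supplies the factor $4$, and representing the directional derivative $\langle\nabla_\theta V_i^\theta(\rho),v\rangle$ along the segment by its endpoint value at $\theta_{t+1}$ yields precisely the first form of $M_t$.

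To pass to the second form, I would invoke the stepsize bound $\gamma_t\le\min_{i}\{\alpha_i(t)/(\sqrt{M\alpha_i(t)}+2|\beta_i(t)|)\}\,\|\hgb\|^{-1}$, which controls the step length $\|\theta_{t+1}-\theta_t\|=\gamma_t\|\hgb\|$. By $M$-smoothness of $V_i^\theta(\rho)$ the directional derivative changes by at most $D_i:=M\gamma_t\|\hgb\|$ across the segment, so $\langle\nabla_\theta V_i^{\theta_{t+1}}(\rho),v\rangle^2\le 2\beta_i(t)^2+2D_i^2$ via $(a+b)^2\le 2a^2+2b^2$. The stepsize bound forces $D_i\sqrt{M\alpha_i(t)}\le M\alpha_i(t)$, hence $D_i^2\le M\alpha_i(t)$; substituting produces an extra $8M\eta/\alpha_i(t)$ that merges with the earlier $2M\eta/\alpha_i(t)$ into $10M\eta/\alpha_i(t)$, while the rank-one term picks up the factor $2$ to become $8\eta\,\beta_i(t)^2/\alpha_i(t)^2$, matching the claimed expression with $\beta_i(t)=\langle\nabla_\theta V_i^{\theta_t}(\rho),\gb/\|\gb\|\rangle$.

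The main obstacle I anticipate is the bookkeeping in replacing the path-dependent quantity $\langle\nabla_\theta V_i^\theta(\rho),v\rangle$, which varies over the whole segment, by a single fixed value while keeping the constants tight. This is exactly where the region definition $\mathcal{R}(\theta_t)$ and the precise stepsize restriction must be used in concert, and where a naive operator-norm bound on the rank-one term (using $\|\nabla_\theta V_i^\theta(\rho)\|^2$ in place of its projection onto $v$) would be too loose to yield the directional constants appearing in the statement.
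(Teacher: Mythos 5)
Your proof is sound, but note that the paper does not actually prove this lemma: it is restated verbatim from \cite[Lemma 2]{usmanova2022log}, with no argument given, so your derivation is a genuine reconstruction rather than a parallel of anything in the paper's text. Your route --- differentiating \eqref{gradientlog} to get
\[
\nabla_\theta^2 B_\eta^\theta(\rho)=\nabla_\theta^2 V_0^\theta(\rho)+\eta\sum_{i=1}^m\left(\frac{\nabla_\theta^2 V_i^\theta(\rho)}{V_i^\theta(\rho)}-\frac{\nabla_\theta V_i^\theta(\rho)\left(\nabla_\theta V_i^\theta(\rho)\right)^T}{\left(V_i^\theta(\rho)\right)^2}\right),
\]
using $\|\nabla_\theta^2 V_i^\theta(\rho)\|\le M$ and the region property $V_i^\theta(\rho)\ge\alpha_i(t)/2$ to produce the terms $M$, $2M\eta/\alpha_i(t)$ and the factor $4$ on the rank-one terms, and then converting to the computable second form via the perturbation bound $|\langle\nabla_\theta V_i^\theta(\rho),v\rangle|\le|\beta_i(t)|+M\gamma_t\|\hgb\|$ combined with the stepsize restriction, which forces $\left(M\gamma_t\|\hgb\|\right)^2\le M\alpha_i(t)$ and hence $4\eta\left(2\beta_i(t)^2+2M\alpha_i(t)\right)/\alpha_i(t)^2 = 8\eta\beta_i(t)^2/\alpha_i(t)^2+8M\eta/\alpha_i(t)$, merging $2+8=10$ --- reproduces the stated constants exactly and is, in substance, the argument behind the cited result. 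One step deserves tightening: by the mean-value form of Taylor's theorem, the directional curvature along the step is evaluated at some intermediate point $\xi$ on the segment from $\theta_t$ to $\theta_{t+1}$, so the numerator in the first form is properly $\langle\nabla_\theta V_i^{\xi}(\rho),v\rangle^2$ (or a supremum over the segment), and your phrase about ``representing'' the path-dependent quantity by its endpoint value at $\theta_{t+1}$ is not a valid bound as written; this imprecision is inherited from the statement itself, and it is harmless because your perturbation bound holds uniformly at every point of the segment, so the second form --- the one the algorithm actually uses --- is rigorously established. What your approach buys is a self-contained, checkable derivation that exposes where the constants $2$, $4$, $8$, $10$ come from, and it also surfaces a point the paper's citation leaves unexamined, namely the mismatch between the direction $\gb/\|\gb\|$ appearing in the statement and the actual step direction $\hgb/\|\hgb\|$; what the paper's citation buys is only brevity.
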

Using above lemmas, we set $\gamma_t$ as
\begin{align}
    \gamma_t:=\min\left\{\min_{i\in[m]}\left\{\frac{\alpha_i(t)}{\sqrt{M\alpha_i(t)}+2|\beta_i(t)|}\right\}\frac{1}{\|\hgb\|},\frac{1}{M_t}\right\}.\label{gamma}
\end{align}
Therefore, we can ensure that the next iterate $\theta_{t+1}$ always remains within the region $\mathcal{R}(\theta_t)$, and prevent overshooting by utilizing the local smoothness property. Notice that we only have estimates of $\alpha_i(t)$ and $\beta_i(t)$, therefore we replace $\alpha_i(t)$ with its lower bound of the estimates as $\underline{\alpha}_i(t)$ and $\beta_i(t)$ with its upper bound of the estimates as $\overline{\beta}_i(t):=\left|\langle\hat{\nabla}_\theta \f,\frac{\hgb}{\|\hgb\|}\rangle\right|+\sigma^1(n)\sqrt{\ln{\frac{e^{\frac{1}{4}}}{\delta}}}+  b^1(H)$, $i\in [m]$. Because of the sub-Gaussian bound established in Proposition \ref{prosmo}, $\alpha_i(t)$ is lower bounded by $\underline{\alpha}_i(t)$ and $\beta_i(t)$ is upper by $\overline{\beta}_i(t)$ with high probability. Therefore, we choose the lower bound of \eqref{gamma} to set $\gamma_t$ as
\begin{align*}
    \gamma_t:=\min&\Biggl\{\min_{i\in[m]}\left\{\frac{\underline{\alpha}_i(t)}{\sqrt{M\underline{\alpha}_i(t)}+2|\overline{\beta}_i(t)|}\right\}\frac{1}{\|\hgb\|},\frac{1}{M +\sum_{i=1}^{m}\frac{10M\eta }{\underline{\alpha}_t^i}+8\eta \sum_{i=1}^{m}\frac{\left(\overline{\beta}_t^i\right)^2}{\left(\underline{\alpha}_t^i\right)^2}}\Biggr\}.
\end{align*}
\subsubsection{Proof of Proposition \ref{small}}
To prove Proposition \ref{small}, we establish the following more general lemma, which will be essential for the proof of Theorem \ref{main}.
\begin{lemma}\label{small1}
Define the events $\mathcal{A}$, $\mathcal{B}$, and $\mathcal{C}$ as follows:
\begin{align*}
    \mathcal{A}&:=\left\{\forall t\in[T],\min_{i\in[m]} V_i^{\theta_t}(\rho)\ge c\eta\right\},
    \mathcal{B}:=\left\{\forall t\in[T],\min_{i\in[m]} \gamma_t\ge C\eta\right\},
    \mathcal{C}:=\left\{\forall t\in[T], \left\|\Delta_t\right\|\ge\frac{\eta}{4} \right\},
\end{align*}
\normalsize
\noindent where constants $c$ and $C$ are defined in Equations \eqref{l1} and \eqref{l2}. Let Assumptions~\ref{smoli}, \ref{sl}, and \ref{emf} hold, and set $\eta \leq \nu_{emf}$, $n = \mathcal{O}\left(\frac{\ln\frac{1}{\delta}}{(1-\gamma)^{6+8m} \ell^{4m} \eta^4}\right)$ where {$0<\delta<1$} and {$H = \mathcal{O}\left(\log \frac{1}{(1-\gamma)^{3+4m} \ell^{2m} \eta^2}\right)$}, we have
$$\PP\left\{\mathcal{A}\cap \mathcal{B}\cap \mathcal{C}\right\}\ge 1-mT\delta.$$
\end{lemma}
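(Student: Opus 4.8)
The three events are tightly coupled, and the plan is to establish them \emph{simultaneously} by induction on $t$, with a single union bound absorbing all the failure probability. At each iteration the randomness enters only through the sampled trajectories, and by Proposition~\ref{prosmo} the value/gradient concentration bound for a given constraint $i$ fails with probability at most $\delta$; taking a union over the $m$ constraints and the $T$ iterations yields the stated budget $mT\delta$. I would define the ``good event up to step $t$'' as the intersection of all these concentration events for $s\le t$ and argue that, conditioned on it, $\mathcal{A}$, $\mathcal{B}$, and $\mathcal{C}$ hold deterministically. One reading is needed for $\mathcal{C}$: although it is printed as $\|\Delta_t\|\ge \eta/4$, the quantity $\Delta_t=\hat\nabla_\theta B_\eta^{\theta_t}(\rho)-\nabla_\theta B_\eta^{\theta_t}(\rho)$ is the gradient estimation error, and every downstream use (for instance ``$\|\nabla_\theta B_\eta^{\theta_t}(\rho)\|\ge \eta/4$ because $\|\Delta_t\|\le\eta/4$'' in the proof of Lemma~\ref{small2}) needs this error to be \emph{small}; moreover the prescribed $n=\mathcal{O}(\ln(1/\delta)/((1-\gamma)^{6+8m}\ell^{4m}\eta^4))$ is precisely the sample size Lemma~\ref{gapdelt} requires to force $\|\Delta_t\|\le\eta/4$ once $\min_i V_i^{\theta_t}(\rho)\ge c\eta$. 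I therefore read $\mathcal{C}$ as the accuracy event $\|\Delta_t\|\le\eta/4$ and establish it in that direction.

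For the base case $t=0$, Slater's condition (Assumption~\ref{sl}) gives $V_i^{\theta_0}(\rho)\ge \nu_s\ge \eta\ge c\eta$, using $\eta\le\nu_{emf}\le\nu_s$ and $c\le 1$, so $\mathcal{A}$ holds at $t=0$. For the inductive step I assume $\min_i V_i^{\theta_t}(\rho)\ge c\eta$ and derive the three claims in order. First, substituting this floor into Lemma~\ref{gapdelt} with the stated $n,H$ gives $\|\Delta_t\|\le\eta/4$ with probability at least $1-m\delta$, which is $\mathcal{C}$. Second, the high-probability confidence bounds from Proposition~\ref{prosmo}, namely $\underline{\alpha}_i(t)\ge \tfrac12 V_i^{\theta_t}(\rho)\ge \tfrac{c\eta}{2}$ and $\overline{\beta}_i(t)\le L+\mathcal{O}(\eta)$, fed into the explicit stepsize~\eqref{gammat}, bound both the smoothness term $1/\hat M_t$ and the valid-region term below by a multiple of $c\eta$, giving $\gamma_t\ge C\eta$, which is $\mathcal{B}$. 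Third, the valid-region term of~\eqref{gammat} is designed so that $V_i^{\theta_{t+1}}(\rho)\ge \tfrac12 V_i^{\theta_t}(\rho)$ (the restated lemma in Appendix~\ref{step}), so no constraint can collapse in one step; to stop the slow halving from ever reaching $c\eta$ I invoke Extended MFCQ (Assumption~\ref{emf}): whenever $i\in\mathbf{B}_{\nu_{emf}}(\theta_t)$ the direction $s_{\theta_t}$ obeys $\langle s_{\theta_t},\nabla V_i^{\theta_t}(\rho)\rangle>\ell$, so the dominant barrier term $\eta\nabla V_i^{\theta_t}(\rho)/V_i^{\theta_t}(\rho)$ contributes a component of size $\gtrsim \eta\ell/V_i^{\theta_t}(\rho)$ to $\langle s_{\theta_t},\nabla_\theta B_\eta^{\theta_t}(\rho)\rangle$, forcing the ascent step to push $V_i$ upward as soon as it nears the boundary. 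Balancing this inward push against the Lipschitz and smoothness slack accumulated over the $m$ constraints is what produces the explicit floor $c\eta$ with $c=(\ell(1-\gamma)^2/(4M_g(1+4m/3)))^m$ and closes the induction.

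The main obstacle is the circular dependence among the three events: $\mathcal{C}$ at step $t$ is only available once $\mathcal{A}$ holds at step $t$ (so that $n,H$ are calibrated to the correct distance $c\eta$ from the boundary), while $\mathcal{A}$ at step $t+1$ is only available once $\mathcal{B}$ and $\mathcal{C}$ hold at step $t$. The induction must fire these implications in the right order within a single step, and the union bound must be taken over the joint good event, not over the three events separately. The quantitatively delicate part is the third claim: converting the qualitative statement that the barrier gradient points inward into the explicit constant $c$ requires carefully accounting for the $m-1$ remaining constraints and the bounded objective gradient in $\langle s_{\theta_t},\nabla_\theta B_\eta^{\theta_t}(\rho)\rangle$, and checking that $\gamma_t$ is small enough that the favorable first-order motion survives the second-order smoothness error. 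I expect propagating these competing $\mathcal{O}(\eta\ell)$ and $\mathcal{O}(\eta)$ contributions through all $m$ constraints---which is exactly what generates the $m$-th power in $c$---to be the heaviest bookkeeping in the argument.
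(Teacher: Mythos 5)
Your high-level architecture matches the paper's proof: the ordering $\mathcal{A}$ at $t$ $\Rightarrow$ $\mathcal{C}$ at $t$ (via Lemma \ref{gapdelt} calibrated to the floor $c\eta$), the stepsize lower bound $\mathcal{B}$ from the confidence bounds $\underline{\alpha}_i(t)\ge \alpha_i(t)/2$, the per-step halving guarantee $V_i^{\theta_{t+1}}(\rho)\ge V_i^{\theta_t}(\rho)/2$, the union bound budget $mT\delta$, and your reading of $\mathcal{C}$ as the accuracy event $\|\Delta_t\|\le \eta/4$ (indeed a typo in the statement, consistent with how \eqref{bounddel2} is used downstream) are all exactly how the paper proceeds, with the combinatorial closing step delegated to \cite[Lemma 6]{usmanova2022log}.

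The genuine gap is in your mechanism for maintaining $\mathcal{A}$. You argue that extended MFCQ makes the barrier term contribute $\gtrsim \eta\ell/V_i^{\theta_t}(\rho)$ to $\langle s_{\theta_t},\nabla_\theta B_\eta^{\theta_t}(\rho)\rangle$, ``forcing the ascent step to push $V_i$ upward as soon as it nears the boundary.'' This implication fails when $m>1$: the iterate moves along $\hat\nabla_\theta B_\eta^{\theta_t}(\rho)$, not along the MFCQ direction $s_{\theta_t}$, and a large value of $\langle s_{\theta_t},\nabla_\theta B_\eta^{\theta_t}(\rho)\rangle$ gives no control on $\langle \nabla_\theta V_i^{\theta_t}(\rho),\hat\nabla_\theta B_\eta^{\theta_t}(\rho)\rangle$ for each individual near-boundary $i$; one near-boundary constraint can decrease while another increases, so no per-constraint monotonicity is provable. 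What the paper establishes instead (Lemma \ref{ee1}) is the weaker but sufficient statement that whenever $\min_{i\in[m]}\alpha_i(t)\le \frac{\ell\eta}{L(1+\frac{4m}{3})}$, the \emph{product} $\prod_{i\in\B}\alpha_i$ over any $\B\supset\B_\eta(\theta_t)$ is non-decreasing. There, MFCQ is used only as a certificate that the aggregated vector $D_1=\sum_{i\in\B_\eta(\theta_t)}\nabla_\theta V_i^{\theta_t}(\rho)/\alpha_i(t)$ is long, via $\|D_1\|\ge\langle D_1,s_{\theta_t}\rangle\ge\sum_{i\in\B_\eta(\theta_t)}\ell/\alpha_i(t)$, after which the algebraic identity $2\langle D_1,D_1+D_2\rangle-\|D_1+D_2\|^2=\|D_1\|^2-\|D_2\|^2$ together with the estimation-accuracy bound $\|D_2\|\le\|D_1\|$ yields that $\eta\sum_{i\in\B_\eta(\theta_t)}\log\alpha_i$ cannot decrease. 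The $m$-th power in $c=\left(\frac{\ell(1-\gamma)^2}{4M_g(1+\frac{4m}{3})}\right)^m$ then comes from redistributing a conserved product among $m$ factors each bounded above by $\frac{1}{1-\gamma}$, combined with the halving floor --- the argument of \cite[Lemma 6]{usmanova2022log} --- not from balancing an inward push against Lipschitz slack constraint by constraint. As written, your step (iii) is sound only for $m=1$, and repairing it essentially forces you to reproduce the product-monotonicity lemma.
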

Our approach to prove Lemma \ref{small1} is as follows: 1) First, we establish $\PP\left(\mathcal{A}\right)\ge 1-\delta$ by considering suitably small variances $\sigma^0(n)$, $\sigma^1(n)$, and biases $b^0(H)$ and $b^1(H)$; 2) Second, we guarantee the event $\mathcal{B}$ based on the construction of $\gamma_t$ in Section \ref{step}; 3) Third, the sub-Gaussian bounds in Proposition \ref{prosmo} enable us to establish $\PP\left(\mathcal{C}\right)\ge 1-\delta$, again by sufficiently bounding the variances $\sigma^0(n)$, $\sigma^1(n)$, and biases $b^0(H)$ and $b^1(H)$. By combining all of these results, we can determine the requirements for the variances $\sigma^0(n)$, $\sigma^1(n)$, and the biases $b^0(H)$ and $b^1(H)$ to satisfy 
\begin{align*}
    &\PP\left\{\mathcal{A}\cap \mathcal{B}\cap \mathcal{C}\right\}\ge 1-\delta.
\end{align*}
All of these factors can be controlled by $n$ and $H$.

To establish 1) above, we first show that the product of the values of $V_i^{\theta_t}(\rho)$, where $i\in \B_\eta(\theta_t)$, does not decrease in the next iteration, as shown in Lemma \ref{ee1}. Lemma \ref{ee1} implies that if one of these constraint values in $\B_\eta(\theta_t)$ decreases in the next step, then at least one of the other constraint values in $\B_\eta(\theta_t)$ will increase in the next step. Therefore, Lemma \ref{ee1} prevents the constraint values from continuously decreasing during the learning process. Furthermore, due to the chosen stepsize, each $V_i^{\theta_{t+1}}(\rho)$ for $i\in[m]$ cannot decrease significantly, as it is always lower bounded by $\frac{V_i^{\theta_{t}}(\rho)}{2}$. Therefore, we can establish 1) as demonstrated in \cite[Lemma 6]{usmanova2022log}.

\begin{lemma}\label{ee1}
 Let Assumptions \ref{smoli}, \ref{sl}, and \ref{emf} hold with $\eta \leq \nu_{emf}$, and set
  \begin{align*}
     &{\sigma}^0(n) \le \frac{\alpha_i(t)\min\left\{2\alpha_i(t),\eta\right\}}{8\eta\sqrt{\ln \frac{2}{\delta}}}, \,{\sigma}^1(n) \le \frac{L\alpha_i(t) }{3\eta\sqrt{\ln \frac{e^{\frac{1}{4}}}{\delta}}},\,b^0(H)\le \frac{\alpha_i(t)\min\left\{2\alpha_i(t),\eta\right\}}{8\eta}, \,b^1(H)\le \frac{L\alpha_i(t)}{3\eta}.
\end{align*}
 \normalsize
If at iteration $t$ we have $\min_{i\in[m]} \alpha_i(t) \leq \frac{\ell\eta}{L(1+\frac{4m}{3})}$, for the next iteration we have $\PP\left(\prod_{i\in\B} \alpha_i({t+1}) \geq \prod_{i\in\B} \alpha_i({t})\right)\allowbreak\ge 1-\delta$ for any $\B$ such that $\B_\eta(\theta_t)\subset \B$, where $\B_\eta(\theta_t):=\left\{i\in[m]\mid \alpha_i(t) \leq \eta \right\}$.
\end{lemma}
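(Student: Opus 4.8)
The plan is to pass to logarithms and prove that $\sum_{i\in\B}\big(\log V_i^{\theta_{t+1}}(\rho)-\log V_i^{\theta_t}(\rho)\big)\ge 0$ holds with probability at least $1-\delta$, which is exactly the claimed product inequality. The intuition I would exploit is that once $\min_{i\in[m]}V_i^{\theta_t}(\rho)\le \tfrac{\ell\eta}{L(1+4m/3)}$, the true barrier gradient $\nabla_\theta B_\eta^{\theta_t}(\rho)=\nabla_\theta V_0^{\theta_t}(\rho)+\eta\sum_{j=1}^{m}\nabla_\theta V_j^{\theta_t}(\rho)/V_j^{\theta_t}(\rho)$ is dominated by the barrier terms of the near-boundary constraints, and by the Extended MFCQ direction $s_{\theta_t}$ these point jointly away from the boundary, so one ascent step increases the near-boundary constraint values. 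Slater's condition and the inductive feasibility it supports guarantee $V_i^{\theta_t}(\rho)>0$, so all logarithms are well defined.

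First I would derive a per-constraint second-order lower bound. Applying the $M$-smoothness of Proposition \ref{prosmo} to $-V_i^{\theta}(\rho)$ (whose Hessian bound is two-sided) along the step $\theta_{t+1}-\theta_t=\gamma_t\hat\nabla_\theta B_\eta^{\theta_t}(\rho)$ gives $V_i^{\theta_{t+1}}(\rho)\ge V_i^{\theta_t}(\rho)+\gamma_t\langle\nabla_\theta V_i^{\theta_t}(\rho),\hat\nabla_\theta B_\eta^{\theta_t}(\rho)\rangle-\tfrac{M\gamma_t^2}{2}\|\hat\nabla_\theta B_\eta^{\theta_t}(\rho)\|^2$. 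Since the stepsize of Section \ref{step} guarantees $V_i^{\theta_{t+1}}(\rho)\ge\tfrac12 V_i^{\theta_t}(\rho)$, a second-order Taylor expansion of $\log$ with remainder evaluated on $[\tfrac12 V_i^{\theta_t}(\rho),\infty)$ yields $\log V_i^{\theta_{t+1}}(\rho)-\log V_i^{\theta_t}(\rho)\ge \frac{V_i^{\theta_{t+1}}(\rho)-V_i^{\theta_t}(\rho)}{V_i^{\theta_t}(\rho)}-\frac{2\left(V_i^{\theta_{t+1}}(\rho)-V_i^{\theta_t}(\rho)\right)^2}{\left(V_i^{\theta_t}(\rho)\right)^2}$. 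Summing over $i\in\B$ and bounding $|V_i^{\theta_{t+1}}(\rho)-V_i^{\theta_t}(\rho)|\le L\gamma_t\|\hat\nabla_\theta B_\eta^{\theta_t}(\rho)\|$ by $L$-Lipschitzness reduces the target to a leading term $\gamma_t\langle h,\hat\nabla_\theta B_\eta^{\theta_t}(\rho)\rangle$ with $h:=\sum_{i\in\B}\nabla_\theta V_i^{\theta_t}(\rho)/V_i^{\theta_t}(\rho)$, plus $\mathcal{O}(\gamma_t^2)$ remainders weighted by $\sum_{i\in\B}1/V_i^{\theta_t}(\rho)$ and $\sum_{i\in\B}1/V_i^{\theta_t}(\rho)^2$. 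The two factors defining $\gamma_t$ in Eq.\eqref{gammat} (the $1/\hat M_t$ cap and the $\underline\alpha_i(t)/(\sqrt{M\underline\alpha_i(t)}+2|\overline\beta_i(t)|)$ cap) are precisely what dominate these $\mathcal{O}(\gamma_t^2)$ remainders.

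The crux is lower-bounding the leading inner product, and this is where I expect the main difficulty. I would first pass to the true gradient via $\langle h,\hat\nabla_\theta B_\eta^{\theta_t}(\rho)\rangle\ge\langle h,\nabla_\theta B_\eta^{\theta_t}(\rho)\rangle-\|h\|\|\Delta_t\|$, then expand $\langle h,\nabla_\theta B_\eta^{\theta_t}(\rho)\rangle=\eta\|h\|^2+\langle h,\nabla_\theta V_0^{\theta_t}(\rho)\rangle+\eta\langle h,\sum_{j\notin\B}\nabla_\theta V_j^{\theta_t}(\rho)/V_j^{\theta_t}(\rho)\rangle$, whose only guaranteed-positive part is $\eta\|h\|^2$. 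The obstacle is that $h$ is a sum of potentially cancelling vectors, so I must show $\|h\|$ is large; the Extended MFCQ removes the cancellation. Using $\|h\|\ge\langle s_{\theta_t},h\rangle$ and the fact that $\eta\le\nu_{\text{emf}}$ forces $\B_\eta(\theta_t)\subseteq\B_{\nu_{\text{emf}}}(\theta_t)$, the indices $i\in\B_\eta(\theta_t)$ each contribute at least $\ell/V_i^{\theta_t}(\rho)$ to $\langle s_{\theta_t},h\rangle$, while the at most $|\B|-|\B_\eta(\theta_t)|$ remaining indices of $\B$ (all with $V_i^{\theta_t}(\rho)>\eta$) subtract at most $L/\eta$ each; together with $\min_{i}V_i^{\theta_t}(\rho)\le\tfrac{\ell\eta}{L(1+4m/3)}$ this gives $\|h\|\ge\tfrac{L}{\eta}\big(1+\tfrac{4m}{3}-|\B|+|\B_\eta(\theta_t)|\big)$. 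Feeding this into the expansion, $\langle h,\nabla_\theta B_\eta^{\theta_t}(\rho)\rangle\ge\eta\|h\|^2-(1+m-|\B|)L\|h\|$, where the subtracted term combines $\langle h,\nabla_\theta V_0^{\theta_t}(\rho)\rangle\ge-L\|h\|$ with the cross term $\ge-(m-|\B|)L\|h\|$. The constant $1+\tfrac{4m}{3}$ is tuned exactly so that $\eta\|h\|\ge(1+m-|\B|)L$ with a leftover slack of order $\tfrac m3+|\B_\eta(\theta_t)|\ge 1$, and this slack in turn absorbs the error $\|h\|\|\Delta_t\|$ once $\|\Delta_t\|\le\eta/4$. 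Carrying $|\B|$ and $|\B_\eta(\theta_t)|$ explicitly rather than bounding them crudely by $m$ is what makes the constant work.

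It remains to convert everything into the $1-\delta$ probabilistic statement. Proposition \ref{prosmo} controls the two error sources: the stated bounds on $\sigma^0(n),\sigma^1(n),b^0(H),b^1(H)$ simultaneously guarantee $\|\Delta_t\|\le\eta/4$ (via Lemma \ref{gapdelt}) and $\hat V_i^{\theta_t}(\rho)\ge\tfrac34 V_i^{\theta_t}(\rho)$, the latter justifying that the empirical confidence quantities $\underline\alpha_i(t),\overline\beta_i(t)$ entering $\gamma_t$ may be replaced by the true $\alpha_i(t),\beta_i(t)$ in the bounds above. A union bound over the finitely many concentration events collects probability $1-\delta$, and pushing the required thresholds through the explicit forms of $\sigma^0,\sigma^1,b^0,b^1$ in Proposition \ref{prosmo} yields the stated orders of $n$ and $H$.
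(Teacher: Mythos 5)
Your scaffolding is largely parallel to the paper's: passing to logarithms, expanding each $V_i^{\theta}(\rho)$ by $M$-smoothness together with a second-order Taylor bound on $\log$ (a valid substitute for the paper's local-smoothness lemma), and lower-bounding the signal via the Extended MFCQ direction with explicit $|\B|$, $|\B_\eta(\theta_t)|$ bookkeeping; your bound $\eta\|h\|\ge L\bigl(1+\tfrac{4m}{3}-|\B|+|\B_\eta(\theta_t)|\bigr)$ is correct and mirrors the paper's lower bound on $\|D_1\|$. The genuine gap is in your treatment of the stochastic error. You split it off by Cauchy--Schwarz as $-\|h\|\,\|\Delta_t\|$ and assert that the lemma's stated bounds on $\sigma^0(n),\sigma^1(n),b^0(H),b^1(H)$ give $\|\Delta_t\|\le\eta/4$ ``via Lemma \ref{gapdelt}.'' They do not. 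Those conditions only guarantee, with high probability, $\|\hgf-\gf\|\le\tfrac{2L\alpha_i(t)}{3\eta}$ and $\hat\alpha_i(t)\ge\tfrac34\alpha_i(t)$, so the best bound they yield is
\begin{align*}
\|\Delta_t\|\;\le\;\bigl\|\hat\nabla_\theta V_0^{\theta_t}(\rho)-\nabla_\theta V_0^{\theta_t}(\rho)\bigr\|+\eta\sum_{i=1}^m\left(\frac{\bigl\|\hgf-\gf\bigr\|}{\hat\alpha_i(t)}+L\left|\frac{1}{\hat\alpha_i(t)}-\frac{1}{\alpha_i(t)}\right|\right)\;\le\;\frac{2L}{3}+m\left(\frac{8L}{9}+\frac{2L}{3}\right),
\end{align*}
i.e.\ $\|\Delta_t\|=\mathcal{O}(mL)$, not $\mathcal{O}(\eta)$. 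Here $L=M_g/(1-\gamma)^2$ while $\eta\le\nu_{emf}\le 1/(1-\gamma)$ and is meant to be small, so order $L$ is far larger than $\eta/4$. The bound $\|\Delta_t\|\le\eta/4$ is only available under the much stronger conditions \eqref{vb} imposed in Lemma \ref{small1} (e.g.\ $\sigma^1(n)\sqrt{\ln(e^{1/4}/\delta)}\le \eta/\bigl(16(1+\sum_i\eta/\hat\alpha_i(t))\bigr)$), which are \emph{not} hypotheses of Lemma \ref{ee1}.

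This is not a repairable constant-chasing issue inside your decomposition: even inserting the bound that \emph{is} available, $\|\Delta_t\|\le\tfrac{2L}{3}+\tfrac{14mL}{9}$, your slack $L\bigl(\tfrac{m}{3}+|\B_\eta(\theta_t)|\bigr)\le\tfrac{4mL}{3}=\tfrac{12mL}{9}$ is always exceeded, for every $m\ge 1$. The structural reason the paper's (weak) hypotheses suffice for the paper but not for your route is the completing-the-square step: writing $\hgb=\eta(D_1+D_2)$ with $D_1=\sum_{i\in\B_\eta(\theta_t)}\gf/\alpha_i(t)$, the paper reduces the claim to $\|D_1\|\ge\|D_2\|$, so the \emph{entire} non-signal content --- the $V_0$ term, the far constraints, and the per-constraint estimation errors --- is allowed to be as large as the signal $\|D_1\|\ge L(1+\tfrac{4m}{3})/\eta$; each near-boundary constraint's estimation error ($\le\tfrac{4L}{3\eta}$ in those units) is matched one-for-one against the per-constraint margin of $\tfrac{4}{3}\cdot\tfrac{L}{\eta}$ built into the constant $1+\tfrac{4m}{3}$. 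Your Cauchy--Schwarz step instead charges the whole gradient error against only the leftover slack $L(\tfrac m3+|\B_\eta(\theta_t)|)\|h\|$, a budget smaller by a factor of order $\eta\|h\|/L$. To prove the lemma under its actual hypotheses you must keep the estimation errors inside the quadratic form (reproducing the paper's $\|D_1\|^2-\|D_2\|^2$ cancellation) rather than extract them as a norm product; relatedly, your claim that the two caps in \eqref{gammat} dominate the $\mathcal{O}(\gamma_t^2)$ remainders also needs the $\beta_i$-based (not $L$-based) bound on $|V_i^{\theta_{t+1}}(\rho)-V_i^{\theta_t}(\rho)|$, since $\hat M_t$ contains $\overline\beta_i(t)^2$, not $L^2$.
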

\begin{proof}[Proof of Lemma \ref{ee1}]
Due to the choice of stepsize, $\PP(\gamma_t\le \frac{1}{M_t})\ge 1-\delta$, where $M_t$ is the local smoothness constant of the log barrier function $B_\eta^\theta(\rho)$. We have
    \begin{align} 
    &\eta \sum_{i \in \B_\eta(\theta_t)}\log \alpha_i({t+1})-\eta \sum_{i \in \B_\eta(\theta_t)}\log \alpha_i(t)\nonumber\\
     \ge& \gamma_t\left\langle\eta \sum_{i \in \B_\eta(\theta_t)} \frac{\gf}{\alpha_i(t)},\hgb \right\rangle-\frac{M_t \gamma_t^2}{2} \left\|\hgb\right\|^2 \nonumber\\
     \ge& \gamma_t\left(\left\langle\eta \sum_{i \in \B_\eta(\theta_t)} \frac{\gf}{\alpha_i(t)}, \hgb\right\rangle-\frac{1}{2}\left\|\hgb\right\|^2\right)\nonumber \\ 
     =&\frac{\gamma_t \eta^2}{2}\left(2\langle D_1, D_1+D_2\rangle-\|D_1+D_2\|^2\right) \nonumber\\ 
     =&\frac{\gamma_t \eta^2}{2}\left(\|D_1\|^2-\|D_2\|^2\right),\label{boundalpha}
    \end{align}
 where $D_1:= \sum_{i \in \B_\eta(\theta_t)} \frac{\gf}{\alpha_i(t)}$ and $D_2:=\frac{\hgb}{\eta}-\sum_{i \in \B_\eta(\theta_t)} \frac{\gf}{\alpha_i(t)}$. Under Assumption ~\ref{emf}, we have
    \begin{align}
        \|D_1\|&=\left\| \sum_{i \in \B_\eta(\theta_t)} \frac{\gf}{\alpha_i(t)}\right\|\ge \langle \sum_{i \in \B_\eta(\theta_t)} \frac{\gf}{\alpha_i(t)},s_\theta\rangle\ge  \sum_{i \in \B_\eta(\theta_t)}\frac{\ell}{\alpha_i(t)}\ge \frac{L(1+\frac{4m}{3})}{\eta},\label{A}
    \end{align}
    where we use $\min_{i\in[m]} \alpha_i(t) \leq \frac{\ell\eta}{L(1+\frac{4m}{3})}$ in the last step. For each $i\in[m]$, since $\sigma^0(n) \leq \frac{\alpha_i(t)}{8\sqrt{\ln\frac{2}{\delta}}}$ and $b^0(H) \leq \frac{\alpha_i(t)}{8}$, we have $\mathbb{P}\left(\frac{3\alpha_i(t)}{4} \leq \hat \alpha_i(t)\right) \geq 1-\delta$ using the sub-Gaussian bound in Proposition \ref{prosmo}. Therefore, $\mathbb{P}\left(\frac{3\eta}{4} \leq \hat \alpha_i(t)\right) \geq 1-\delta$ for $i \notin\B_\eta(\theta_t)$. Then, we can upper bound $\|D_2\|$ with probability at least $1-\delta$ as follows:
 \begin{align}
    &\|D_2\|\\
    =&\Biggl\|\frac{\hat \nabla_\theta V_0^{\theta_t}(\rho)}{\eta}+\sum_{i \notin\B_\eta(\theta_t)} \frac{\hgf}{\hat{\alpha}_i(t)}+\sum_{i \in \B_\eta(\theta_t)} \frac{\hgf}{\hat{\alpha}_i(t)}-\sum_{i \in \B_\eta(\theta_t)} \frac{\gf}{\alpha_i(t)}\Biggr\| \label{i1}\\
     \leq & \frac{L}{\eta}\left(1+\frac{4}{3}\left(m-\left|\B_\eta(\theta_t)\right|\right)\right)+\Biggl\|\sum_{i \in \B_\eta(\theta_t)}\biggl( \frac{\hgf}{\hat{\alpha}_i(t)}-\frac{\hgf}{{\alpha}_i(t)}+\frac{\hgf}{{\alpha}_i(t)}-\frac{\gf}{\alpha_i(t)}\biggr)\Biggr\|\label{i2}\\
     \leq & \sum_{i \in \B_\eta(\theta_t)}\Biggl(\frac{\left\|\hgf-\gf\right\|}{\alpha_i(t)}+\left|\frac{1}{\hat{\alpha}_i(t)}-\frac{1}{\alpha_i(t)}\right| \left\|\hgf \right\|\Biggr)+\frac{L}{\eta}\left(1+\frac{4}{3}\left(m-\left|\B_\eta(\theta_t)\right|\right)\right)\label{i3}\\
     \leq & \sum_{i \in \B_\eta(\theta_t)}\left(\frac{{\sigma}^1(n) \sqrt{\ln \frac{e^{\frac{1}{4}}}{\delta}}+ b^1(H)}{\alpha_i(t)}+L\frac{{\sigma}^0(n) \sqrt{\ln \frac{2}{\delta}}+ b^0(H)}{\hat{\alpha}_i(t)\alpha_i(t)}\right) + \frac{L}{\eta}\left(1+\frac{4}{3}\left(m-\left|\B_\eta(\theta_t)\right|\right)\right)\label{i4}\\
    \leq & \sum_{i \in \B_\eta(\theta_t)}\left(\frac{{\sigma}^1(n) \sqrt{\ln \frac{e^{\frac{1}{4}}}{\delta}}+ b^1(H)}{\alpha_i(t)}+4L\frac{{\sigma}^0(n) \sqrt{\ln \frac{2}{\delta}}+ b^0(H)}{3(\alpha_i(t))^2}\right)+\frac{L}{\eta}\left(1+\frac{4}{3}\left(m-\left|\B_\eta(\theta_t)\right|\right)\right).\label{i5}
    \end{align}     
    \normalsize
From \eqref{i1} to \eqref{i2}, we use $\frac{3\eta}{4} \leq \hat \alpha_i(t)$ and $\|\hat \nabla_\theta V_0^{\theta_t}(\rho)\|\le L$ by \cite[Proposition 4.2]{Xu2020Sample}. From \eqref{i3} to  \eqref{i4}, we use the sub-Gaussian bound in Proposition \ref{prosmo} and $\|\hat \nabla_\theta V_i^{\theta_t}(\rho)\|\le L$ for $i\in[m]$. From  \eqref{i4} to  \eqref{i5}, we use $\frac{3\alpha_i(t)}{4} \leq \hat \alpha_i(t)$. Further, if we set the variances and biases in \eqref{i5} as
\begin{align*}
     &{\sigma}^0(n) \le \frac{(\alpha_i(t))^2}{4\eta\sqrt{\ln \frac{2}{\delta}}},\,{\sigma}^1(n) \le \frac{L\alpha_i(t) }{3\eta\sqrt{\ln \frac{e^{\frac{1}{4}}}{\delta}}},\, b^0(H)\le \frac{(\alpha_i(t))^2}{4\eta},\,b^1(H)\le \frac{L\alpha_i(t)}{3\eta},
\end{align*}
then we can have $\PP\left(\|D_2\|\le \frac{L(1+\frac{4m}{3})}{\eta}\right)\ge 1-\delta$. Combining this property with ~\eqref{A}, we have $\PP\left(\|D_2\|\le \|D_1\|\right)\ge 1-\delta$. Taking this relation into  ~\eqref{boundalpha}, we have 
\begin{align*}
    \PP\left(\prod_{i\in\B_\eta(\theta_t)} \alpha_i({t+1}) \geq \prod_{i\in\B_\eta(\theta_t)} \alpha_i({t})\right)\ge 1-\delta.
\end{align*}
Same result if we replace $\B_\eta(\theta_t)$ with any $\B$ such that $\B_\eta(\theta_t) \subset \B$.
\end{proof}
With the above lemma in place, we are ready to prove Lemma \ref{small1}. 
\begin{proof}[Proof of Lemma \ref{small1}]
First, we need to choose $\eta \leq \nu_{emf}$ and set
\begin{align*}
     &{\sigma}^0(n) \le \frac{\alpha_i(t)\min\left\{2\alpha_i(t),\eta\right\}}{8\eta\sqrt{\ln \frac{2}{\delta}}}, \,{\sigma}^1(n) \le \frac{L\alpha_i(t) }{3\eta\sqrt{\ln \frac{e^{\frac{1}{4}}}{\delta}}},\,b^0(H)\le \frac{\alpha_i(t)\min\left\{2\alpha_i(t),\eta\right\}}{8\eta}, \,b^1(H)\le \frac{L\alpha_i(t)}{3\eta}
\end{align*}
to satisfy the conditions in Lemma \ref{ee1}. Then, we can combine the result from Lemma \ref{ee1} with the result from \cite[Lemma 6]{usmanova2022log}, and we have 
\begin{align}
    &\mathbb{P}\biggl\{\forall t \in [T], \min_{i\in [m]}\alpha_i(t)\geq c\eta,\,\min_{i\in [m]}\hat \alpha_i(t)\geq \frac{3}{8}c\eta\text{ and }\min_{i\in [m]}\underline\alpha_i(t)\geq \frac{c\eta}{2} \biggr\} \geq 1-mT\delta,\, c=\left(\frac{\ell(1-\gamma)^2}{4M_g(1+\frac{4m}{3})}\right)^m.\label{l1}
\end{align}
\normalsize
Based on the lower bound of $\hat \alpha_i(t)$, we can further bound $\gamma_t$ by \cite[Lemma 6]{usmanova2022log} as
\begin{align}
    \gamma_t \ge C\eta ,\,C:=\frac{c}{2L^2(1+\frac{m}{c})\max\left\{4+\frac{5Mc}{L^2},1+\sqrt{\frac{Mc}{4L^2}}\right\}}.\label{l2}
\end{align}
Further, if we set 
\begin{align}
     &{\sigma}^0(n) \le \min\left\{ \frac{\alpha_i(t)\min\left\{2\alpha_i(t),\eta\right\}}{8\eta\sqrt{\ln \frac{2}{\delta}}},\frac{1}{\left(\sum_{i=1}^{m}\frac{ 16L}{\alpha_i(t)\hat \alpha_i(t)}\right)\sqrt{\ln \frac{2}{\delta}}}\right\},\\
     &{\sigma}^1(n) \le\min\left\{ \frac{L\alpha_i(t) }{3\eta\sqrt{\ln \frac{e^{\frac{1}{4}}}{\delta}}},\frac{\eta}{16\left(1+\sum_{i=1}^{m}\frac{\eta}{\hat\alpha_i(t)}\right)\sqrt{\ln \frac{e^{\frac{1}{4}}}{\delta}}}\right\},\nonumber\\
     &b^0(H)\le \min\left\{ \frac{\alpha_i(t)\min\left\{2\alpha_i(t),\eta\right\}}{8\eta},\frac{1}{16\left(\sum_{i=1}^{m}\frac{ L}{\alpha_i(t)\hat \alpha_i(t)}\right)}\right\},b^1(H)\le\min\left\{ \frac{L\alpha_i(t) }{3\eta},\frac{\eta}{16\left(1+\sum_{i=1}^{m}\frac{\eta}{\hat\alpha_i(t)}\right)}\right\}.\label{vb}
\end{align}
By Lemma \ref{gapdelt}, we have
\begin{align}
    \PP\left(\|\Delta_t\|\ge \frac{\eta}{4}\right),  \label{bounddel2}
\end{align}
Combing the results from \eqref{l1} and \eqref{l2} with inequality \eqref{bounddel2}, we have 
\begin{align*}
    &\PP\left\{\forall t\in[T],\,\min_{i\in[m]} V_i^{\theta_t}(\rho)\ge c\eta ,\,\gamma_t \ge C\eta\text{ and }\left\|\Delta_t\right\|\ge\frac{\eta}{4}\right\}
    \ge 1-mT\delta.
\end{align*}
Based on the above result regarding the lower bound on $\alpha_i(t)$ and $\hat\alpha_i(t)$, we can further set the variances and biases in \eqref{vb} as follows:
\begin{align}
     &{\sigma}^0(n) \le \min\left\{ \frac{c\eta\min\left\{4c,1\right\}}{4\sqrt{\ln \frac{2}{\delta}}},\frac{3c^2\eta^2}{32L\sqrt{\ln \frac{2}{\delta}}}\right\},{\sigma}^1(n) \le\min\left\{ \frac{2cL }{3\sqrt{\ln \frac{e^{\frac{1}{4}}}{\delta}}},\,\frac{\eta}{16\left(1+\frac{4m}{3c}\right)\sqrt{\ln \frac{e^{\frac{1}{4}}}{\delta}}}\right\},\nonumber\\
     &b^0(H) \le\min\left\{ \frac{c\eta\min\left\{4c,1\right\}}{4},\frac{3c^2\eta^2}{32L}\right\},b^1(H)\le\min\left\{ \frac{2cL }{3},\frac{\eta}{16\left(1+\frac{4m}{3c}\right)}\right\}.\nonumber
\end{align}
According to Proposition \ref{prosmo}, the number of trajectories $n$ and the truncated horizon $H$ need to be set as follows:
\begin{align}
n:=\max&\biggl\{\frac{2048L^2{\ln \frac{2}{\delta}}}{9(1-\gamma)^2c^4\eta^4},\frac{32{\ln \frac{2}{\delta}}}{c^2(1-\gamma)^2\eta^2\min\left\{16c^2,1\right\}},\frac{2048(1+\frac{4m}{3c})^2 M_g^2\ln\frac{e^{\frac{1}{4}}}{\delta}}{\eta^2(1-\gamma)^3}\frac{18 M_g^2\ln\frac{e^{\frac{1}{4}}}{\delta}}{c^2L^2(1-\gamma)^3}\biggr\},\nonumber\\
H:=\max&\biggl\{\log_\gamma\left(\frac{3(1-\gamma)c^2\eta^2}{32L}\right),\log_\gamma\left(\frac{c(1-\gamma)\eta\min\left\{4c,1\right\}}{4}\right),\mathcal{O}\left(\log_\gamma \frac{(1-\gamma)\eta}{16(1+\frac{4m}{3c})M_g}\right),\\
&\mathcal{O}\left(\log_\gamma \frac{2(1-\gamma)cL}{3M_g}\right)\biggr\},\tag{N-H}\label{nh}
\end{align}
where the first condition for $H$ is to satisfy $b^0(H) \le\min\left\{ \frac{c\min\left\{4c,1\right\}}{4},\frac{3c^2}{32L}\right\}$ and the second condition for $H$ is to satisfy $b^1(H)\le\min\left\{ \frac{2cL }{3},\frac{\eta}{16\left(1+\frac{4m}{3c}\right)}\right\}$. Therefore, we need to set the number of trajectories $n$ of the order $\mathcal{O}\left(\frac{\ln\frac{1}{\delta}}{(1-\gamma)^{6+8m} \ell^{4m} \eta^4}\right)$ and the truncated horizon $H$ of the order $\mathcal{O}\left(\ln \frac{1}{ \ell\eta}\right)$. 
\end{proof}

\subsection{Proof of Lemma~\ref{gdm}}
\label{proofgdm}
The proof of Lemma \ref{gdm} is based on the performance difference lemma, provided below for completeness.
\begin{theorem}[The performance difference lemma \cite{sutton1999policy}] \label{3}$\forall \theta,\theta'\in \R^d$, $\forall i \in\{0,\dots,m\}$, we have
\begin{align}
V^\theta_i(\rho)-V^{\theta'}_i(\rho)=\frac{1}{1-\gamma}\mathbb{E}_{(s,a)\sim d_\rho^\theta}\left[A_i^{{\theta'}}(s,a)\right].\nonumber
\end{align}
\end{theorem}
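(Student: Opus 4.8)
The plan is to prove the identity by introducing the auxiliary quantity $\mathbb{E}_{\tau\sim\pi_\theta}\left[\sum_{t=0}^\infty \gamma^t A_i^{\theta'}(s_t,a_t)\right]$, where the trajectory $\tau$ is generated under $\pi_\theta$ starting from $s_0\sim\rho$, and showing that it equals the left-hand side $V^\theta_i(\rho)-V^{\theta'}_i(\rho)$ directly, while also being expressible through the visitation distribution $d_\rho^\theta$ to recover the factor $\frac{1}{1-\gamma}$ on the right.

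First I would expand the advantage function using its definition $A_i^{\theta'}(s,a)=Q_i^{\theta'}(s,a)-V_i^{\theta'}(s)$ together with the Bellman relation $Q_i^{\theta'}(s,a)=r_i(s,a)+\gamma\,\mathbb{E}_{s'\sim P(\cdot|s,a)}[V_i^{\theta'}(s')]$. Since conditional on $(s_t,a_t)$ the next state $s_{t+1}$ is distributed as $P(\cdot\mid s_t,a_t)$, inside the trajectory expectation this yields $A_i^{\theta'}(s_t,a_t)=r_i(s_t,a_t)+\gamma V_i^{\theta'}(s_{t+1})-V_i^{\theta'}(s_t)$. Substituting into the discounted sum and splitting it into two pieces, the reward terms reassemble into $\mathbb{E}_{\tau\sim\pi_\theta}\left[\sum_{t=0}^\infty \gamma^t r_i(s_t,a_t)\right]=V^\theta_i(\rho)$, while the value terms form a telescoping series $\sum_{t=0}^\infty\left(\gamma^{t+1}V_i^{\theta'}(s_{t+1})-\gamma^t V_i^{\theta'}(s_t)\right)=-V_i^{\theta'}(s_0)$, whose expectation under $s_0\sim\rho$ is $-V^{\theta'}_i(\rho)$. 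This establishes $\mathbb{E}_{\tau\sim\pi_\theta}\left[\sum_{t=0}^\infty \gamma^t A_i^{\theta'}(s_t,a_t)\right]=V^\theta_i(\rho)-V^{\theta'}_i(\rho)$.

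Finally I would rewrite the same auxiliary quantity in terms of the discounted state-action visitation distribution $d_\rho^{\theta}(s,a):=(1-\gamma)\sum_{t=0}^\infty\gamma^t P(s_t=s,a_t=a)$ defined earlier in the excerpt. Interchanging the infinite sum with the expectation and regrouping by state-action pairs gives $\sum_{t=0}^\infty\gamma^t\,\mathbb{E}[A_i^{\theta'}(s_t,a_t)]=\frac{1}{1-\gamma}\mathbb{E}_{(s,a)\sim d_\rho^{\theta}}[A_i^{\theta'}(s,a)]$, which is exactly the claimed right-hand side. The only technical points requiring justification, and the step I expect to be the main obstacle, are the validity of the telescoping cancellation and the interchange of the infinite sum with the expectation. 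Both are licensed by dominated convergence: since $r_i$ takes values in $[-1,1]$, the value function $V_i^{\theta'}$ is uniformly bounded by $\frac{1}{1-\gamma}$, so $\gamma^{t+1}V_i^{\theta'}(s_{t+1})\to 0$ as $t\to\infty$ and the series converges absolutely, making each rearrangement rigorous.
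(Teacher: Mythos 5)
Your proof is correct. Note that the paper does not prove this statement at all: it imports the performance difference lemma as a known result via the citation, so there is no in-paper argument to compare against. Your proposal is the standard proof from the literature (e.g., Kakade--Langford): expand $A_i^{\theta'}(s_t,a_t)$ through the Bellman equation, telescope the value terms to $-V_i^{\theta'}(s_0)$, and fold the discounted sum into $d_\rho^\theta$ to produce the $\frac{1}{1-\gamma}$ factor. The two technical points you flag are handled properly: since $r_i$ takes values in $[-1,1]$, both $V_i^{\theta'}$ and $A_i^{\theta'}$ are uniformly bounded (by $\frac{1}{1-\gamma}$ and $\frac{2}{1-\gamma}$ respectively), which justifies $\gamma^{t+1}V_i^{\theta'}(s_{t+1})\to 0$ in the telescoping step and licenses the sum--expectation interchange by dominated convergence; and you correctly note that the identity $A_i^{\theta'}(s_t,a_t)=r_i(s_t,a_t)+\gamma V_i^{\theta'}(s_{t+1})-V_i^{\theta'}(s_t)$ holds in conditional expectation given $(s_t,a_t)$, via the tower property, which is all the computation requires.
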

\begin{proof}[Proof of Lemma \ref{gdm}]
We can derive the following equality by using the performance difference lemma,
\begin{align}
  &V_{0}^{\pi^*}\left(\rho\right)-V_{0}^{\theta}\left(\rho\right)+\eta\sum_{i=1}^{m}\left(\frac{V_{i}^{\pi^*}(\rho)-V_{i}^{\theta}\left(\rho\right)}{ V_{i}^{\theta}\left(\rho\right)}\right)= \frac{1}{1-\gamma}\mathbb{E}_{(s,a)\sim d_\rho^{\pi^*}}\left[A_0^{{\theta}}(s,a)+\eta \sum_{i=1}^{m}\frac{A_i^{{\theta}}(s,a)}{ V_{i}^{\theta}\left(\rho\right)}\right].\label{s1}
\end{align}
Applying Jensen's inequality to Assumption \ref{bae}, we obtain $\forall i \in\{0,\dots,m\}$,
\begin{align}
  &\mathbb{E}_{(s,a)\sim d_\rho^{\pi^*}}\left[{A}_i^\theta(s,a)-(1-\gamma){\mu_i^*}^T \nabla_\theta\log\pi_\theta(a|s)\right]
  \le\sqrt{\varepsilon_{bias}}.\label{s2}
\end{align}
Plugging inequality \eqref{s2} into \eqref{s1}, we get
\begin{align}
    &V_{0}^{\pi^*}\left(\rho\right)-V_{0}^{\theta}\left(\rho\right)+\eta\sum_{i=1}^{m}\left(\frac{V_{i}^{\pi^*}(\rho)-V_{i}^{\theta}\left(\rho\right)}{ V_{i}^{\theta}\left(\rho\right)}\right)\nonumber\\
    \le&\mathbb{E}_{(s,a)\sim d_\rho^{\pi^*}}\left[\left({\mu_0^*}+\sum_{i=1}^{m}\frac{\eta \mu_i^*}{ V_{i}^{\theta}\left(\rho\right)}\right)^T \nabla_\theta\log\pi_{\theta}(a|s)\right]+\frac{\sqrt{\varepsilon_{bias}}}{1-\gamma}\left(\sum_{i=1}^{m}\frac{\eta}{ V_{i}^{\theta}\left(\rho\right)}+1\right)\nonumber\\
    =& \mathbb{E}_{(s,a)\sim d_\rho^{\pi^{*}}}\left[\left(\nabla_\theta B_\eta^{\theta}(\rho)\right)^T \left(F^{\theta}(\rho)\right)^{\dagger} \nabla_\theta\log\pi_{\theta}(a|s)\right]+\frac{\sqrt{\varepsilon_{bias}}}{1-\gamma}\left(1+\sum_{i=1}^{m}\frac{\eta}{V_{i}^{\theta}\left(\rho\right)}\right)\nonumber\\
    =&\mathbb{E}_{(s,a)\sim d_\rho^{\pi^{*}}}\left[\left(\mathbf{P}_{\mathbf{Ker}(F^\theta(\rho))}\nabla_\theta B_\eta^{\theta}(\rho)+\mathbf{P}_{\mathbf{Im}(F^\theta(\rho))}\nabla_\theta B_\eta^{\theta}(\rho)\right)^T \left(F^{\theta}(\rho)\right)^{\dagger} \nabla_\theta\log\pi_{\theta}(a|s)\right]+\frac{\sqrt{\varepsilon_{bias}}}{1-\gamma}\left(1+\sum_{i=1}^{m}\frac{\eta}{V_{i}^{\theta}\left(\rho\right)}\right)\nonumber\\
    =&\mathbb{E}_{(s,a)\sim d_\rho^{\pi^{*}}}\left[\left(\mathbf{P}_{\mathbf{Im}(F^\theta(\rho))}\nabla_\theta B_\eta^{\theta}(\rho)\right)^T \left(F^{\theta}(\rho)\right)^{\dagger} \nabla_\theta\log\pi_{\theta}(a|s)\right]+\frac{\sqrt{\varepsilon_{bias}}}{1-\gamma}\left(1+\sum_{i=1}^{m}\frac{\eta}{V_{i}^{\theta}\left(\rho\right)}\right)\nonumber\\
    \le &\frac{M_h}{\mu_{F}}\left\| \mathbf{P}_{\mathbf{Im}(F^\theta(\rho))}\nabla_\theta B_\eta^{\theta}(\rho)\right\|+\frac{\sqrt{\varepsilon_{bias}}}{1-\gamma}\left(1+\sum_{i=1}^{m}\frac{\eta}{V_{i}^{\theta}\left(\rho\right)}\right),
\end{align}
where we use Assumption \ref{fn} to bound $\left\| \left(\mathbf{P}_{\mathbf{Im}(F^\theta(\rho))}\nabla_\theta B_\eta^{\theta}(\rho)\right)^T \left(F^{\theta}(\rho)\right)^{\dagger}\right\|$ by ${\left\| \mathbf{P}_{\mathbf{Im}(F^\theta(\rho))}\nabla_\theta B_\eta^{\theta}(\rho)\right\|}/{\mu_F}$ and Assumption \ref{smoli} to bound $\left\|\nabla\log\pi_{\theta}(a|s)\right\|$ by $M_h$ in the last step.
Rearranging the above inequality, we have
\begin{align}
    V_{0}^{\pi^*}\left(\rho\right)-V_{0}^{\theta}\left(\rho\right)
    \le& m\eta+ \frac{\sqrt{\varepsilon_{bias}}}{1-\gamma}\left(1+\sum_{i=1}^{m}\frac{\eta}{ V_{i}^{\theta}\left(\rho\right)}\right)+\frac{M_h}{\mu_{F}}\left\| \mathbf{P}_{\mathbf{Im}(F^\theta(\rho))}\nabla_\theta B_\eta^{\theta}(\rho)\right\|\nonumber\\
    \le& m\eta+ \frac{\sqrt{\varepsilon_{bias}}}{1-\gamma}\left(1+\sum_{i=1}^{m}\frac{\eta}{ V_{i}^{\theta}\left(\rho\right)}\right)+\frac{M_h}{\mu_F}\left\| \nabla_\theta B_\eta^{\theta}(\rho)\right\|.\nonumber
\end{align}
\end{proof}
\subsection{Proof of Lemma \ref{stationl}}
\label{station}
In this section, we prove that the stationary points of the log barrier function is at most $\Omega(\nu_{emf}+\eta)$ close to the boundary.
\begin{proof}[Proof of Lemma \ref{stationl}]
Since $\theta_{\text{st}}$ is the stationary point, we have
\begin{align}
    \nabla_\theta B_\eta^{\theta_{\text{st}}}(\rho)&=\nabla_\theta V_0^{\theta_{\text{st}}}(\rho)+\eta\sum_{i=1}^{m}\frac{\nabla_\theta V_i^{\theta_{\text{st}}}(\rho)}{ V_i^{\theta_{\text{st}}}(\rho)}=0. \nonumber
\end{align}
Rearranging the terms in the above equation, we obtain
\begin{align}
     &\sum_{i\notin \B_{\nu_{emf}}(\theta_{\text{st}})}\frac{\nabla_\theta V_i^{\theta_{\text{st}}}(\rho)}{ V_i^{\theta_{\text{st}}}(\rho)}+\sum_{i\in \B_{\nu_{emf}}(\theta_{\text{st}})}\frac{\nabla_\theta V_i^{\theta_{\text{st}}}(\rho)}{ V_i^{\theta_{\text{st}}}(\rho)}=\frac{-\nabla_\theta V_0^{\theta_{\text{st}}}(\rho)}{\eta}.\label{gap2}
\end{align}
If $\B_{\nu_{emf}}(\theta_{\text{st}})$ is an empty set, then
$$\min_{i\in[m]}\left\{V_i^{\theta_{\text{st}}}(\rho)\right\}\ge {\nu_{emf}}.$$
Otherwise, by Assumption \ref{emf}, there exists a unit vector $s_{\theta_{\text{st}}}\in\R^d$ such that for $i\in \B_{\nu_{emf}}(\theta_{\text{st}})$, we have
$$\langle s_{\theta_{\text{st}}}, \nabla_\theta V_i^{\theta_{\text{st}}}(\rho)\rangle >\ell.$$
Taking the dot product of both sides of equation \eqref{gap2} with $s_{\theta_{\text{st}}}$ and using Lipschitz continuity, we obtain
\begin{align*}
    \frac{\ell}{\min_{i\in[m]}\left\{V_i^{\theta_{\text{st}}}(\rho)\right\}}
    \le &\frac{\left\langle s_{\theta_{\text{st}}}, \nabla_\theta V_i^{\theta_{\text{st}}}(\rho)\right\rangle}{\min_{i\in[m]}\left\{ V_i^{\theta_{\text{st}}}(\rho)\right\}}\sum_{i\in \B_{\nu_{emf}}(\theta_{\text{st}})}\frac{\min_{i\in[m]}\left\{V_i^{\theta_{\text{st}}}(\rho)\right\}}{V_i^{\theta_{\text{st}}}(\rho)}\\
    =&\frac{\left\langle s_{\theta_{\text{st}}},-\nabla_\theta V_0^{\theta_{\text{st}}}(\rho)\right\rangle}{\eta}-\sum_{i\notin \B_{\nu_{emf}}(\theta_{\text{st}})}\frac{\left\langle s_{\theta_{\text{st}}}, \nabla_\theta V_i^{\theta_{\text{st}}}(\rho)\right\rangle}{V_i^{\theta_{\text{st}}}(\rho)} 
    \ge \frac{mL}{\min\{\eta,\nu_{emf}\}},\nonumber
\end{align*}
Therefore, 
$$\min_{i\in[m]}\left\{ V_i^{\theta_{\text{st}}}(\rho)\right\}\ge \frac{\min\{\eta,\nu_{emf}\} \ell}{mL}.$$
In conclusion, we have $\min_{i\in[m]}\left\{ V_i^{\theta_{\text{st}}}(\rho)\right\}\ge \min\left\{\frac{{\nu_{emf}},\min\{\eta,\nu_{emf}\} \ell}{mL}\right\}$.
\end{proof}
\subsection{Proof of Theorem \ref{main}}\label{proofmain}
\begin{proof}[Proof of Theorem \ref{main}:] We set the values for $n$, $H$, and $\eta$ to satisfy the conditions outlined in Proposition \ref{small}. Due to our choice of stepsize, we have {$\PP\left(\gamma_t \leq \frac{1}{M_{t}}\right) \geq 1-\delta$} as showed in Proposition \ref{small}, where {$M_{t}$} represents the local smoothness constant of the log barrier function {$B_\eta^\theta(\rho)$}. With this, we can bound {$B_\eta^{\theta_{t+1}}(\rho) - B_\eta^{\theta_t}(\rho)$} with probability at least $1-\delta$ as
    \begin{align}
        B_\eta^{\theta_{t+1}}(\rho) - B_\eta^{\theta_t}(\rho)
        \geq& \gamma_t \left\langle \nabla_\theta B_\eta^{\theta_t}(\rho),\hat \nabla_\theta B_\eta^{\theta_t}(\rho)\right\rangle - \frac{M_{t} \gamma_t^2}{2}\left\|\hat \nabla_\theta B_\eta^{\theta_t}(\rho)\right\|^2\nonumber\\
        \geq& \gamma_t \left\langle \nabla_\theta B_\eta^{\theta_t}(\rho),\hat \nabla_\theta B_\eta^{\theta_t}(\rho)\right\rangle - \frac{\gamma_t}{2}\left\|\hat \nabla_\theta B_\eta^{\theta_t}(\rho)\right\|^2\nonumber\\
        =& \gamma_t \left\langle \nabla_\theta B_\eta^{\theta_t}(\rho),\Delta_t + \nabla_\theta B_\eta^{\theta_t}(\rho)\right\rangle - \frac{\gamma_t}{2}\left\|\Delta_t + \nabla_\theta B_\eta^{\theta_t}(\rho)\right\|^2\nonumber\\
        =&\frac{\gamma_t}{2}\left\| \nabla_\theta B_\eta^{\theta_t}(\rho)\right\|^2-\frac{\gamma_t}{2}\left\|\Delta_t\right\|^2.\label{ss_1}
    \end{align}
    We divide the analysis into two cases based on the \textbf{if condition} in algorithm \ref{alg:cap} line 5.
    
    \textbf{Case 1:} If {$\|\hat \nabla_\theta B_\eta^{\theta_t}(\rho)\|\ge \frac{\eta}{2}$}, then {$\|\nabla_\theta B_\eta^{\theta_t}(\rho)\|\ge \frac{\eta}{4}$} since $\|\Delta_t\|\le \frac{\eta}{4}$ by Proposition \ref{small}. We can further write \eqref{ss_1} as
    \begin{align}
    B_\eta^{\theta_{t+1}}(\rho) - B_\eta^{\theta_t}(\rho) &\ge \frac{C\eta^2}{8}\left\|\nabla_\theta B_\eta^{\theta_t}(\rho)\right\| - \frac{C\eta^3}{32},\label{case1:1}
    \end{align}
    where we plug in $\gamma_t\ge C\eta$ in the last step. Since $\min_{i\in[m]} V_i^{\theta_t}(\rho)\geq c\eta$, we can rewrite Lemma \ref{gdm} as:
    \begin{align}
        V_{0}^{\pi^*}(\rho) - V_{0}^{\theta_t}(\rho)
        \le a + \frac{M_h}{\mu_F}\left\| \nabla_\theta B_\eta^{\theta_t}(\rho)\right\|,\label{gmdp}
     \end{align}
    where $a = m\eta + \frac{\sqrt{\varepsilon_{bias}}}{1-\gamma}\left(1+\frac{m}{c}\right)$. Plugging \eqref{gmdp} into \eqref{case1:1}, we get
    \begin{align}
    B_\eta^{\theta_{t+1}}(\rho) - B_\eta^{\theta_t}(\rho) 
    \ge \frac{C\mu_F\eta^2}{8M_h}\left(V_{0}^{\pi^*}(\rho) - V_{0}^{\theta_t}(\rho)\right) - \frac{C\eta^3}{32} - \frac{aC\mu_F\eta^2 }{8M_h},\nonumber
    \end{align}
    which can be further simplified to:
    \begin{align}
    &V_{0}^{\pi^*}(\rho) - V_{0}^{\theta_{t+1}}(\rho)
    \le \left(1-\frac{C\mu_F\eta^2}{8M_h}\right)\left(V_{0}^{\pi^*} (\rho)- V_{0}^{\theta_t}(\rho)\right) + \frac{C\eta^3}{32}  + \frac{aC\mu_F\eta^2 }{8M_h} + \eta \sum_{i=1}^{m}\log\frac{V_i^{\theta_{t+1}}(\rho)}{V_i^{\theta_{t}}(\rho)}.\nonumber
    \end{align}
    \normalsize
    By recursively applying the above inequality and setting $\frac{C\mu_F\eta^2}{8M_h}<1$, we obtain 
        \begin{align}
        &V_{0}^{\pi^*}(\rho)-V_{0}^{\theta_{t+1}}(\rho)\nonumber\\
        \le&\left(1- \frac{C\mu_F\eta^2}{8M_h}\right)^{t+1}\left(V_{0}^{\pi^*}(\rho)-V_{0}^{\theta_{0}}(\rho)\right)
        +\left(\frac{aC\mu_F\eta^2}{8M_h}+\frac{C\eta^3}{32}\right)\sum_{i=0}^{t}\left(1- \frac{C\mu_F\eta^2}{8M_h}\right)^i\nonumber\\
        & +\eta\sum_{i=1}^{m}\log V_i^{\theta_{t+1}}(\rho)-\eta\left(1- \frac{C\mu_F\eta^2}{8M_h}\right)^{t}\sum_{i=1}^{m}\log V_i^{\theta_{0}}(\rho) -\frac{C\mu_F\eta^3}{8M_h}\sum_{i=1}^{m}\sum_{j=1}^{t}\left(1- \frac{C\mu_F\eta^2}{8M_h}\right)^{t-j} \log{V_i^{\theta_j}(\rho)} \nonumber\\
        \le& \left(1- \frac{C\mu_F\eta^2}{8M_h}\right)^{t+1}\left(V_{0}^{\pi^*}(\rho)-V_{0}^{\theta_{0}}(\rho)\right)+a+\frac{M_h\eta}{4\mu_F}+m\eta\log \frac{1}{1-\gamma} \nonumber\\
        &-m\eta\left(1- \frac{C\mu_F\eta^2}{8M_h}\right)^{t}\log \nu_s-\frac{C\mu_F\eta^3}{8M_h}\sum_{i=1}^{m}\sum_{j=1}^{t}\left(1- \frac{C\mu_F\eta^2}{8M_h}\right)^{t-j} \log(c\eta)\nonumber\\
        \le& \left(1- \frac{C\mu_F\eta^2}{8M_h}\right)^{t+1}\left(V_{0}^{\pi^*}(\rho)-V_{0}^{\theta_{0}}(\rho)\right)+a+\frac{M_h\eta}{4\mu_F}+m\eta\log\frac{1}{c\eta(1-\gamma)}-m\eta\left(1- \frac{C\mu_F\eta^2}{8M_h}\right)^{t}\log \nu_s.\label{m1}
    \end{align}
    \textbf{Case 2:} If {$ \|\hat \nabla_\theta B_\eta^{\theta_t}(\rho)\|\le \frac{\eta}{2}$}, we have {$\| \nabla_\theta B_\eta^{\theta_t}(\rho)\|\le  \|\hat \nabla_\theta B_\eta^{\theta_t}(\rho)\|+\|\Delta_t\|\le \frac{3\eta}{4}$}. Applying \eqref{gmdp}, we have
    \begin{align}
        &V_{0}^{\pi^*} (\rho)- V_{0}^{\theta_t}(\rho)
        \le \frac{3M_h\eta}{4\mu_F}+a.
    \label{m2}
    \end{align}
    Combining the inequalities \eqref{m1} and \eqref{m2}, we conclude that after $T$ iterations of the Algorithm \ref{alg:cap}, the output policy $\pi_{\theta_{\text{out}}}$ satisfies
            \begin{align*}
                &V_0^{\pi^*}(\rho) - V_0^{ \theta_{\text{out}}}(\rho)\\
                \le& \left(1- \frac{C\mu_F\eta^2}{8M_h}\right)^{T}\left(V_{0}^{\pi^*} (\rho)- V_{0}^{\theta_{0}}(\rho)\right) + \frac{\sqrt{\varepsilon_{bias}}}{1-\gamma}\left(1+\frac{m}{c}\right)\\
                &+m\eta\left(\frac{3M_h}{4\mu_F m}+1+\log\frac{1}{c\eta(1-\gamma)}-\left(1- \frac{C\mu_F\eta^2}{8M_h}\right)^{T-1}\log \nu_s\right)\\
                =& \mathcal{O}\left(\exp{\left(-CT\mu_F\eta^2\right)}\right)\left(V_{0}^{\pi^*} (\rho)- V_{0}^{\theta_{0}}(\rho)\right) +\mathcal{O}\left(\frac{\sqrt{\varepsilon_{\text{bias}}}}{(1-\gamma)^{2m+1}\ell^m}\right)  + {\mathcal{O}}\left(\frac{\eta}{\mu_F}\right)+\mathcal{O}(\eta\ln\frac{1}{\eta})\nonumber
            \end{align*}
     \normalsize
    with a probability of at least $1-mT\delta$.
\end{proof}

\subsection{Proof of Corollary \ref{cor1}}\label{proofcor1}
\begin{proof}[Proof of Corollary \ref{cor1}]
    By analyzing the inequalities, namely \eqref{m1} and \eqref{m2}, provided in Section \ref{proofmain}, we can prove Corollary \ref{cor1} as follows: setting $n = \mathcal{O}\left(\frac{\ln\frac{1}{\delta}}{(1-\gamma)^{6+8m} \ell^{4m} \varepsilon^4}\right)$ and {$H = \mathcal{O}\left(\log \frac{1}{\ell \varepsilon}\right)$}, $\eta = \varepsilon$ and $T = \mathcal{O}\left(\frac{\log\frac{1}{\varepsilon}}{\mu_F\varepsilon^2 \ell^{2m} (1-\gamma)^{4m+4}}\right)$. After $T$ iterations of the LB-SGD Algorithm, the output $\theta_{\text{out}}$ satisfies
\begin{align}
    V_0^{\pi^*}(\rho) - V_0^{\theta_{\text{out}}}(\rho) \le {\mathcal{O}}\left(\varepsilon\log\frac{1}{\varepsilon}\right) +\mathcal{O}\left( \frac{\varepsilon}{\mu_F}\right) + \mathcal{O}\left(\frac{\sqrt{\varepsilon_{\text{bias}}}}{(1-\gamma)^{2m+1}\ell^m}\right),\label{fila}
\end{align}
while safe exploration is ensured with a probability of at least $1-mT\delta$. To ensure satisfaction of \eqref{fila} and maintain safe exploration with a probability of at least $1-\beta$, we need set \(\delta = \frac{\beta}{mT}=\mathcal{O}\left({\beta}\varepsilon^{2}\right)\) and sample size \({\mathcal{O}}\left(\frac{\log^3\frac{1}{\varepsilon}\log\frac{1}{\ell}}{\mu_F(1-\gamma)^{10+12m} \ell^{6m} \varepsilon^6}\right)\). 
\end{proof}
\section{Experiment}
\label{expri}
We conducted experiments \footnote{All the experiments in this subsection were carried out on a MacBook Pro with an Apple M1 Pro chip and 32 GB of RAM. Our code is developed based on \cite[https://github.com/andrschl/cirl]{schlaginhaufen2023identifiability}.} in a $6 \times 6$ gridworld environment introduced by \cite{sutton2018reinforcement} (see Figure \ref{policy}). We aim to reach the rewarded cell while controlling the time spent visiting the red rectangles under a certain threshold. We define the CMDP as follows: the environment involves four actions: \textit{up}, \textit{right}, \textit{down}, and \textit{left}. The agent moves in the specified direction with a $0.9$ probability and randomly selects another direction with a $0.1$ probability after taking an action. The constraints are defined as follows: If the agent hits the second-row or the fourth-row red rectangles, the reward functions $r_1(s,a)$ and $r_2(s,a)$ receive $-10$ respectively.
Once the agent reaches a rewarded cell, it remains there indefinitely, receiving a reward of $1$ per iteration. We set the discount factor $\gamma$ to $0.7$. We define the CMDP we solve as follows:
\[
\max_{\pi_\theta} V_{0}^{\pi_\theta}(\rho) \quad \text{subject to} \quad V_{i}^{\pi_\theta}(\rho)\ge -2, \quad i \in [2],
\]
utilizing the softmax policy parameterization.
\begin{figure}[H]
\begin{center}
      \includegraphics[scale = 0.17]{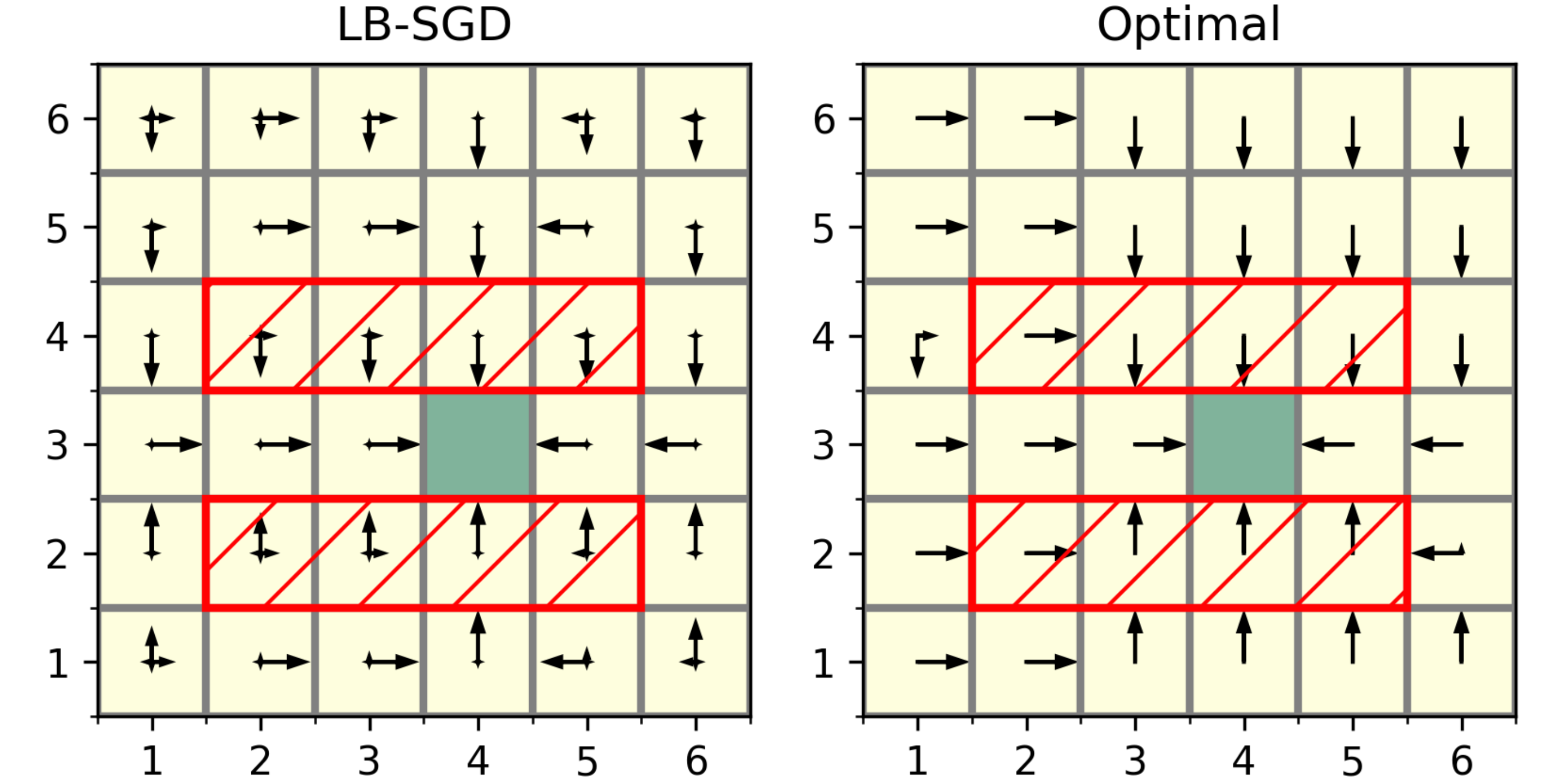}
    \caption{Gridworld Environment: The green block denotes the reward, the arrows represent the policies, and the two red-hatched rectangles indicate the constrained states.}
    \label{policy}
\end{center}
\end{figure}

The optimal policy, depicted as arrows in Figure \ref{policy}, is computed using linear programming in occupancy-measure space. Notice that the agent needs to learn the optimal policy within the regions highlighted by red rectangles, as there is always a small probability of ending up there. 

 Since our paper emphasizes the feasibility of iterates rather than bounding average constraint violations, we compare with two typical CMDP learning algorithms: (1) the IPO algorithm \cite{liu2020ipo}, which combines the log barrier method with a policy gradient approach and fixed stepsize, and (2) the RPG-PD algorithm \cite{ding2024last}, a primal-dual method that guarantees the feasibility of the last iterate, using an entropy-regularized policy gradient and quadratic-regularized gradient ascent for the dual variable.

\subsection{Hyperparameter Tuning}
\label{para}
In this section, we outline the parameter choices for the three algorithms: LB-SGD, IPO, and RPG-PD.

For the LB-SGD algorithm, the Lipschitz constants \( M_g \) and the smoothness parameter \( M_h \) of the function \(\log \pi_\theta(a|s)\) are required to compute the smoothness of the value functions \(V_i^\theta(\rho)\), which are critical for determining stepsizes. For direct, softmax, or log-linear parameterizations, these values can be computed directly. For other parameterizations, they can be estimated from sampled trajectories (see Appendix \ref{estsm}). Although \( M_g \) and \( M_h \) are both 1 for the softmax parameterization used in our experiment, we validated the effectiveness of the estimation approach described in Appendix \ref{estsm}. To implement the LB-SGD algorithm, we further need to choose the parameter \( \eta \), which needs to satisfy \( \eta \le \nu_{\text{emf}} \) according to Theorem \ref{mainn}. However, calculating \( \nu_{\text{emf}} \) is generally infeasible. In practice, we do not need its exact value to select \( \eta \). If \( \eta \ge \nu_{\text{emf}} \), the iterates remain within a distance of \( \Omega(\nu_{\text{emf}}) \) from the boundary rather than \( \Omega(\eta) \). Therefore, LB-SGD ensures safe exploration for any \( \eta \), though smaller \( \eta \) values bring the iterates closer to the boundary, expanding the search area for policy optimization. A smaller \( \eta \) leads to a final policy closer to the optimal one, but this requires more samples to ensure accurate estimators, especially near the boundary. In our case, we selected \( \eta = 0.01 \) to keep the output policy close to optimal. After fixing $\eta$, we need to choose the number of trajectories per iteration for calculating the stepsize and the log barrier gradient. In Figure \ref{lblearning}, we plot the gradient estimation error for the log barrier function and the stepsize returned by the LB-SGD algorithm, using different sample sizes at different points that vary the distance to the boundary. As shown in Figure \ref{lblearning}, when the point is closer to the boundary, our algorithm requires a higher number of samples per iteration to obtain accurate estimates of stepsizes and log barrier gradients. Inadequate sampling leads to relatively smaller estimates of stepsizes with higher variance in both gradient and stepsize estimations. Therefore, we used 3,000 trajectories per iteration to accurately estimate the gradient and stepsize.
\begin{figure}[t]
            \centering
            \makebox[\textwidth][c]{%
                \resizebox{1.2\textwidth}{!}{%
                    \includegraphics{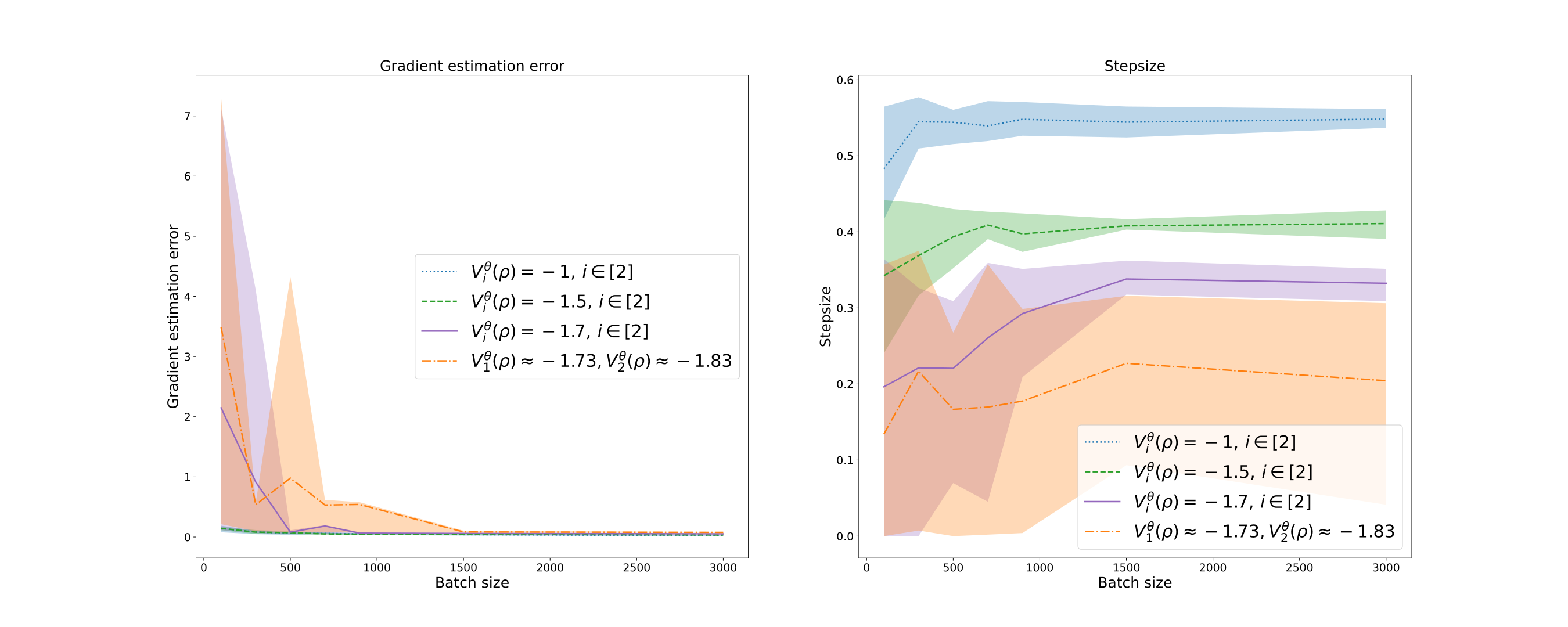}
                }
            }
                \caption{The gradient estimation error for the log barrier function and the value of stepsize for LB-SGD algorithm with sample sizes of 100, 300, 500, 700, 900, 1500, and 3000 at varying distances from the boundary. In the figure above, the lines indicate the median values obtained from 10 independent experiments, while the shaded areas represent the 10\% and 90\% percentiles calculated from 10 different random seeds.}
                \label{lblearning}
\end{figure}

Since the IPO algorithm also uses the log barrier method combined with a policy gradient approach, similar to LB-SGD, we chose the same value for \( \eta \) and used 3,000 trajectories per iteration to estimate the gradient. The main difference lies in IPO's manually chosen fixed stepsize. To ensure safe exploration and avoid unsafe behavior from larger stepsizes, we selected stepsizes of 1.5, 1, and 0.5.

The RPG-PD algorithm requires tuning three parameters: the regularization parameter \( \tau \), the stepsize, and \( \varepsilon_0 \) for projecting the policy onto the probability simplex. We selected the stepsize based on \cite[Theorems 2]{ding2024last}, which provides an optimal choice of \( \eta \), determined by \( \tau \), \( \varepsilon_0 \), and the Slater parameter \( \nu_s \) which can be computed since we are given a safe initial starting point. To ensure the safety of the last iterate, RPG-PD learns from more conservative constraints, \( V_i^\pi(\rho) \ge b-2 \) with \( b > 0 \). We fine-tuned the parameters $\tau$, $\varepsilon_0$ and $b$ using exact information and selected \( \tau = \varepsilon_0= b = 0.1 \). We plotted the performance of RPG-PD with these parameters in Figure \ref{smp}, labeled as learning curve RPG-PD\_exact. For the stochastic version of RPG-PD, we used the same parameters as the exact version and employed 800 trajectories to estimate gradient information.

\subsection{Comparison and conclusion}
In this section, we analyze the performance of the three algorithms—LB-SGD, IPO, and RPG-PD, using parameters as described in Section \ref{para}, and evaluated with stochastic information.

In terms of ensuring safe exploration, all three algorithms successfully achieve this, as shown in Figure \ref{smp}. However, for the IPO and RPG-PD algorithms, stepsize tuning is crucial to avoid constraint violations during learning, as larger stepsizes can lead to unsafe behavior. In contrast, the LB-SGD algorithm, which adapts its stepsizes based on sample estimates, eliminates the need for manual tuning.

Regarding convergence guarantees, LB-SGD proves to be more sample-efficient in finding the optimal policy compared to RPG-PD. By iteration 4000, the RPG-PD algorithm's iterate remains approximately 0.5 away from the boundary, whereas LB-SGD’s iterate is only about 0.02 away. This result is consistent with theoretical results: RPG-PD requires \( \mathcal{O}(\varepsilon^{-14}) \) samples to find an \( \mathcal{O}(\varepsilon) \)-optimal policy, while LB-SGD requires only \(\Tilde{ \mathcal{O}}(\varepsilon^{-6}) \) samples. However, the IPO algorithm, with well-tuned stepsizes, converges faster and achieves closer proximity to the optimal reward value compared to LB-SGD. This is due to LB-SGD’s more conservative choice of stepsizes to ensure safe exploration. Specifically, the variation in constraint 2 values is smaller with LB-SGD than with IPO, particularly when the initial point is near the boundary.

Our experiment confirmed that LB-SGD achieves safe exploration while converging to the optimal policy efficiently. Meanwhile, in Figure \ref{policy}, we depict the policy obtained from our algorithm. Although it bears a resemblance to the optimal policy, it is less deterministic to circumvent the red rectangles. As expected, ensuring these guarantees necessitates a higher number of samples per iteration near the boundary for accurate estimates (see Figure \ref{lblearning}). It would be interesting to determine whether this sample complexity is inherent to our algorithm and its analysis or if it is a consequence of the safe exploration requirement.

\subsection{Discussion}
Due to approximation errors, all algorithms—including IPO, RPG-PD, and LB-SGD—may occasionally take a bad step and produce an infeasible iterate \( \pi_{\theta_t} \). While sometimes the gradient update remains feasible, allowing the algorithm to recover from its bad step automatically, in other cases, a recovery method becomes necessary. In our experiments, we implement recovery by reverting to previous iterates and increasing the sample complexity while simultaneously decreasing the stepsize at previous iterates. This ensures the safe exploration of subsequent iterates with high probability.

\begin{figure}[t]
            \centering
            \makebox[\textwidth][c]{%
                \resizebox{1.2\textwidth}{!}{%
                    \includegraphics{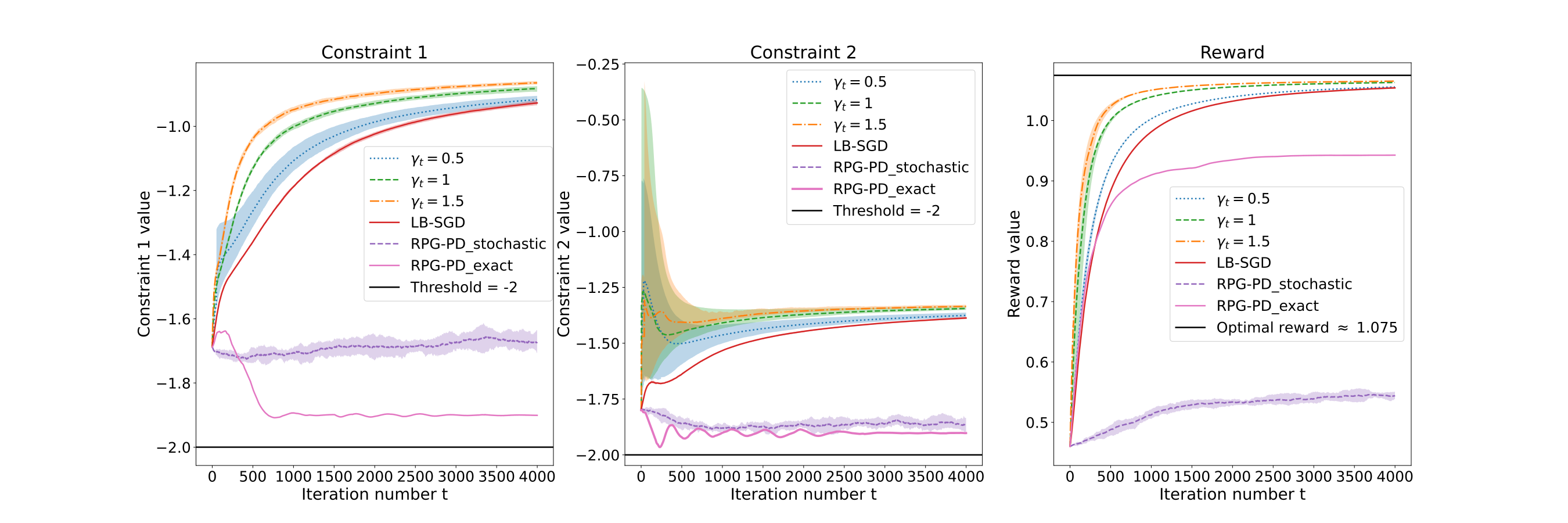}
                }
            }
            \caption{The average performance comparison between the IPO algorithm \cite{liu2020ipo} with stepsizes \( \gamma_t = 0.5, 1, 1.5 \), RPG-PD using a regularization parameter \( \tau = 0.1 \) and stepsize chosen according to \cite[Theorem 2]{ding2024last} with \( b = 0.1 \), and LB-SGD is shown below. The lines represent the median values from 10 independent experiments, while the shaded areas illustrate the 10\% and 90\% percentiles calculated from 10 different random seeds.}
          \label{smp}
\end{figure}
\section{Estimation of smoothness parameter}
\label{estsm}
In \cite[Proof of Lemma 4.4]{yuan2022general}, the second order of the value functions $\nabla^2 V_i^\theta(\rho)$ is computed as
\begin{align*}
    &\nabla^2 V_i^\theta(\rho)\\  =&\mathbb{E}_{\tau\sim\pi_\theta}\Biggl[\sum_{t=0}^{\infty}\gamma^t r_i(s_t,a_t)\Biggl(\left(\sum_{k=0}^{t}\nabla_\theta^2\log \pi_\theta(a_k|s_k)\right)+\left(\sum_{k=0}^{t}\nabla_\theta\log \pi_\theta(a_k|s_k)\right)\left(\sum_{k=0}^{t}\nabla_\theta\log \pi_\theta(a_k|s_k)\right)^T\Biggr)\Biggr].
\end{align*}
\normalsize
Using the information of $n$ truncated trajectories, with a fixed horizon $H$, denoted as $\tau_j := \left(s_t^j, a_t^j, \left\{r_{i}(s_t^j, a_t^j)\right\}_{i=0}^{m}\right)_{t=0}^{H-1}$, we can estimate $\nabla^2 V_i^\theta(\rho)$ by the Monte-Carlo method as
\begin{align*}
    &\hat \nabla^2 V_i^\theta(\rho)\\
    =&\frac{1}{n}\sum_{j=1}^{n}\Biggl[\sum_{t=0}^{H}\gamma^t r_i(s_t^j,a_t^j)\Biggl(\left(\sum_{k=0}^{t}\nabla_\theta^2\log \pi_\theta(a_k^j|s_k^j)\right)+\left(\sum_{k=0}^{t}\nabla_\theta\log \pi_\theta(a_k^j|s_k^j)\right)\left(\sum_{k=0}^{t}\nabla_\theta\log \pi_\theta(a_k^j|s_k^j)\right)^T\Biggr)\Biggr].
\end{align*}
\normalsize
We estimate the smoothness parameter $M_i$ for the value function $V_i^\theta(\rho)$ as
\begin{align*}
    M_i= \|\hat \nabla^2 V_i^\theta(\rho)\|.
\end{align*}

\end{document}